\def\eqref#1{equation~\ref{#1}}
\def\1{\bm{1}}
\DeclareMathAlphabet{\mathsfit}{\encodingdefault}{\sfdefault}{m}{sl}
\SetMathAlphabet{\mathsfit}{bold}{\encodingdefault}{\sfdefault}{bx}{n}
\theoremstyle{plain}
\newtheorem{theorem}{Theorem}[section]
\newtheorem{lemma}[theorem]{Lemma}
\newtheorem{corollary}[theorem]{Corollary}
\theoremstyle{definition}
\newtheorem{definition}[theorem]{Definition}
\newtheorem{assumption}[theorem]{Assumption}
\theoremstyle{remark}
\newtheorem{remark}[theorem]{Remark}
\renewcommand{\eqref}[1]{(\ref{#1})}
\newcommand{\cX}{\mathcal{X}}
\newcommand{\cA}{\mathcal{A}}
\newcommand{\cS}{\mathcal{S}}
\newcommand{\cL}{\mathcal{L}}
\newcommand{\cM}{\mathcal{M}}
\def\cA{{\mathcal{A}}}   
   \def\cL{{\mathcal{L}}}
\def\cM{{\mathcal{M}}}   
 \def\cR{{\mathcal{R}}} \def\cS{{\mathcal{S}}} 
   \def\cX{{\mathcal{X}}}
 \def\bg{{\mathbf{g}}}   
  \def\bm{{\mathbf{m}}}  
   \def\bx{{\mathbf{x}}} \def\by{{\mathbf{y}}}
 \def\bh{\mathbf{h}}
\newcommand{\bef}{\begin{figure}}
\newcommand{\eef}{\end{figure}}
\newcommand{\beq}{\begin{eqnarray}}
\newcommand{\eeq}{\end{eqnarray}}
\newcommand{\PFRL}{\textsc{PFedRL}\xspace}
\newcommand{\FedRL}{\textsc{PFedRL-Rep}\xspace}
\newcommand{\FedTD}{\textsc{PFedTD-Rep}\xspace}
\title{On the Linear Speedup of Personalized Federated Reinforcement Learning with Shared Representations}
\author{Guojun~Xiong$^1$\thanks{This work was done when G. Xiong was a PhD student at Stony Brook University.}, ~Shufan~Wang$^2$, Daniel~Jiang$^3$, Jian~Li$^2$ \\
  ${}^1$Harvard University, ${}^2$Stony Brook University, ${}^3$Meta
  }
\begin{document}

\maketitle

\begin{abstract}

Federated reinforcement learning (FedRL) enables multiple agents to collaboratively learn a policy without needing to share the local trajectories collected during agent-environment interactions.
However, in practice, the environments faced by different agents are often heterogeneous, but since existing FedRL algorithms learn a single policy across all agents, this may lead to poor performance. In this paper, we introduce a \emph{personalized} FedRL framework (\PFRL) by taking advantage of possibly shared common structure among agents in heterogeneous environments. Specifically, we develop a class of \PFRL algorithms named \FedRL that learns (1) a shared feature representation collaboratively among all agents, and (2) an agent-specific weight vector personalized to its local environment. We analyze the convergence of \FedTD, a particular instance of the framework with temporal difference (TD) learning and linear representations. To the best of our knowledge, we are the first to prove a linear convergence speedup with respect to the number of agents in the \PFRL setting. To achieve this, we show that \FedTD is an example of federated two-timescale stochastic approximation with Markovian noise.  Experimental results demonstrate that \FedTD, along with an extension to the control setting based on deep Q-networks (DQN), not only improve learning in heterogeneous settings, but also provide better generalization to new environments.

\end{abstract}

 \section{Introduction}\label{intro}

Federated reinforcement learning (FedRL) \citep{nadiger2019federated,liu2019lifelong,xu2021multiagent,zhang2022multi,jin2022federated,khodadadian2022federated,yuan2023federated,salgia2024sample,woo2024federated,zhengfederated,lan2024asynchronous} has recently emerged as a promising framework that blends the distributed nature of federated learning (FL) \citep{mcmahan2017communication} with reinforcement learning's (RL) ability to make sequential decisions over time \citep{sutton2018reinforcement}. In FedRL, multiple agents collaboratively learn \textit{a single policy} without sharing individual trajectories that are collected during agent-environment interactions, protecting each agent's privacy.

One key challenge facing FedRL is \emph{environment heterogeneity}, where the collected trajectories may vary to a large extent across agents.  To illustrate, consider a few existing applications of FL: on-device NLP applications (e.g., next word prediction, sentence completion, web query suggestions, and speech recognition) from Internet companies \citep{hard2018federated,yang2018applied,wang2023now}, on-device recommender or ad prediction systems \citep{maeng2022towards,krichene2023private}, and Internet of Things applications like smart healthcare or smart thermostats \citep{nguyen2021federated,imteaj2022federated,zhang2022federated,boubouh2023efficacy}. Note that \emph{all of the above} (1) exist in settings with environment heterogeneity (heterogeneous users, devices, patients, or homes) and (2) could potentially benefit from an RL problem formulation.

\begin{table}[t]
\caption{Comparison of existing FedRL frameworks in terms of noise; environments (Homo: homogeneous, Hetero: heterogeneous); using representation learning (Rep.L) or not; timescale (TS), single or two-TS (two) updates; multiple local updates or not; personalization across agents or not; and with or without linear convergence speedup guarantee.}
\centering
\scalebox{0.7}{
\begin{tabular}{|c|c|c|c|c|c|c|c|}
 \hline
  \textbf{Method} & \textbf{Noise } & \textbf{Env.} &\textbf{Rep.L}  & \textbf{TS} &\begin{tabular}{@{}l@{}}\textbf{Local updates}\end{tabular}& \textbf{Personalized}&\begin{tabular}{@{}l@{}}\textbf{Linear speedup}\end{tabular}\\ 
 \hline\hline
FedTD \& FedQ \citep{khodadadian2022federated} & \textit{Markov} & \textit{Homo.} &\ding{55} & {\textit{Single}} & \ding{51}& \ding{55}&\ding{51}\\
FedTD
\citep{dal2023federated} & \textit{Markov} & \textit{Homo.}& \ding{55} & {\textit{Single}} & \ding{55}& \ding{55}&\ding{51}
\\
FedTD
\citep{wang2023federated} 
& {\textit{Markov}}& \textit{Hetero.} & \ding{55} & {\textit{Single}}&\ding{51}& \ding{55}&\ding{51}\\
QAvg \& PAvg
\citep{jin2022federated} & \textit{i.i.d.}& \textit{Hetero.} & \ding{55} & {\textit{Single}} & \ding{55}& \ding{55}&\ding{55}\\
FedQ
\citep{woo2023the} & \textit{Markov}& \textit{Hetero.} & \ding{55} & {\textit{Single}} & \ding{51}& \ding{55}&\ding{51}\\\
A3C
\citep{shen2023towards} & \textit{Markov}& \textit{Homo.} & \ding{55} & {\textit{Two}} & \ding{55}& \ding{55}&\ding{51}\\
FedSARSA
\citep{zhang2024finite} & \textit{Markov}& \textit{Hetero.} & \ding{55} & {\textit{Single}} & \ding{51}& \ding{55}&\ding{51}\\
\hline\hline
 ~{\textbf{\FedRL}} & \textbf{\textit{Markov}}& \textbf{\textit{Hetero.}} & \ding{51} & \textbf{\textit{Two}} & \ding{51}& 
 \ding{51}&\ding{51}\\
\hline  
\end{tabular}
}
\label{tal:sotas}
\end{table}

As a result, if all agents collaboratively learn a single policy, which most existing FedRL frameworks do, the learned policy might perform poorly on individual agents. This calls for the design of a \textit{personalized} FedRL (\PFRL) framework that can provide personalized policies for agents in different environments. Nevertheless, despite the recent advances in FedRL, the design of \PFRL and its performance analysis remains, to a large extent, an open question. Motivated by this, the first inquiry we aim to answer in this paper is: 

\vspace{-0.1in}
\begin{tcolorbox}[colback=white!5!white,colframe=white!75!white]
\textit{Can we design a \PFRL framework for agents in heterogeneous environments that not only collaboratively learns a useful global model without sharing local trajectories, but also learns a personalized policy for each agent? 
}
\end{tcolorbox}
\vspace{-0.1in}

We address this question by viewing the \PFRL problem in heterogeneous environments as $N$ parallel RL tasks with possibly \emph{shared common structure}. This is inspired by observations in centralized learning \citep{bengio2013representation,lecun2015deep} and federated or decentralized learning \citep{collins2021exploiting,tziotis2023straggler,xiong2023deprl}, where leveraging shared (low-dimensional) representations can improve performance. A theoretical understanding of using shared representations amongst heterogeneous agents has received recent emphasis in the standard \emph{supervised} FL (or decentralized learning) setting \citep{collins2021exploiting,tziotis2023straggler,xiong2023deprl}.

\looseness-1 However, a theoretical analysis of \PFRL with shared representations is more subtle because each agent in \PFRL collects data by following its own policy (thereby generating a Markovian trajectory) and simultaneously updates its model parameters. This is in stark contrast to the standard FL paradigm, where data is typically collected in an i.i.d. fashion. Our second research question is:

\vspace{-0.1in}
\begin{tcolorbox}[colback=white!5!white,colframe=white!75!white]
\textit{How do the shared representations affect the convergence of \PFRL under Markovian noise, and is it possible to achieve an $N$-fold linear convergence speedup? }
\end{tcolorbox}
\vspace{-0.1in}

Despite the recent progress in the standard supervised FL setting \citep{collins2021exploiting,tziotis2023straggler,xiong2023deprl}, to the best of our knowledge, this question is still open in the context of learning personalized policies in FedRL under Markovian noise (see Table~\ref{tal:sotas}). 
Motivated by these open questions, our main contributions are:

\noindent$\bullet$ \textbf{\FedRL framework.} We propose \FedRL, a new \PFRL framework with shared representations. \FedRL learns a global shared feature representation collaboratively among agents through the aid of a central server, along with agent-specific parameters for personalizing to each agent's local environment. The \FedRL framework can be paired with a wide range of RL algorithms, including both value-based and policy-based methods with arbitrary feature representations.

\noindent$\bullet$ \textbf{Linear speedup for TD learning.} We then introduce \FedTD, an instantiation of the above \FedRL framework for TD learning \citep{sutton2018reinforcement}. We analyze its convergence in a linear representation setting, proving the convergence rate of \FedTD to be $\tilde{\mathcal{O}}\bigl(N^{-2/3}(T+2)^{-2/3}\bigr)$, where $N$ is the number of agents and $T$ is the number of communication rounds. This implies a \emph{linear convergence speedup} for \FedTD with respect to the number of agents, a highly desirable property that allows for massive parallelism in large-scale systems. To our knowledge, this is the first linear speedup result for \PFRL with shared representations under Markovian noise, providing a theoretical answer to the empirical observations in \cite{mnih2016asynchronous} that federated versions of RL algorithms yield faster convergence. To show this result, we make use of two-timescale stochastic approximation theory and address the challenges of Markovian noise through a Lyapunov drift approach.

\section{Problem Formulation}\label{prelim}

In this section, we first review the standard FedRL framework and then introduce our proposed \FedRL framework, which incorporates personalization and shared representations. Let $N$ and $T$ be the number of agents and communication rounds, respectively.  Denote $[N]$ as the set of integers $\{1,\ldots, N\}$ and $\|\cdot\|$ as  the $l_2$-norm.  We use boldface to denote matrices and vectors.

\vspace{-0.05in}
\subsection{Preliminaries: Federated Reinforcement Learning}\label{sec:fedrl}
A FedRL system with $N$ agents interacting with $N$ independent heterogeneous environments is modeled as follows. The environment of agent $i \in [N]$ is a Markov decision process (MDP) $\cM^i=\langle \cS, \cA, R^i, P^i, \gamma\rangle$, where $\cS$ and $\cA$ are finite state and action sets, $R^i$ is the reward function, $P^i$ is the transition kernel, and $\gamma\in(0,1)$ is the discount factor. Suppose agent $i$ is equipped with a policy $\pi^i: \mathcal S \rightarrow \Delta(\mathcal A)$ (a mapping from states to probability distributions over $\mathcal A$). At each time step $k$, agent $i$ is in state $s^i_k$ and takes action $a^i_k$ according $\pi^i(\,\cdot\,|s^i_k)$, resulting in reward $R^i(s^i_k, a^i_k)$. The environment then transitions to a new state $s^i_{k+1}$ according to $P^i(\cdot|s^i_k, a^i_k)$. This sequence of states and actions forms a Markov chain, the source of the aforementioned Markovian noise. In this paper, this Markov chain is assumed to be unichain, which is known to asymptotically converge to a steady state. We denote the stationary distribution as $\mu^{i, \pi^i}$. 

The value of $\pi^i$ in environment $\cM^i$ is defined as $V^{i, \pi^i}(s)=\mathbb{E}_{\pi^i}\bigl[\sum_{k=0}^\infty \gamma^k R^i(s^i_k,a^i_k)\,|\,s^i_0=s\bigr]$.
In realistic problems with large state spaces, it is infeasible to store $V^{i,\pi^i}(s)$ for all states, so function approximation is often used. One example is $V^{i,\pi^i}(s) \approx \pmb{\Phi}(s) \, \pmb{\theta}$, where $\pmb{\Phi}\in\mathbb{R}^{|\cS|\times d}$ is a state feature representation and $\pmb{\theta}\in\mathbb{R}^d$ is an unknown low-dimensional weight vector.

One intermediate goal in RL is to estimate the value function corresponding to a policy $\pi$ using trajectories collected from the environment. This task is called \textit{policy evaluation}, and one widely used approach is \emph{temporal difference} (TD) learning \citep{sutton1988learning}. The FedRL version of TD learning is called FedTD \citep{khodadadian2022federated,dal2023federated, wang2023federated}, where $N$ agents collaboratively evaluate a single policy $\pi$ by learning a common (non-personalized) weight vector $\pmb{\theta}$, using trajectories collected from $N$ different environments. More precisely, we have $\pi^i \equiv \pi$ and $\pmb{\theta}^i\equiv\pmb{\theta}, \forall \, i\in[N]$. Given a feature representation $\pmb{\Phi}(s), \forall s$, this can be formulated as the following optimization problem: 
 \begin{align}\label{eq:FedTD_cov}
   \hspace{-0.3cm} \min_{\pmb{\theta}}\frac{1}{N}\sum_{i=1}^N\mathbb{E}_{s\sim \mu^{i, \pi}}  \left\|\pmb{\Phi}(s)\,\pmb{\theta}-V^{i,\pi}(s)\right\|^2.
 \end{align}
Due to space constraints, we focus our presentation on the policy evaluation problem. Note that policy evaluation is an important part of RL and control, since it is a critical step for methods based on policy improvement. Our proposed \PFRL framework (see Algorithm~\ref{alg:FedRL-Rep}) can be directly applied to control problems as well, but we relegate these discussions to Section~\ref{sec:application} and Appendix~\ref{sec:control}.

\subsection{Personalized FedRL with Shared Representations}

Since the local environments are heterogeneous across the $N$ agents, the aforementioned FedRL methods (in Section~\ref{sec:fedrl}) that aim to learn a common weight vector $\pmb{\theta}$ may perform poorly on individual agents. This necessitates the search for personalized local weight vectors $\pmb{\theta}^i$ that can be learned collaboratively among $N$ agents in $N$ heterogeneous environments (without sharing their locally collected trajectories).
As alluded to earlier, we view the personalized FedRL (\PFRL) problem as $N$ parallel RL tasks with possibly \emph{shared common structure}, and we propose that the agents collaboratively learn a common features representation $\pmb{\Phi}$ in addition to a personalized local weights $\pmb{\theta}^i$.
Specifically, the value function of agent $i$ is approximated as $V^{i,\pi^i} \!\approx f^i(\pmb{\theta}^i, \pmb{\Phi})$, where $f^i(\cdot,\cdot)$ is a general function parameterized by these two \textit{unknown} parameters.\footnote{The approximation $f^i(\pmb{\theta}^i, \pmb{\Phi})$ is general and can take on various forms, including as linear approximations or neural networks.  For instance, it can be represented as a linear combination of $\pmb{\Phi}$ and $\pmb{\theta}^i$, i.e., $f^i(\pmb{\theta}^i, \pmb{\Phi}):=\pmb{\Phi}\pmb{\theta}^i$ in TD  
with linear function approximation \citep{bhandari2018finite}. In addition, $f^i(\pmb{\theta}^i, \pmb{\Phi})$ can represent a deep neural network; see, e.g., our extension of \PFRL to control problems and its instantiation with DQN (deep Q-networks) \citep{mnih2015human} in Section~\ref{sec:application} and Appendix~\ref{sec:control}.
}
The policy evaluation problem of~\eqref{eq:FedTD_cov} can be updated for this new setting as: 
\begin{align}\label{eq:PE-obj}
\min_{\pmb{\Phi}} \frac{1}{N}\sum_{i=1}^N \min_{\{\pmb{\theta}^i, \forall i\}} \mathbb{E}_{s\sim \mu^{i,\pi^i}}\left\|f^i(\pmb{\theta}^i, \pmb{\Phi}(s))-V^{i,\pi^i}(s)\right\|^2,
\end{align}
where $N$ agents collaboratively learn a shared feature representation $\pmb{\Phi}$ via a server, along with a personalized local weight vector $\{\pmb{\theta}^i,\forall i\}$ using local trajectories at each agent.

\begin{remark}
The learning of a shared feature representation $\pmb{\Phi}$ in \PFRL is related to ideas from representation learning theory \citep{agarwal2020flambe,agarwal2023provable}, and this is believed to achieve better generalization performance with relatively small training data. In conventional FedRL, the feature representation $\pmb{\Phi}$ is given and fixed. Indeed, as we numerically verify in Section~\ref{sec:sim}, our \PFRL  presents better generalization performance to new environments.
\end{remark}

\begin{algorithm}[t]
	\caption{\FedRL: A General Description} 
	\label{alg:FedRL-Rep}
	\textbf{Input:} Sampling policy $\pi^i, \forall i\in[N]$;
	\begin{algorithmic}[1]
 \STATE Initialize the global feature representation $\pmb{\Phi}_0$ and local weight vector $\pmb{\theta}^i_0$, $\forall i\in[N]$ randomly; 
		\FOR{round $t=0,1,\ldots,T-1$}
  \FOR{agent $1,\ldots,N$}
  \STATE $\pmb{\theta}_{t+1}^i=\textsc{weight\_{update}}(\pmb{\Phi}_{t}, \pmb{\theta}_{t}^i, \alpha_t, K)$; 
  \STATE $\pmb{\Phi}_{t+1/2}^i=\textsc{feature\_{update}}(\pmb{\Phi}_{t}, \pmb{\theta}_{t+1}^i, \beta_t)$; 
		\ENDFOR
  \STATE Server computes the new global feature representation $\pmb{\Phi}_{t+1}=\frac{1}{N}\sum_{i=1}^N \pmb{\Phi}^i_{t+1/2}$. 
  \ENDFOR
	\end{algorithmic}

\end{algorithm}

\vspace{-0.1in}
\section{\FedRL Algorithms}

\vspace{-0.1in}
We now propose a class of algorithms called \FedRL that realize \PFRL with shared representations. \FedRL alternates comprises of three main steps for each agent at each communication round: (1) a local weight vector update; (2) a local feature representation update; and (3) a global feature representation update via the server. 

\textbf{Steps 1 and 2: Local weight and feature representation updates.}
At round $t$, agent $i$ performs an update on its local weight vector given its current global feature representation $\pmb{\Phi}_t$ and local weight vector $\pmb{\theta}_t^i$. We allow each agent to perform $K$ steps of local weight vector updates. Once the updated local weight vector $\pmb{\theta}_{t+1}^i$ is obtained, each agent $i$ executes a one-step local update on its feature representation to obtain $\pmb{\Phi}^i_{t+1/2}$. We represent these updates using the following generic notation:
\begin{equation}
    \pmb{\theta}_{t+1}^i = \textsc{weight\_update} (\pmb{\Phi}_t, \pmb{\theta}_t^i, \alpha_t, K) \quad \text{and} \quad \pmb{\Phi}^i_{t+1/2}=\textsc{feature\_update}(\pmb{\Phi}_t, \pmb{\theta}_{t+1}^i, \beta_t),
    \label{eq:local_weight_and_feature_model_updates}
\end{equation}
where $\alpha_t$ and $\beta_t$ are learning rates for the weight and feature updates, respectively. The generic functions $\textsc{weight\_update}$ and $\textsc{feature\_update}$ will be specialized to the particulars of the underlying RL algorithm: in Section \ref{sec:fedTD-convergence} we discuss the case of TD with linear function approximation in detail, and in Appendix \ref{sec:control}, we show instantiations of Q-learning and DQN in our framework.

\begin{wrapfigure}{r}{.35\linewidth}
\vspace{-1.5em}
\begin{align}\label{eq:global_model_update2}
     \pmb{\Phi}_{t+1}=\frac{1}{N}\sum_{i=1}^N \pmb{\Phi}^i_{t+1/2}.
 \end{align} 
\vspace{-1.8em}
\end{wrapfigure}
\textbf{Step 3: Server-based global feature representation update.} The server computes an average of the received local feature representation updates $\pmb{\Phi}^i_{t+1/2}$ from all agents to obtain the next global feature representation $\pmb{\Phi}_{t+1}$ as in~\eqref{eq:global_model_update2}.

The \FedRL procedure repeats \eqref{eq:local_weight_and_feature_model_updates}  and \eqref{eq:global_model_update2} and is summarized in Algorithm \ref{alg:FedRL-Rep} and Figure \ref{fig:FRL-Rep-main}. We emphasize that because \FedRL operates in an RL setting, there is no ground truth for the value function and learning occurs through interactions with an MDP environment, resulting in non-i.i.d. data. In contrast, in the standard FL setting (where shared representations have been investigated), there exists a known ground truth and training data are sampled in an i.i.d. fashion \citep{collins2021exploiting,tziotis2023straggler,xiong2023deprl}. The non-i.i.d. (Markovian) data is the main technical challenge that we need to overcome.

\begin{figure*}[t]
	\centering
\includegraphics[width=0.99\textwidth, height=2.8cm]{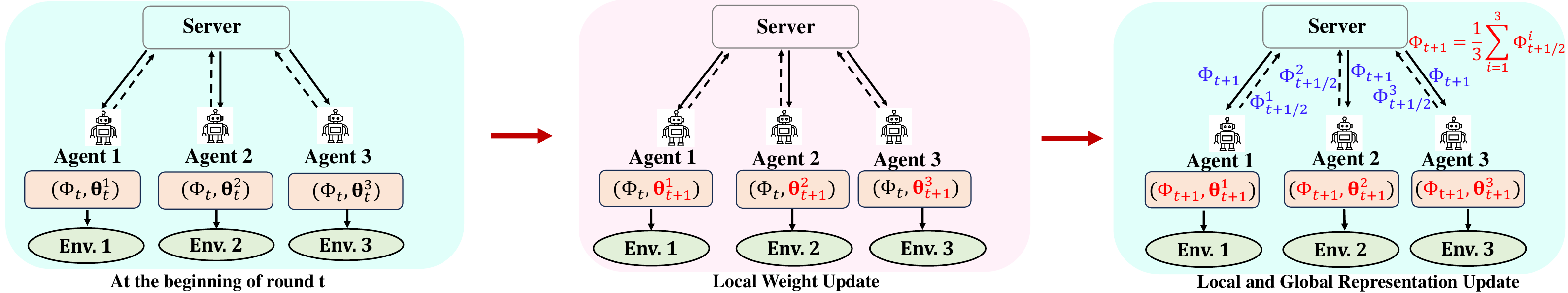}
	\caption{An illustrative example of \FedRL for 3 agents.  (a) At the beginning of round $t$, each agent $i=1,2,3$ has a local weight vector $\pmb{\theta}_t^i$ and a global feature representation $\pmb{\Phi}_t$. (b) 
 Using $(\pmb{\Phi}_t, \pmb{\theta}_t^i)$, each agent $i$ performs a $K$-step update to obtain $\pmb{\theta}_{t+1}^i$ as in~\eqref{eq:local_weight_and_feature_model_updates}. Note that $\pmb{\Phi}_t$ remains unchanged at this step. (c) Agent $i$ updates the feature representation by executing a one-step update to obtain $\pmb{\Phi}^i_{t+1/2}$ as in~\eqref{eq:local_weight_and_feature_model_updates}, which depends on both $\pmb{\theta}_{t+1}^i$ and $\pmb{\Phi}_t$. Finally, each agent $i$ shares $\pmb{\Phi}_{t+1/2}^i$ with the server, which then executes an averaging step as in~\eqref{eq:global_model_update2} to produce $\pmb{\Phi}_{t+1}$.  Updated parameters are highlighted in {\color{red}red}, while shared parameters (the global feature representation) are in {\color{blue}blue}. }
	\label{fig:FRL-Rep-main}
\end{figure*}

\vspace{-0.1in}
\section{\FedTD with Linear Representation}\label{sec:fedTD-convergence}
We present \FedTD, an instance of \FedRL paired with TD learning and analyze its convergence in a linear representation setting.

\subsection{\FedTD: Algorithm Description}
Here, the goal of $N$ agents is to collaboratively solve problem~\eqref{eq:PE-obj} when the underlying RL algorithm is TD learning. We first need to specify $\textsc{weight\_update}$ and $\textsc{feature\_update}$ of Algorithm~\ref{alg:FedRL-Rep} for the case of TD. At time step $k$, the state of agent $i$ is $s_k^i$, and its value function can be denoted as $V(s_k^i)= \pmb{\Phi}(s_k^i) \, \pmb{\theta}^i$ in a linear representation setting. By the standard one-step Monte Carlo approximation used in TD, we compute $\hat{V}(s_k^i) = r_k^i+\gamma \pmb{\Phi}(s_{k+1}^i)\,\pmb{\theta}^i$. The TD error is defined as 
\begin{align}\label{eq:TD-error}
\delta_k^i:=\hat{V}(s_k^i)-V(s_k^i)=r_k^i+\gamma \, \pmb{\Phi}(s_{k+1}^i) \, \pmb{\theta}^i-\pmb{\Phi}(s_k^i)\,\pmb{\theta}^i.
\end{align}
The goal of agent $i$ is to minimize the following loss function for every $s_k^i\in\cS$
\begin{align}\label{eq:loss}
    \cL^i(\pmb{\Phi}(s_k^i), \pmb{\theta}^i)=\frac{1}{2}\bigl\|V(s_k^i)-\hat{V}(s_k^i)\bigr\|^2,
\end{align}
with $\hat{V}(s_k^i)$ treated as a constant. 
We now denote the Markovian observations of agent $i$ {at the $k$-th time step of communication round $t$} as $X_{t,k}^i:=(s_{t,k}^i, r_{t,k}^i, s_{t, k+1}^i)$. Note that the observation sequences $\{X_{t,k}^i, \forall t, k\}$ differ across agents in heterogeneous environments. We assume that $\{X_{t,k}^i, \forall t, k\}$ are statistically independent across all agents.

\textbf{Local weight vector update.} As in line 4 of Algorithm~\ref{alg:FedRL-Rep}, given the current global feature representation $\pmb{\Phi}_t$, each agent $i$ takes $K$ local update steps on its local weight vector $\pmb{\theta}_{t}^i$ as 
\begin{align}\label{eq:local_model_update-TD}
\pmb{\theta}_{t,k}^i=\pmb{\theta}_{t,k-1}^i+\alpha_t \, \bg(\pmb{\theta}_{t,k-1}^i,  \pmb{\Phi}_t, X_{t,k-1}^i),
\end{align}
for $k\in[K]$, where $\bg(\pmb{\theta}_{t,k-1}^i, \pmb{\Phi}_t, X^i_{t,k-1})$ is the negative stochastic gradient of the loss function $\cL^i(\pmb{\Phi}_t(s_{t,k-1}^i), \pmb{\theta}_{t,k-1}^i)$ with respect to $\pmb{\theta}$, given the current feature representation $\pmb{\Phi}_t$: 
\begin{align}\label{eq:g}
    \bg(\pmb{\theta}_{t,k-1}^i, \pmb{\Phi}_t, X^i_{t,k-1})&:=-\nabla_{\pmb{\theta}} \cL^i(\pmb{\Phi}_t(s_{t,k-1}^i), \pmb{\theta}_{t,k-1}^i)=\delta_{t,k-1}^i\pmb{\Phi}_t(s_{t,k-1}^i)^\intercal. 
\end{align}
Since there are $K$ steps of local updates, we denote $\pmb{\theta}_{t+1}^i:=\pmb{\theta}_{t,K}^i.$ 
We further add a norm-scaling (i.e., clipping) step for the updated weight vectors $\pmb{\theta}_{t+1}^i$, i.e., enforcing $\|\pmb{\theta}_{t+1}^i\|\leq B$, to stabilize the update. This is essential for the finite-time convergence analysis in Section~\ref{sec:convergence}, and this technique is widely used in conventional TD learning with linear function approximation \citep{bhandari2018finite}.

\textbf{Local feature representation update.} As in line 5 of Algorithm~\ref{alg:FedRL-Rep}, given the updated local weight vector $\pmb{\theta}_{t+1}^i$, agent $i$ then executes a one-step local update on the global feature representation: 
\begin{align}\label{eq:global_model_update-TD}
\pmb{\Phi}_{t+1/2}^i=\pmb{\Phi}_{t}+\beta_t\bh(\pmb{\theta}_{t+1}^i, \pmb{\Phi}_t, \{X_{t,k-1}^i\}_{k=1}^K),  
\end{align}
where $\bh(\pmb{\theta}_{t+1}^i, \pmb{\Phi}_t, \{X_{t,k-1}^i\}_{k=1}^K)$ is the negative stochastic gradient of the loss $\cL^i(\pmb{\Phi}_t(s_{t,k-1}^i), \pmb{\theta}_{t+1}^i)$ with respect to the current global feature representation $\pmb{\Phi}_t$, satisfying
\begin{align}
    \label{eq:h}\bh(\pmb{\theta}_{t+1}^i, \pmb{\Phi}_t, X_{t,k-1}^i) &:=-\nabla_{\pmb{\Phi}} \cL^i(\pmb{\Phi}_t(s_{t,k-1}^i), \pmb{\theta}_{t+1}^i)= \delta_{t,k-1}^i{\pmb{\theta}_{t+1}^i}^\intercal.
\end{align}

\textbf{Server-based global feature representation update.} As in line 7 of Algorithm~\ref{alg:FedRL-Rep}, the server then averages the received local feature representation updates in~\eqref{eq:global_model_update-TD} to obtain the next global feature representation: 
\begin{align}\label{eq:update_phi}
\hspace{-0.3cm}\pmb{\Phi}_{t+1}=\pmb{\Phi}_{t}+\beta_t\cdot\frac{1}{N}\sum_{j=1}^N \bh(\pmb{\theta}_{t+1}^j, \pmb{\Phi}_t, \{X_{t,k-1}^i\}_{k=1}^K).
\end{align}
The full pseudo-code of \FedTD is given in Appendix~\ref{sec:fedTD}.

\subsection{Convergence Analysis}\label{sec:convergence}
The coupled updates in \eqref{eq:local_model_update-TD} and \eqref{eq:update_phi} can be viewed as a federated nonlinear two-timescale stochastic approximation (2TSA) \citep{doan2021finite} with Markovian noise, with $\pmb{\theta}_t^i$ updating on a faster timescale and $\pmb{\Phi}_t$ on a slower timescale. We aim to establish the finite-time convergence rate of the 2TSA coupled updates \eqref{eq:local_model_update-TD} and \eqref{eq:update_phi}. This is equivalent to finding a solution 
pair $(\pmb{\Phi}^*, \{\pmb{\theta}^{i,*}, \forall i\})$ such that\footnote{The root $(\pmb{\Phi}^*, \{\pmb{\theta}^{i,*}, \forall i\})$ of the nonlinear 2TSA in \eqref{eq:local_model_update-TD} and \eqref{eq:update_phi} can be established by using the ODE method following the solution of suitably defined differential equations \citep{doan2021finite,doan2022nonlinear,chen2019performance} as in \eqref{eq:equilibrium}.}
\begin{align}\label{eq:equilibrium}
\mathbb{E}_{s_t^i\sim\mu^i, s_{t+1}^i\sim P^i_{\pi^i}(\cdot|s_t^i)}[\bg(\pmb{\theta}^{i,*}, \pmb{\Phi}^*, X_t^i)]=0 \;\; \text{and} \;\; \mathbb{E}_{s_t^i\sim\mu^i, s_{t+1}^i\sim P^i_{\pi^i}(\cdot|s_t^i)}[\bh(\pmb{\theta}^{i,*}, \pmb{\Phi}^*, X_t^i)]=0
\end{align} 
hold for all Markovian observations $X_t^i$.
Here, $\mu^i$ is the unknown stationary distribution of state $s_t^i$ of agent $i$ at $t$, and $P^i_{\pi^i}$ is the transition kernel of agent $i$ under policy $\pi^i$. 

Although the root $(\pmb{\Phi}^*, \{\pmb{\theta}^{i,*}, \forall i\})$ of the nonlinear 2TSA in (\ref{eq:local_model_update-TD}) and (\ref{eq:update_phi}) is not unique due to simple permutations (rotations), it is proved in \cite{tsitsiklis1996analysis} that the standard TD iterates converge asymptotically to a vector \(\pmb{\theta}^*\) given a fixed feature representation $\pmb{\Phi}$ almost surely, where \(\pmb{\theta}^*\) is the unique solution of a certain projected Bellman equation.
Hence, for agent $i$, in order to study the stability of $\pmb{\theta}^i$ when the feature representation $\pmb{\Phi}$ is fixed, we note that there exists a mapping $\pmb{\theta}^i=y^i(\pmb{\Phi})$ that maps $\pmb{\Phi}$ to the unique solution of $\mathbb{E}_{s_t^i\sim\mu^i, s_{t+1}^i\sim P^i_{\pi^i}(\cdot|s_t^i)}[\bg(\pmb{\theta}^{i}, \pmb{\Phi}, X_t^i)]=0.$

\looseness-1 Inspired by \cite{doan2022nonlinear}, the finite-time analysis of a 2TSA boils down to the choice of two step sizes $\{\alpha_t, \beta_t, \forall t\}$ and a Lyapunov function that couples the two iterates in~\eqref{eq:local_model_update-TD} and~\eqref{eq:update_phi}.  We first define the following two error terms: 
\begin{align}\label{eq:residual}
    \tilde{\pmb{\Phi}}_t&=\pmb{\Phi}_t-{\pmb{\Phi}}^* \quad \text{and} \quad 
    \tilde{\pmb{\theta}}^i_t=\pmb{\theta}^i_t-y^i(\pmb{\Phi}_{t}),~ \forall i\in[N],
\end{align}
which together characterizes the coupling between $\{\pmb{\theta}^i_{t+1}, \forall i\}$ and $\pmb{\Phi}_t$. 
If $\{\tilde{\pmb{\theta}}^i_{t+1}, \forall i\}$ and $\tilde{\pmb{\Phi}}_t$ go to zero simultaneously, the convergence of $(\{\pmb{\theta}_{t+1}^i, \forall i\},\pmb{\Phi}_t)$ to $(\{\pmb{\theta}^{i,*}, \forall i\},\pmb{\Phi}^*)$ can be established. 
Thus, to prove the convergence of $(\{\pmb{\theta}_{t+1}^i, \forall i\},\pmb{\Phi}_t)$ of the 2TSA in \eqref{eq:local_model_update-TD} and \eqref{eq:update_phi} to its true value $({\pmb{\Phi}}^*,\{\pmb{\theta}^{i,*}, \forall i\})$,  we define the following weighted Lyapunov function to explicitly couple the fast and slow iterates 
\begin{align}\label{eq:Lyapunov}
   & M(\{\pmb{\theta}^i_{t+1}, \forall i\},\pmb{\Phi}_t):=\|\pmb{\Phi}_t-\pmb{\Phi}^*\|^2+\frac{\beta_{t-1}}{\alpha_t} \frac{1}{N}\sum_{i=1}^N\|\pmb{\theta}_{t+1}^i-y^i(\pmb{\Phi}_{t})\|^2.
\end{align}
\begin{remark}
    Note that the Lyapunov function \eqref{eq:Lyapunov} for 2TSA does not inherently require the solution to be unique. If multiple solutions or equilibria exist, the Lyapunov function should still be able to show that the system will converge to one of these possible equilibria, ensuring that the system's state does not diverge and eventually stabilizes at some equilibrium point, which highly depends on the initialization of $\pmb{\Phi}_0$.  To clarify this, in the rest of this paper,  we use $\pmb{\Phi}^*_0$
 to clearly indicate the dependence of the initialization of $\pmb{\Phi}$, and  $\pmb{\Phi}^*$ in \eqref{eq:Lyapunov} is interchangeable with $\pmb{\Phi}_0^*$, which denotes the optimum close to the initial point.
\end{remark}
Our goal is to characterize the finite-time convergence of $\mathbb{E}[M(\{\pmb{\theta}^i_{t+1}, \forall i\},\pmb{\Phi}_t)]$, the Lyapunov function in \eqref{eq:Lyapunov}. We start with some standard assumptions first.

\begin{assumption}\label{assumptions:lr}
The learning rates $\alpha_t$ and $\beta_t$ satisfy the following conditions: (i) $\sum_{t=0}^\infty \alpha_t=\infty$, (ii) $\sum_{t=0}^\infty \alpha_t^2<\infty$, (iii) $\sum_{t=0}^\infty \beta_t=\infty$, (iv) $\sum_{t=0}^\infty \beta_t^2<\infty$, (v) $\beta_t/\alpha_t$ is non-increasing in $t$, and (vi) $\lim_{t\rightarrow\infty}\beta_t/\alpha_t=0$. 
\end{assumption}

\begin{assumption}\label{assumption:markovian}
Agent $i$'s Markov chain $\{X_t^i\}$  is irreducible and aperiodic.
Hence, there exists a unique stationary distribution $\mu^i$ \citep{levin2017markov}  and  constants $C>0$ and $\rho\in(0,1)$ such that 
    $d_{TV}(P(X_k^i|X^i_0=x), \mu^i)\leq C\rho^k, \forall k\geq 0, x\in\cX,$
where $d_{TV}(\cdot,\cdot)$ is the total-variation (TV) distance \citep{levin2017markov}. 

\end{assumption}

\begin{remark}
Assumption \ref{assumption:markovian} implies that the Markov chain induced by $\pi^i$ admits a unique stationary distribution $\mu^i$. This assumption is commonly used in the asymptotic convergence analysis of stochastic approximation under Markovian noise \citep{borkar2009stochastic,chen2019performance}.
\end{remark}

We can define the steady-state local TD update direction as
\begin{align}
    \bar{\bg}(\pmb{\theta}^i, \pmb{\Phi})&:=\mathbb{E}_{s_t^i\sim\mu^i, s_{t+1}^i\sim P^i_{\pi^i}(\cdot|s_t^i)}[\bg(\pmb{\theta}^i, \pmb{\Phi}, X_{t}^i)],\nonumber\allowdisplaybreaks\\
    \bar{\bh}(\pmb{\theta}^i, \pmb{\Phi})&:=\mathbb{E}_{s_t^i\sim\mu^i, s_{t+1}^i\sim P^i_{\pi^i}(\cdot|s_t^i)}[\bh(\pmb{\theta}^i, \pmb{\Phi}, X_{t}^i)].
\end{align}

\begin{definition} [Mixing time, similar to \cite{chen2019performance}]\label{def:mixing}
 First, define the discrepancy term
 \[
 \xi_t(\pmb{\theta}^i, \pmb{\Phi}, x) = \max \bigl \{ \|\mathbb{E}[\bg(\pmb{\theta}^i, \pmb{\Phi},X_t^i)\,|\,X_0\!=\!x]\!-\!\bar{\bg}(\pmb{\theta}^i,\pmb{\Phi}) \|, ~\|\mathbb{E}[\bh(\pmb{\theta}^i, \pmb{\Phi},X_t^i)\,|\,X_0\!=\!x]\!-\!\bar{\bh}(\pmb{\theta}^i,\pmb{\Phi}) \| \bigr\}.
 \]
 For $\delta > 0 $, the \emph{mixing time} is defined as
 \[
 \tau_\delta = \max_{i \in [N]} \min \bigl\{t \ge 1: \xi_k(\pmb{\theta}^i, \pmb{\Phi}, x) \le \delta (\|\pmb{\Phi}-\pmb{\Phi}^*\|+\|\pmb{\theta}^i-y^i(\pmb{\Phi}^*)\|+1), \forall \, k \ge t, \forall \, (\pmb{\theta}^i, \pmb{\Phi}, x) \bigr\},
 \]
 which describes the time it takes for all agents' trajectories (Markov chains) to be well-represented by their stationary distributions.
 \end{definition}

\begin{lemma}\label{Assumption:g_lipschitz}
$\bg(\pmb{\theta}, \pmb{\Phi}, X)$ in~(\ref{eq:g}) is globally Lipschitz continuous w.r.t $\pmb{\theta}$ and $\pmb{\Phi}$ uniformly in $X$, i.e., $\|\bg(\pmb{\theta}_1,\pmb{\Phi}_1, X)\!-\!\bg(\pmb{\theta}_2,\pmb{\Phi}_2, X)\|\leq L_g(\|\pmb{\theta}_1-\pmb{\theta}_2\|+\|\pmb{\Phi}_1-\pmb{\Phi}_2\|), \forall X\in\cX$.  
\end{lemma}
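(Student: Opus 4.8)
The plan is to prove the claim by direct estimation using the explicit bilinear form of $\bg$, together with the boundedness guarantees maintained throughout the algorithm. Recall from \eqref{eq:g} and \eqref{eq:TD-error} that, writing a transition as $X=(s,r,s')$ and abbreviating the feature rows $\phi=\pmb{\Phi}(s)$, $\phi'=\pmb{\Phi}(s')$, we have $\bg(\pmb{\theta},\pmb{\Phi},X)=\bigl(r+\gamma\,\phi'\pmb{\theta}-\phi\pmb{\theta}\bigr)\phi^\intercal$. The crucial structural observation is that this is \emph{affine} in $\pmb{\theta}$ but \emph{quadratic} in $\pmb{\Phi}$, so global Lipschitzness can hold only on the bounded domain that the iterates actually occupy. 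I would therefore invoke the standing bounds used in TD with linear approximation: the rewards satisfy $|r|\le R_{\max}$, the feature rows satisfy $\|\pmb{\Phi}(s)\|\le \Phi_{\max}$, and the norm-scaling (clipping) step of Section~\ref{sec:fedTD-convergence} enforces $\|\pmb{\theta}\|\le B$.

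First I would split the difference along one coordinate at a time,
\[
\bg(\pmb{\theta}_1,\pmb{\Phi}_1,X)-\bg(\pmb{\theta}_2,\pmb{\Phi}_2,X)=\underbrace{\bigl[\bg(\pmb{\theta}_1,\pmb{\Phi}_1,X)-\bg(\pmb{\theta}_2,\pmb{\Phi}_1,X)\bigr]}_{(\mathrm{I})}+\underbrace{\bigl[\bg(\pmb{\theta}_2,\pmb{\Phi}_1,X)-\bg(\pmb{\theta}_2,\pmb{\Phi}_2,X)\bigr]}_{(\mathrm{II})}.
\]
For term $(\mathrm{I})$, with $\pmb{\Phi}_1$ fixed the map is affine in $\pmb{\theta}$, so it collapses to $\bigl[(\gamma\phi'_1-\phi_1)(\pmb{\theta}_1-\pmb{\theta}_2)\bigr]\phi_1^\intercal$, whose norm is at most $(1+\gamma)\Phi_{\max}^2\,\|\pmb{\theta}_1-\pmb{\theta}_2\|$ by Cauchy--Schwarz and the feature bound. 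Term $(\mathrm{II})$ is the quadratic part: here I would use product-rule telescoping, adding and subtracting mixed terms such as $(\phi'_1\pmb{\theta}_2)\phi_2^\intercal$ and $(\phi_1\pmb{\theta}_2)\phi_2^\intercal$, so that each resulting summand contains exactly one factor of the form $\phi_1-\phi_2$ or $\phi'_1-\phi'_2$ while the remaining factors are bounded by $\Phi_{\max}$, $R_{\max}$, or $B$.

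To finish, I would bound the row differences by the matrix difference, $\|\pmb{\Phi}_1(s)-\pmb{\Phi}_2(s)\|\le\|\pmb{\Phi}_1-\pmb{\Phi}_2\|$ and likewise for $s'$ (a row norm is dominated by the operator, hence the Frobenius, norm), which turns the estimate for $(\mathrm{II})$ into a constant multiple of $\|\pmb{\Phi}_1-\pmb{\Phi}_2\|$. Combining the two bounds yields the claim with an explicit constant $L_g$ depending only on $\gamma$, $\Phi_{\max}$, $R_{\max}$, and $B$. The main obstacle is the quadratic dependence on $\pmb{\Phi}$: it is what forces the use of the boundedness assumptions (most importantly the clipping $\|\pmb{\theta}\|\le B$ and the bounded features) and what makes the telescoping for $(\mathrm{II})$ the only non-mechanical step, since $\pmb{\Phi}$ enters in three places ($\phi$ twice and $\phi'$ once) and these must be separated carefully so that no term retains two free difference factors.
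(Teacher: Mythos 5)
Your proof is correct and follows essentially the same route as the paper's: split the difference by changing $\pmb{\theta}$ first (where $\bg$ is affine) and then $\pmb{\Phi}$, telescope the quadratic-in-$\pmb{\Phi}$ part, and invoke the bounded feature rows, the clipping $\|\pmb{\theta}\|\le B$, and row-norm-dominated-by-matrix-norm to get an explicit $L_g$. In fact you are slightly more careful than the paper: the paper's step $(a_1)$ silently drops the contribution $r\,(\pmb{\Phi}_1(s)-\pmb{\Phi}_2(s))^\intercal$, whereas your inclusion of $R_{\max}$ in the constant handles that term correctly.
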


\begin{lemma}\label{Assumption:h_lipschitz}
$\bh(\pmb{\theta}, \pmb{\Phi}, X)$ in~(\ref{eq:h}) is globally Lipschitz continuous w.r.t $\pmb{\theta}$ and $\pmb{\Phi}$ uniformly in $X$, i.e., $\|\bh(\pmb{\theta}_1,\pmb{\Phi}_1, X)\!-\!\bh(\pmb{\theta}_2,\pmb{\Phi}_2, X)\|\leq L_h(\|\pmb{\theta}_1-\pmb{\theta}_2\|+\|\pmb{\Phi}_1-\pmb{\Phi}_2\|), \forall X\in\cX$.  
\end{lemma}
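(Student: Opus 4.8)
The plan is to establish the bound by exploiting the bilinear structure of $\bh$, mirroring the argument for $\bg$ in Lemma~\ref{Assumption:g_lipschitz} but with the roles of the feature row $\pmb{\Phi}(s)$ and the weight vector $\pmb{\theta}$ interchanged. Recall from~\eqref{eq:h} that $\bh(\pmb{\theta},\pmb{\Phi},X)=\delta(\pmb{\theta},\pmb{\Phi},X)\,\pmb{\theta}^\intercal$, where the scalar TD error $\delta(\pmb{\theta},\pmb{\Phi},X)=r+(\gamma\,\pmb{\Phi}(s')-\pmb{\Phi}(s))\pmb{\theta}$ is affine in $(\pmb{\theta},\pmb{\Phi})$. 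Since $\bh$ is the product of the scalar $\delta$ with the linear factor $\pmb{\theta}^\intercal$, it is quadratic in $\pmb{\theta}$ and so cannot be globally Lipschitz on an unbounded domain; the argument therefore relies crucially on the standard boundedness conditions $\|\pmb{\Phi}(s)\|\le 1$ for all $s$, $|r|\le r_{\max}$, and the norm-clipping constraint $\|\pmb{\theta}\|\le B$ introduced after~\eqref{eq:g}.

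Given two points $(\pmb{\theta}_1,\pmb{\Phi}_1)$ and $(\pmb{\theta}_2,\pmb{\Phi}_2)$, I would first split the difference via a telescoping decomposition,
\begin{align*}
\bh(\pmb{\theta}_1,\pmb{\Phi}_1,X)-\bh(\pmb{\theta}_2,\pmb{\Phi}_2,X)=\delta_1(\pmb{\theta}_1-\pmb{\theta}_2)^\intercal+(\delta_1-\delta_2)\pmb{\theta}_2^\intercal,
\end{align*}
where $\delta_j:=\delta(\pmb{\theta}_j,\pmb{\Phi}_j,X)$, separating the variation of the linear factor from the variation of the error. Taking norms and applying the triangle inequality reduces the task to two subproblems: (i) a uniform bound $|\delta_1|\le r_{\max}+(1+\gamma)B=:D_\delta$ together with $\|\pmb{\theta}_2\|\le B$, both immediate from the boundedness conditions; and (ii) a Lipschitz bound on the scalar error $|\delta_1-\delta_2|$. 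For (ii) I would telescope once more, writing $A_j:=\gamma\,\pmb{\Phi}_j(s')-\pmb{\Phi}_j(s)$ so that $\delta_1-\delta_2=A_1(\pmb{\theta}_1-\pmb{\theta}_2)+(A_1-A_2)\pmb{\theta}_2$, and then use $\|A_1\|\le 1+\gamma$ and $\|A_1-A_2\|\le(1+\gamma)\|\pmb{\Phi}_1-\pmb{\Phi}_2\|$ to obtain $|\delta_1-\delta_2|\le(1+\gamma)\|\pmb{\theta}_1-\pmb{\theta}_2\|+(1+\gamma)B\|\pmb{\Phi}_1-\pmb{\Phi}_2\|$.

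Assembling the two pieces yields $\|\bh(\pmb{\theta}_1,\pmb{\Phi}_1,X)-\bh(\pmb{\theta}_2,\pmb{\Phi}_2,X)\|\le[D_\delta+(1+\gamma)B]\|\pmb{\theta}_1-\pmb{\theta}_2\|+(1+\gamma)B^2\|\pmb{\Phi}_1-\pmb{\Phi}_2\|$, and setting $L_h:=\max\{D_\delta+(1+\gamma)B,\ (1+\gamma)B^2\}$ gives the claimed inequality uniformly in $X$. The only genuinely delicate point—and the reason I flag the boundedness conditions up front—is that global Lipschitzness fails without the clipping bound $\|\pmb{\theta}\|\le B$, since the quadratic-in-$\pmb{\theta}$ term $(\gamma\pmb{\Phi}(s')-\pmb{\Phi}(s))\pmb{\theta}\cdot\pmb{\theta}^\intercal$ would otherwise have unbounded derivative. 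Everything else is routine manipulation of vector and matrix norms, where I would use that the Euclidean norm of any single row difference $\|\pmb{\Phi}_1(s)-\pmb{\Phi}_2(s)\|$ is dominated by the (Frobenius) norm $\|\pmb{\Phi}_1-\pmb{\Phi}_2\|$ appearing in the statement.
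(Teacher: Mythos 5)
Your proof is correct, and it is the same kind of argument the paper uses: a telescoping decomposition of the bilinear map $\bh(\pmb{\theta},\pmb{\Phi},X)=\delta\,\pmb{\theta}^\intercal$, combined with the boundedness facts $\|\pmb{\theta}\|\le B$ (from the clipping step) and $\|\pmb{\Phi}(s)\|\le 1$ (row normalization). The only structural difference is the order of the split: you peel off the outer factor first, writing $\delta_1(\pmb{\theta}_1-\pmb{\theta}_2)^\intercal+(\delta_1-\delta_2)\pmb{\theta}_2^\intercal$, whereas the paper first varies $\pmb{\Phi}$ with $\pmb{\theta}_1$ held fixed in both slots and then varies $\pmb{\theta}$. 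One point in your favor: because you bound $|\delta_1|\le r_{\max}+(1+\gamma)B$, the reward's contribution $r(s,a)(\pmb{\theta}_1-\pmb{\theta}_2)^\intercal$ is retained and shows up, correctly, as $r_{\max}$ in your final constant. The paper's step $(b_1)$ silently discards exactly this term: the intermediate quantity added and subtracted there contains $r(s,a)\pmb{\theta}_1^\intercal$, so the second difference should carry $r(s,a)(\pmb{\theta}_1-\pmb{\theta}_2)^\intercal$, yet it is written without it, and the stated constant $L_h=\max\bigl((1+\gamma)B^2,(2+2\gamma)B\bigr)$ omits any reward bound. Your constant $L_h=\max\bigl\{r_{\max}+2(1+\gamma)B,\,(1+\gamma)B^2\bigr\}$ is the one the argument actually delivers; the lemma is true either way once rewards are bounded, but your accounting is the more careful of the two. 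Your remark that global Lipschitzness would fail without the clipping bound (since $\bh$ is quadratic in $\pmb{\theta}$) is also a correct and worthwhile observation that the paper leaves implicit.
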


\begin{lemma}
    \label{Assumption:y_lipschitz}
$y^i(\pmb{\Phi}), \forall i$  is  Lipschitz continuous in ${\pmb{\Phi}}$, i.e., $\|y^i(\pmb{\Phi}_1)-y^i(\pmb{\Phi}_2)\|\leq L_y\|\pmb{\Phi}_1-\pmb{\Phi}_2\|$.
\end{lemma}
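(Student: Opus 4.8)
The plan is to exploit the affine structure of the steady-state TD direction in the linear setting, so that $y^i$ becomes the solution of a linear system, and then control the perturbation of this solution using the Lipschitz property of $\bar{\bg}$ already granted by Lemma~\ref{Assumption:g_lipschitz}. First I would make the affine form explicit: from \eqref{eq:g} the TD error is affine in $\pmb{\theta}$, so taking expectations under the stationary distribution yields
\begin{align*}
\bar{\bg}(\pmb{\theta}^i, \pmb{\Phi}) = \bb^i(\pmb{\Phi}) - \bA^i(\pmb{\Phi})\,\pmb{\theta}^i,
\end{align*}
where $\bA^i(\pmb{\Phi}) = \mathbb{E}_{s\sim\mu^i, s'\sim P^i_{\pi^i}(\cdot|s)}[\pmb{\Phi}(s)^\intercal(\pmb{\Phi}(s)-\gamma\pmb{\Phi}(s'))]$ and $\bb^i(\pmb{\Phi}) = \mathbb{E}_{s\sim\mu^i}[r^i \pmb{\Phi}(s)^\intercal]$. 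By the classical argument of \cite{tsitsiklis1996analysis}, whenever $\pmb{\Phi}$ has full column rank the symmetric part of $\bA^i(\pmb{\Phi})$ is positive definite; I would carry a uniform lower bound $\lambda_{\min}>0$ on its smallest eigenvalue over the admissible domain of $\pmb{\Phi}$, so that $\bA^i(\pmb{\Phi})$ is invertible with $\|\bA^i(\pmb{\Phi})^{-1}\|\le 1/\lambda_{\min}$ and $y^i(\pmb{\Phi})=\bA^i(\pmb{\Phi})^{-1}\bb^i(\pmb{\Phi})$ is the well-defined root from \eqref{eq:equilibrium}.

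Next I would estimate $y^i(\pmb{\Phi}_1)-y^i(\pmb{\Phi}_2)$ directly from the defining equations rather than from the closed form, so as to avoid differentiating a matrix inverse. Writing $\pmb{\theta}_j=y^i(\pmb{\Phi}_j)$, both satisfy $\bar{\bg}(\pmb{\theta}_j,\pmb{\Phi}_j)=0$; subtracting the two identities and inserting the cross term $\bar{\bg}(\pmb{\theta}_2,\pmb{\Phi}_1)$, and using that $\bar{\bg}(\cdot,\pmb{\Phi}_1)$ is affine with linear part $-\bA^i(\pmb{\Phi}_1)$, gives
\begin{align*}
\bA^i(\pmb{\Phi}_1)\,(\pmb{\theta}_1-\pmb{\theta}_2) = \bar{\bg}(\pmb{\theta}_2,\pmb{\Phi}_1)-\bar{\bg}(\pmb{\theta}_2,\pmb{\Phi}_2).
\end{align*}
Because $\bg$ is $L_g$-Lipschitz uniformly in $X$ (Lemma~\ref{Assumption:g_lipschitz}), its expectation $\bar{\bg}$ inherits the same constant, so holding $\pmb{\theta}_2$ fixed the right-hand side is bounded by $L_g\|\pmb{\Phi}_1-\pmb{\Phi}_2\|$. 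Multiplying by $\bA^i(\pmb{\Phi}_1)^{-1}$ and taking norms then yields $\|y^i(\pmb{\Phi}_1)-y^i(\pmb{\Phi}_2)\|\le (L_g/\lambda_{\min})\,\|\pmb{\Phi}_1-\pmb{\Phi}_2\|$, i.e.\ $L_y=L_g/\lambda_{\min}$.

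The main obstacle is not the algebra but justifying the \emph{uniform} invertibility of $\bA^i(\pmb{\Phi})$: the positive-definiteness constant degenerates as $\pmb{\Phi}$ loses column rank, so a uniform $\lambda_{\min}$ exists only over a set of feature matrices bounded away from rank deficiency. I would therefore either restrict the analysis to a compact domain of admissible $\pmb{\Phi}$ on which full rank is maintained (consistent with the boundedness/clipping already imposed on the iterates) or add a non-degeneracy condition as an explicit standing assumption. One secondary point to check is that $y^i(\pmb{\Phi}_2)$ remains in the region where the constant $L_g$ of Lemma~\ref{Assumption:g_lipschitz} applies, which follows once $\bb^i(\pmb{\Phi})$ is bounded on the same domain and hence $\|y^i(\pmb{\Phi})\|\le \|\bb^i(\pmb{\Phi})\|/\lambda_{\min}$ is uniformly controlled.
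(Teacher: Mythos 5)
Your proposal is correct in substance but takes a genuinely different route from the paper. You exploit the affine structure of the stationary TD direction, writing $\bar{\bg}(\pmb{\theta}^i,\pmb{\Phi})=\bb^i(\pmb{\Phi})-\bA^i(\pmb{\Phi})\,\pmb{\theta}^i$ so that $y^i(\pmb{\Phi})=\bA^i(\pmb{\Phi})^{-1}\bb^i(\pmb{\Phi})$, and you get the Lipschitz bound from a perturbation identity for the linear system, $\bA^i(\pmb{\Phi}_1)\bigl(y^i(\pmb{\Phi}_1)-y^i(\pmb{\Phi}_2)\bigr)=\bar{\bg}(y^i(\pmb{\Phi}_2),\pmb{\Phi}_1)-\bar{\bg}(y^i(\pmb{\Phi}_2),\pmb{\Phi}_2)$, whose right-hand side is controlled by Lemma~\ref{Assumption:g_lipschitz}; this delivers an explicit constant $L_y=L_g/\lambda_{\min}$. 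The paper's own proof is entirely different and much cruder: it notes that the norm-clipping step of Algorithm~\ref{alg:FedTD-Rep} forces $\|y^i(\pmb{\Phi}_1)-y^i(\pmb{\Phi}_2)\|\leq 2B$, and then, because the normalized representations have unit-norm rows, simply asserts that some constant $L_y$ exists. What your approach buys is an actual chain of inequalities with an explicit constant; what it costs is the uniform non-degeneracy condition you correctly flag, since positive definiteness of $\bA^i(\pmb{\Phi})$ \citep{tsitsiklis1996analysis} requires full column rank of $\pmb{\Phi}$ and a uniform $\lambda_{\min}>0$ must be assumed or secured by restricting to a compact, rank-non-degenerate set of representations. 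It is worth noting that the paper's argument does not avoid this difficulty so much as suppress it: a uniform bound of $2B$ on the difference does not imply Lipschitz continuity as $\pmb{\Phi}_1\to\pmb{\Phi}_2$, so the paper's two-line proof is essentially an assertion, and your route is the one that actually proves the inequality once the regularity hypothesis is made explicit. Your final check that $\|y^i(\pmb{\Phi})\|\leq\|\bb^i(\pmb{\Phi})\|/\lambda_{\min}$ stays bounded is also apt, since the constant $L_g$ in the paper's proof of Lemma~\ref{Assumption:g_lipschitz} depends on the norm bound $B$ of the weight vectors, so the point $y^i(\pmb{\Phi}_2)$ at which you invoke Lipschitzness must be confirmed to lie in (or be accommodated by) that regime.
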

 
For notational simplicity, we let $L:=\max\{L_g, L_h, L_y\}$ and assume that $L$ is the common Lipschitz constant in Lemmas \ref{Assumption:g_lipschitz}-\ref{Assumption:y_lipschitz} in the following. 
\begin{remark}
The Lipschitz continuity of $\bh$ guarantees the existence of a solution $\pmb{\Phi}$ to the equilibrium \eqref{eq:equilibrium} for a fixed $\pmb{\theta}$, while the  Lipschitz continuity of $\bg$ and $y^i$ ensures the existence of a solution $\pmb{\theta}^i$ of \eqref{eq:equilibrium} when $\pmb{\Phi}$ is fixed.  
\end{remark}

\begin{lemma}\label{lem:gh_monotone}
      There exists a $\omega > 0$ such that $\forall \, \pmb{\Phi}$, $\pmb{\theta}$ and $\forall \, i$:
    \begin{align*}
        \langle\pmb{\Phi}-\pmb{\Phi}_0^*, \bar{\bh}(y^i(\pmb{\Phi}), \pmb{\Phi})\rangle\leq -\omega\|\pmb{\Phi}_0^*-\pmb{\Phi}\|^2,\qquad
        \left\langle\pmb{\theta}_{t}^i-y^i(\pmb{\Phi}_{t-1}), \bar{\bg}(\pmb{\theta}_{t}^i,\pmb{\Phi}_{t-1})\right\rangle\leq -\omega\|\pmb{\theta}-y^i(\pmb{\Phi})\|^2.
    \end{align*}

\end{lemma}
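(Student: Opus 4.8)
The plan is to read both inequalities as strong-monotonicity (negative-definiteness) statements on the two steady-state drift fields, and to verify each by exploiting the fact that, under linear function approximation, the TD updates $\bg$ and $\bh$ are affine in the variable being differentiated. I would dispatch the fast-timescale inequality (the one involving $\bar{\bg}$) first, since it is essentially the classical TD statement, and then turn to the slow-timescale inequality (involving $\bar{\bh}$), which is the genuinely new and harder part.

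For the $\bar{\bg}$ inequality I would substitute the TD error from \eqref{eq:TD-error} into \eqref{eq:g} and take the stationary expectation. Because $\delta$ is affine in $\pmb{\theta}^i$, this yields the affine form $\bar{\bg}(\pmb{\theta}^i,\pmb{\Phi}) = \bar{\vb}^i(\pmb{\Phi}) - \bar{\bA}^i(\pmb{\Phi})\pmb{\theta}^i$, with $\bar{\bA}^i(\pmb{\Phi}) = \mathbb{E}_{\mu^i}[\pmb{\Phi}(s)^\intercal(\pmb{\Phi}(s) - \gamma\pmb{\Phi}(s'))]$ and $\bar{\vb}^i(\pmb{\Phi}) = \mathbb{E}_{\mu^i}[r\,\pmb{\Phi}(s)^\intercal]$. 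Since $y^i(\pmb{\Phi})$ is by definition the root of $\bar{\bg}(\cdot,\pmb{\Phi})$, we have $\bar{\vb}^i(\pmb{\Phi}) = \bar{\bA}^i(\pmb{\Phi})\,y^i(\pmb{\Phi})$, hence $\bar{\bg}(\pmb{\theta}^i,\pmb{\Phi}) = -\bar{\bA}^i(\pmb{\Phi})(\pmb{\theta}^i - y^i(\pmb{\Phi}))$. The inner product then collapses to the quadratic form $-(\pmb{\theta}^i - y^i(\pmb{\Phi}))^\intercal\bar{\bA}^i(\pmb{\Phi})(\pmb{\theta}^i - y^i(\pmb{\Phi}))$, and I would invoke the classical fact (\cite{tsitsiklis1996analysis,bhandari2018finite}) that $\vx^\intercal\bar{\bA}^i(\pmb{\Phi})\vx \ge (1-\gamma)\,\mathbb{E}_{\mu^i}[(\pmb{\Phi}(s)\vx)^2] \ge (1-\gamma)\lambda_{\min}\big(\mathbb{E}_{\mu^i}[\pmb{\Phi}(s)^\intercal\pmb{\Phi}(s)]\big)\|\vx\|^2$, where the first step uses that the transition kernel is a non-expansion in the $\mu^i$-weighted norm. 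Provided the feature iterates stay in a regime where the stationary feature Gram matrix is uniformly non-degenerate (a boundedness/full-rank control on $\pmb{\Phi}$), this produces a uniform constant $\omega_1 = (1-\gamma)\lambda_0 > 0$.

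For the $\bar{\bh}$ inequality the difficulty is that, after substituting $\pmb{\theta}^i = y^i(\pmb{\Phi})$, the effective slow drift $\pmb{\Phi}\mapsto\bar{\bh}(y^i(\pmb{\Phi}),\pmb{\Phi})$ is no longer affine in $\pmb{\Phi}$: it couples through the implicit map $y^i$. Writing $\bar{\bh}(y^i(\pmb{\Phi}),\pmb{\Phi})$ out via \eqref{eq:h}, its $s$-th row equals $\mu^i(s)\,\bar{\delta}^i_{\pmb{\Phi}}(s)\,y^i(\pmb{\Phi})^\intercal$, where $\bar{\delta}^i_{\pmb{\Phi}}(s) = (\mathcal{T}^i V_{\pmb{\Phi}})(s) - V_{\pmb{\Phi}}(s)$ is the Bellman residual of the current linear value $V_{\pmb{\Phi}}(s) = \pmb{\Phi}(s)\,y^i(\pmb{\Phi})$ and $\mathcal{T}^i$ is agent $i$'s Bellman operator. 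I would establish the claim locally around the equilibrium $\pmb{\Phi}_0^*$, where this drift vanishes: linearize the composite map, compute its Jacobian using the implicit-function derivative of $y^i$ — whose boundedness is precisely the content of Lemma~\ref{Assumption:y_lipschitz} — and show that the symmetric part of the Jacobian is negative definite. The negative definiteness traces back to the $\gamma$-contraction of $\mathcal{T}^i$ in the $\mu^i$-weighted norm together with non-degeneracy of the feature Gram matrix, yielding a constant $\omega_2 > 0$; the restriction to the basin of $\pmb{\Phi}_0^*$ is exactly what accounts for the stated dependence of the equilibrium on the initialization.

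Setting $\omega = \min\{\omega_1,\omega_2\}$ would then finish the proof. I expect the slow-timescale inequality to be the main obstacle: unlike the $\bar{\bg}$ case it cannot be reduced to a single fixed matrix, the monotonicity holds only near $\pmb{\Phi}_0^*$, and the constant survives only under a uniform non-degeneracy control of the learned representation. Making the Jacobian estimate rigorous — in particular bounding the contribution of $\nabla y^i(\pmb{\Phi})$ through Lemma~\ref{Assumption:y_lipschitz} and ruling out rank collapse of $\pmb{\Phi}$ along the trajectory — is where the real work lies.
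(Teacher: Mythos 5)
Your route is genuinely different from the paper's, and it splits into two halves of very different quality. The paper's own proof is a purely qualitative bi-convexity argument: it observes that the TD loss $\tfrac12\|\pmb{\Phi}(s)\pmb{\theta}-\hat V(s)\|^2$ is quadratic, hence convex in $\pmb{\theta}$ for fixed $\pmb{\Phi}$ (Hessian $\propto \pmb{\Phi}(s)^\intercal\pmb{\Phi}(s)$) and convex in $\pmb{\Phi}(s)$ for fixed $\pmb{\theta}$ (Hessian $\propto \pmb{\theta}\pmb{\theta}^\intercal$), and then asserts that the gradient fields therefore point toward the minimizers. Note that this argument by itself only yields the weak inequality (an inner product $\le 0$); it never produces a strictly positive $\omega$. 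Your treatment of the $\bar{\bg}$ inequality goes further and is actually the more rigorous one: writing $\bar{\bg}(\pmb{\theta}^i,\pmb{\Phi})=-\bar{\bA}^i(\pmb{\Phi})\bigl(\pmb{\theta}^i-y^i(\pmb{\Phi})\bigr)$ with $\bar{\bA}^i(\pmb{\Phi})=\mathbb{E}_{\mu^i}\bigl[\pmb{\Phi}(s)^\intercal(\pmb{\Phi}(s)-\gamma\pmb{\Phi}(s'))\bigr]$ and invoking the Tsitsiklis--Van Roy bound $\vx^\intercal\bar{\bA}^i\vx\ge(1-\gamma)\lambda_{\min}\bigl(\mathbb{E}_{\mu^i}[\pmb{\Phi}(s)^\intercal\pmb{\Phi}(s)]\bigr)\|\vx\|^2$ is correct and delivers a quantitative $\omega_1$, modulo the uniform full-rank condition on the feature Gram matrix that you rightly flag as an unstated hypothesis (the lemma as written quantifies over all $\pmb{\Phi}$, including degenerate ones, so some such condition is unavoidable).

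The genuine gap is in your $\bar{\bh}$ half, and it is not merely that the Jacobian computation is left undone: the proposed route cannot work as stated. The factorization $\pmb{\Phi}\pmb{\theta}$ is invariant under $\pmb{\Phi}\mapsto\pmb{\Phi} Q$, $\pmb{\theta}\mapsto Q^{-1}\pmb{\theta}$, so the equilibria of the slow drift $\pmb{\Phi}\mapsto\bar{\bh}(y^i(\pmb{\Phi}),\pmb{\Phi})$ form a continuum -- the paper itself remarks that the root is unique only up to rotations. Consequently $\pmb{\Phi}_0^*$ is not an isolated equilibrium: perturbing $\pmb{\Phi}$ along the equilibrium manifold keeps $\bar{\bh}(y^i(\pmb{\Phi}),\pmb{\Phi})=0$ while $\|\pmb{\Phi}-\pmb{\Phi}_0^*\|>0$, so the left-hand side of the claimed inequality is $0$ and the right-hand side is strictly negative; strong monotonicity with respect to the full norm $\|\pmb{\Phi}-\pmb{\Phi}_0^*\|$ fails in exactly those tangent directions, and the symmetric part of the Jacobian you propose to bound necessarily has a kernel there, so it cannot be negative definite. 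Any rigorous repair must either measure distance to the equilibrium \emph{set} rather than to the point $\pmb{\Phi}_0^*$, or impose a normalization that breaks the rotational invariance. To be clear, the paper's own convexity assertion does not surmount this obstruction either -- it simply never confronts it -- but your linearization plan, if carried out honestly, would run directly into it, so the hard half of the lemma remains unproved in your proposal.
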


\begin{remark}
Lemma~\ref{lem:gh_monotone} guarantees the stability of the two-timescale update in \eqref{eq:local_model_update-TD} and \eqref{eq:update_phi}, and can be viewed as the monotone property of nonlinear mappings leveraged in \cite{doan2022nonlinear, chen2019performance}. 

\end{remark}


\begin{lemma}\label{lemma:mixing_time}
{Under Assumption \ref{assumption:markovian}, and Lemma~\ref{Assumption:g_lipschitz} and~\ref{Assumption:h_lipschitz},} there exist constants $C>0$, $\rho\in(0,1)$ and $L_1=\max(L_g, L_h, \max_X \bg(\pmb{\theta}^*, \pmb{\Phi}^*, X), \max_X \bh(\pmb{\theta}^*, \pmb{\Phi}^*, X))$ such that 
\[
\tau_\delta\leq \frac{\log(1/\delta)+\log(2L_1Cd)}{\log(1/\rho)} \quad \textnormal{and} \quad \lim_{\delta\rightarrow 0}\delta\tau_{\delta}=0.
\]

\end{lemma}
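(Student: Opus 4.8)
The plan is to reduce the discrepancy $\xi_t$ to the geometric total-variation bound of Assumption~\ref{assumption:markovian}, and then convert the resulting exponential decay into a bound on $\tau_\delta$ by solving an elementary inequality. First I would observe that, since $\bar{\bg}(\pmb{\theta}^i,\pmb{\Phi})=\mathbb{E}_{X\sim\mu^i}[\bg(\pmb{\theta}^i,\pmb{\Phi},X)]$ while $\mathbb{E}[\bg(\pmb{\theta}^i,\pmb{\Phi},X_t^i)\mid X_0=x]$ is the same functional taken against the conditional law $P(X_t^i\in\cdot\mid X_0=x)$, the two differ by an integral of $\bg$ against the signed measure $P(X_t^i\in\cdot\mid X_0=x)-\mu^i$. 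Bounding this gap by the sup-norm of $\bg$ times the $\ell_1$-mass of that signed measure gives
\[
\bigl\|\mathbb{E}[\bg(\pmb{\theta}^i,\pmb{\Phi},X_t^i)\mid X_0=x]-\bar{\bg}(\pmb{\theta}^i,\pmb{\Phi})\bigr\|\le 2\,d\,\Bigl(\max_{x'}\|\bg(\pmb{\theta}^i,\pmb{\Phi},x')\|\Bigr)\,d_{TV}\bigl(P(X_t^i\mid X_0=x),\mu^i\bigr),
\]
and identically for $\bh$; the dimension factor $d$ arises from bounding the $\ell_2$-norm of the vector/matrix-valued gap by its $\ell_1$-norm and applying the total-variation estimate coordinatewise.

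The second step is to control $\max_{x'}\|\bg(\pmb{\theta}^i,\pmb{\Phi},x')\|$ by an affine function of the distances to equilibrium using the global Lipschitz property of Lemma~\ref{Assumption:g_lipschitz}. Writing $\pmb{\theta}^{i,*}=y^i(\pmb{\Phi}^*)$ and adding and subtracting $\bg(\pmb{\theta}^{i,*},\pmb{\Phi}^*,x')$, Lemma~\ref{Assumption:g_lipschitz} yields
\[
\|\bg(\pmb{\theta}^i,\pmb{\Phi},x')\|\le \|\bg(\pmb{\theta}^{i,*},\pmb{\Phi}^*,x')\|+L_g\bigl(\|\pmb{\theta}^i-y^i(\pmb{\Phi}^*)\|+\|\pmb{\Phi}-\pmb{\Phi}^*\|\bigr),
\]
and since $L_1\ge\max\{L_g,\max_{x'}\|\bg(\pmb{\theta}^{i,*},\pmb{\Phi}^*,x')\|\}$ by the definition of $L_1$, the right-hand side is at most $L_1\bigl(\|\pmb{\Phi}-\pmb{\Phi}^*\|+\|\pmb{\theta}^i-y^i(\pmb{\Phi}^*)\|+1\bigr)$. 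The same computation applied to $\bh$ via Lemma~\ref{Assumption:h_lipschitz} gives the identical affine bound, which is exactly the normalization appearing on the right-hand side of Definition~\ref{def:mixing}; this is precisely why the ``$+1$'' term is present there.

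Combining the two steps with the geometric mixing bound $d_{TV}(P(X_t^i\mid X_0=x),\mu^i)\le C\rho^t$ yields, uniformly over $(\pmb{\theta}^i,\pmb{\Phi},x)$,
\[
\xi_t(\pmb{\theta}^i,\pmb{\Phi},x)\le 2L_1Cd\,\rho^t\,\bigl(\|\pmb{\Phi}-\pmb{\Phi}^*\|+\|\pmb{\theta}^i-y^i(\pmb{\Phi}^*)\|+1\bigr).
\]
Because $\rho^k$ is decreasing, the defining inequality $\xi_k\le\delta(\cdots)$ holds for all $k\ge t$ as soon as $2L_1Cd\,\rho^t\le\delta$, i.e. $t\log(1/\rho)\ge\log(1/\delta)+\log(2L_1Cd)$; taking the smallest such $t$ and the maximum over $i\in[N]$ gives $\tau_\delta\le\bigl(\log(1/\delta)+\log(2L_1Cd)\bigr)/\log(1/\rho)$. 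Multiplying by $\delta$ and letting $\delta\to0$ then gives $\delta\tau_\delta\to0$, since $\delta\log(1/\delta)\to0$ and $\delta\to0$.

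The main obstacle is the second step: one must produce an affine-in-distance bound whose constant and leading coefficients line up exactly with the ``$+1$'' normalization of Definition~\ref{def:mixing}, which forces the particular choice of $L_1$ that simultaneously dominates the Lipschitz constants and the worst-case equilibrium magnitudes $\max_X\|\bg(\pmb{\theta}^*,\pmb{\Phi}^*,X)\|$, $\max_X\|\bh(\pmb{\theta}^*,\pmb{\Phi}^*,X)\|$. A secondary care point is tracking the dimension factor $d$ correctly when passing from the scalar total-variation estimate to the $\ell_2$-norm of the vector-valued expectation gap, so that the final constant is $2L_1Cd$ as stated.
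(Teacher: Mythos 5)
Your proposal is correct and follows essentially the same route as the paper's proof: both bound $\|\bg\|,\|\bh\|$ by the affine quantity $L_1(\|\pmb{\Phi}-\pmb{\Phi}^*\|+\|\pmb{\theta}^i-y^i(\pmb{\Phi}^*)\|+1)$ via the Lipschitz lemmas, convert the expectation gap into (dimension factor $d$) $\times$ (sup-norm) $\times$ (total-variation distance) coordinatewise, invoke the geometric mixing bound of Assumption~\ref{assumption:markovian}, and solve $2L_1Cd\rho^t\le\delta$ for $t$. The only cosmetic differences are that the paper normalizes each coordinate inside the expectation before applying the TV bound (whereas you factor the sup-norm out first) and tracks separate constants $(C_1,\rho_1)$, $(C_2,\rho_2)$ for $\bh$ and $\bg$ before taking the maximum, neither of which changes the substance.
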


\subsubsection{Main Results}
We now present our main theoretical results in this work. 
\begin{theorem}\label{theorem:1}
Let $T\geq 2\tau_\delta$ for some $\delta > 0$. Suppose that the learning rates are chosen as
\[
\alpha_t=\alpha_0/(t+2)^{5/6} \quad \text{and} \quad \beta_t=\beta_0/(t+2), 
\]
where $\alpha_0\leq 1/(2L\sqrt{2(1+L^2)})$, $\beta_0\leq \omega/2$, and $L=\max\{L_g, L_h, L_y\}$.
We have
\begin{align}
    M(\{\pmb{\theta}_{T+2}^i\}, \pmb{\Phi}_{T+1}) &\leq \frac{M(\{\pmb{\theta}_{1}^i\}, \pmb{\Phi}_{0})}{(T+2)^2} + \frac{C_1}{(T+2)^{2/3}} \nonumber \\
    &+\frac{C_2}{(T+2)^{2/3}}\biggl(\mathbb{E}[\|\pmb{\Phi}_0-\pmb{\Phi}_0^*\|^2]+\frac{1}{N}\mathbb{E}\sum_{i=1}^N\|\pmb{\theta}^i_{1}-y^i(\pmb{\Phi}_0)\|^2\biggr),\label{eq:bound}
\end{align}
with $C_1=(4\alpha_0\beta_0K^2(3\delta^2(1+B^2)+L^2B^2)+2\alpha_{0}^2(3K^2B^2+3K^2\delta^2+2L^2K^2B^2)+8\alpha_{0}\beta_0\delta^2)$ and $C_2=(144\tau_{\delta}^2K^2L^2\delta^2+4L^2/N)\alpha_0\beta_0$.     
\end{theorem}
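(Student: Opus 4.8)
The plan is to track the weighted Lyapunov function $M_t := M(\{\pmb{\theta}^i_{t+1}\}, \pmb{\Phi}_t)$ from~\eqref{eq:Lyapunov} and to derive a one-step drift recursion of the form $\mathbb{E}[M_{t+1}] \le (1 - c\beta_t)\mathbb{E}[M_t] + (\text{bias} + \text{variance})$ that can then be unrolled against the prescribed step sizes. I would treat the two summands of $M_t$ on their respective timescales: the slow feature error $\|\tilde{\pmb{\Phi}}_t\|^2$ and the fast, $\beta_{t-1}/\alpha_t$-weighted weight error $\frac1N\sum_i \|\tilde{\pmb{\theta}}^i_{t+1}\|^2$, with $\tilde{\pmb{\Phi}}_t, \tilde{\pmb{\theta}}^i_t$ as in~\eqref{eq:residual}. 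The $\beta_{t-1}/\alpha_t$ weighting is exactly what makes the cross terms arising from the coupling between the two iterates cancel at leading order, following the nonlinear 2TSA template of \cite{doan2022nonlinear}.

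For the fast iterate, I would unroll the $K$-step local recursion~\eqref{eq:local_model_update-TD} and invoke the strong-monotonicity bound on $\bar{\bg}$ in \cref{lem:gh_monotone} to obtain a contraction $\|\tilde{\pmb{\theta}}^i_{t+1}\|^2 \le (1 - \Theta(\alpha_t))\|\tilde{\pmb{\theta}}^i_t\|^2 + \cdots$, where \cref{Assumption:g_lipschitz} controls the per-step deviations and the clipping $\|\pmb{\theta}^i_{t+1}\| \le B$ keeps every quantity bounded (this is where the $K^2 B^2$ and $B^2$ factors in $C_2$ originate). The coupling enters through $y^i(\pmb{\Phi}_t)$: since $\pmb{\Phi}$ moves by $O(\beta_t)$ per round, \cref{Assumption:y_lipschitz} gives $\|y^i(\pmb{\Phi}_{t+1}) - y^i(\pmb{\Phi}_t)\| = O(\beta_t)$, a term I would absorb into the feature drift. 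For the slow iterate, I would expand $\|\pmb{\Phi}_{t+1} - \pmb{\Phi}^*\|^2$ using the averaged update~\eqref{eq:update_phi} and apply the monotonicity of $\bar{\bh}$ in \cref{lem:gh_monotone} to extract the negative drift $-2\omega\beta_t\|\tilde{\pmb{\Phi}}_t\|^2$. Because the $N$ sample paths are independent, the martingale part of $\frac1N\sum_j \bh(\cdots)$ has variance of order $1/N$; this is the sole source of the linear speedup and is what produces the $1/N$ inside $C_1$.

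The hard part will be the Markovian noise: the increments $\bg, \bh$ are evaluated along sample paths $X^i_{t,k}$ rather than under the stationary $\mu^i$, so the conditional mean of each increment is biased relative to $\bar{\bg}, \bar{\bh}$. I would handle this with a look-back argument driven by the mixing time of \cref{def:mixing} and \cref{lemma:mixing_time}: conditioning on the state $\tau_\delta$ rounds earlier makes the discrepancy at most $\delta(\|\tilde{\pmb{\Phi}}\| + \|\tilde{\pmb{\theta}}^i\| + 1)$, while the Lipschitz bounds (\cref{Assumption:g_lipschitz,Assumption:h_lipschitz}) let me control the drift of the iterates accumulated over those $\tau_\delta$ rounds, which generates the $\tau_\delta^2 K^2 \delta^2$ factor in $C_1$ and the $\delta^2$ factors in $C_2$. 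Ensuring the two timescales stay genuinely separated — so that the slow feature is effectively frozen relative to the fast weights over one mixing window — relies on $\beta_t/\alpha_t$ being non-increasing with $\beta_t/\alpha_t \to 0$ (\cref{assumptions:lr}), and is the most delicate bookkeeping in the argument.

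Finally, assembling the two drifts yields a scalar recursion $\mathbb{E}[M_{t+1}] \le (1 - 2\omega\beta_t + O(\alpha_t + \beta_t^2))\mathbb{E}[M_t] + C_1'\alpha_t\beta_t\,\mathbb{E}[M_t] + C_2'\alpha_t\beta_t$, which I would unroll against $\alpha_t = \alpha_0/(t+2)^{5/6}$ and $\beta_t = \beta_0/(t+2)$. The product $\prod_s (1 - \Theta(\beta_s))$ of contraction factors produces the reported fast $(T+2)^{-2}$ decay of the initialization $M_0$, whereas summing the forcing terms $\alpha_s\beta_s \propto (s+2)^{-11/6}$ against this contraction leaves the residual $(T+2)^{-2/3}$ rate; the step-size caps $\alpha_0 \le 1/(2L\sqrt{2(1+L^2)})$ and $\beta_0 \le \omega/2$ are precisely what keep the higher-order $O(\alpha_t + \beta_t^2)$ corrections dominated by the leading negative drift. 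The iterate-size part of the Markovian bias re-injects the initial error at the slower rate, which is why it reappears multiplied by $C_1/(T+2)^{2/3}$, and the advertised $\tilde{\mathcal{O}}(N^{-2/3}(T+2)^{-2/3})$ speedup then follows by balancing the mixing-induced bias against the $1/N$ variance term through an appropriate choice of $\delta$.
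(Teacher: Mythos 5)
Your plan follows essentially the same route as the paper's proof: the same Lyapunov function \eqref{eq:Lyapunov}, a one-step drift recursion built by separately bounding the slow feature drift (negative drift extracted from the monotonicity of $\bar{\bh}$ in Lemma~\ref{lem:gh_monotone}) and the fast $K$-step local weight drift (monotonicity of $\bar{\bg}$, the Lipschitz Lemmas~\ref{Assumption:g_lipschitz}--\ref{Assumption:y_lipschitz}, and the clipping bound $B$), a mixing-time look-back (Definition~\ref{def:mixing}, Lemma~\ref{lemma:mixing_time}) to control the Markovian bias, and an unrolling against $\alpha_t=\alpha_0/(t+2)^{5/6}$, $\beta_t=\beta_0/(t+2)$; the paper's version of your product-of-contractions step is to multiply the recursion by $(t+2)^2$ and telescope, which is equivalent.

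The one substantive discrepancy is your claim that, by independence of the agents' trajectories, the martingale part of the averaged feature update has variance $O(1/N)$, and that this is ``the sole source of the linear speedup'' and of the $1/N$ inside $C_1$. That is not the paper's mechanism, and the claim needs care under Markovian noise. Writing $\bh^i$ for agent $i$'s update direction, the increment $\bh^i-\bar{\bh}^i$ is \emph{not} conditionally zero-mean given the round-$t$ filtration: it carries a Markovian bias, and for $i\neq j$ the cross terms $\mathbb{E}\langle \bh^i-\bar{\bh}^i,\,\bh^j-\bar{\bh}^j\rangle$ reduce (by conditional independence) to products of these biases, which do not average down in $N$; this is exactly why the $\delta^2$ terms in $C_2$ and the $\tau_\delta^2K^2L^2\delta^2$ term in $C_1$ carry no $1/N$ factor. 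In the paper, the $4L^2/N$ inside $C_1$ arises from deterministic norm inequalities over the agent sum (bounding $\frac{1}{N^2}\|\sum_i\bh^i\|^2$ via $\|\sum_i x_i\|^2\le N\sum_i\|x_i\|^2$ in Lemmas~\ref{lemma:term1} and~\ref{lemma:Term5}), not from any independence-based variance reduction, and the linear speedup of Corollary~\ref{cor:linearspeedp} is realized by scaling the initial step sizes with $N$ (taking $\beta_0=o(N^{-2/3})$, so that $C_1$ and $C_2$ both become $o(N^{-2/3})$ since each is proportional to $\alpha_0\beta_0$ or $\alpha_0^2$); it does not hinge on the $1/N$ inside $C_1$. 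Your martingale-plus-bias split could be made rigorous and would yield constants of the same shape (the genuinely averaged zero-mean part gets $1/N$, while the bias and the purely local $K$-step errors do not), but pursued literally --- expecting agent averaging to shrink the noise terms --- it would not reproduce $C_2$, whose $K^2B^2$ and $K^2\delta^2$ terms are per-agent local-update errors that are never averaged across agents.
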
The first term of the right-hand side of (\ref{eq:bound}) corresponds to the bias due to initialization, which goes to zero at a rate $\mathcal{O}(1\slash T^2)$. 
The second term is due to the variance of the Markovian noise. 
The third term corresponds to the accumulated estimation error of the two-timescale update.  
The second and third terms decay at a rate $\mathcal{O}(1\slash T^{2/3})$, and hence dominate the overall convergence rate in~(\ref{eq:bound}).

\begin{remark}\label{remark:thm1}
\cite{doan2022nonlinear} provided the first finite-time analysis for general nonlinear 2TSA under i.i.d. noise, and then extended it to the Markovian noise setting under the assumptions that both $\bar{\bg}$ and $\bar{\bh}$ functions are monotone in both parameters \citep{doan2021finite}. Since \cite{doan2021finite} leverages the methods from \cite{doan2022nonlinear}, it needs a detailed characterization of the covariance between the error induced by Markovian noise and the residual error of the parameters in \eqref{eq:residual}, rendering the convergence analysis much more intricate. To address this, 
we take inspiration from \citep{srikant2019finite} (which operates in the single-timescale SA setting) and use a Lyapunov drift approach to capture the evolution of two coupled parameters under Markovian noise. Characterizing the impact of a norm-scaling step further distinguishes our work.

\end{remark}

\begin{corollary}\label{cor:linearspeedp}
Suppose that $\beta_0=o(N^{-2/3})$ and that $T^2 > N$. Then, we have 
\[
M(\{\pmb{\theta}_{t+2}^i\}, \pmb{\Phi}_{t+1})\leq \mathcal{O}\left(\frac{1}{N^{2/3}(T+2)^{2/3}}\right).
\]
\end{corollary}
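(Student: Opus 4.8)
The plan is to obtain Corollary~\ref{cor:linearspeedp} as a direct specialization of the finite-time bound \eqref{eq:bound} in Theorem~\ref{theorem:1}, by fixing the free step-size constants so that every summand on the right-hand side of \eqref{eq:bound} inherits the target factor $N^{-2/3}$. Concretely, I would take $\beta_0 = o(N^{-2/3})$ as hypothesized and, in addition, scale the fast step size as $\alpha_0 = \Theta(N^{-1/3})$. This choice is admissible: for large $N$ it respects the cap $\alpha_0 \le 1/(2L\sqrt{2(1+L^2)})$ and $\beta_0 \le \omega/2$, and with the prescribed schedule one has $\beta_t/\alpha_t = (\beta_0/\alpha_0)(t+2)^{-1/6}$, which is non-increasing with limit $0$, so Assumption~\ref{assumptions:lr} and all hypotheses of Theorem~\ref{theorem:1} remain in force. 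I would then bound the three summands of \eqref{eq:bound} separately and show each is $O\!\bigl(N^{-2/3}(T+2)^{-2/3}\bigr)$.

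For the initialization term $M(\{\pmb{\theta}_1^i\},\pmb{\Phi}_0)/(T+2)^2$, I treat the numerator as an $N$-independent constant: the norm-scaling step enforces $\|\pmb{\theta}_1^i\|\le B$ and the Lyapunov function averages over agents, so $M(\{\pmb{\theta}_1^i\},\pmb{\Phi}_0)=O(1)$. The remaining task is to absorb the $(T+2)^{-2}$ rate into the target rate using the regime hypothesis $T^2>N$. The key inequality is $(T+2)^{-2}\le N^{-2/3}(T+2)^{-2/3}$, which, after multiplying through by $(T+2)^{2/3}$, is equivalent to $(T+2)^{-4/3}\le N^{-2/3}$, i.e.\ to $(T+2)^2\ge N$; this follows from $(T+2)^2>T^2>N$. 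Hence this term already decays at the desired joint rate.

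For the accumulated-error term $C_1(\cdots)/(T+2)^{2/3}$ and the variance term $C_2/(T+2)^{2/3}$, I would substitute the explicit forms of $C_1$ and $C_2$ and bound the parenthetical initialization expression by an $N$-independent constant. In $C_1 = (144\tau_\delta^2K^2L^2\delta^2 + 4L^2/N)\alpha_0\beta_0$ the prefactor $\alpha_0\beta_0 = \Theta(N^{-1/3})\cdot o(N^{-2/3}) = o(N^{-1})$ already forces $C_1 = o(N^{-2/3})$ (the explicit $1/N$ only helps), so this term is $o\!\bigl(N^{-2/3}(T+2)^{-2/3}\bigr)$. In $C_2$, the two groups carrying the factor $\alpha_0\beta_0$ are likewise $o(N^{-2/3})$ for the same reason.

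The main obstacle is the remaining group $2\alpha_0^2(3K^2B^2 + 3K^2\delta^2 + 2L^2K^2B^2)$ in $C_2$: unlike the terms in $C_1$, this is a genuine per-agent (self) variance that is \emph{not} attenuated by the $N$-fold server averaging, so it stays $\Theta(\alpha_0^2)$ and would block any speedup if $\alpha_0$ were held constant. This is exactly why the fast step size must be coupled to $N$; the choice $\alpha_0=\Theta(N^{-1/3})$ yields $\alpha_0^2=\Theta(N^{-2/3})$, making this group $O(N^{-2/3})$ and hence the whole variance term $O\!\bigl(N^{-2/3}(T+2)^{-2/3}\bigr)$. Collecting the three estimates and retaining the dominant $(T+2)^{-2/3}$ order gives $M(\{\pmb{\theta}_{T+2}^i\},\pmb{\Phi}_{T+1}) = O\!\bigl(N^{-2/3}(T+2)^{-2/3}\bigr)$. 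Throughout, I would verify that $\delta$, $\tau_\delta$, $K$, $B$, and $L$ carry no hidden $N$-dependence (with $\delta$ fixed and $\tau_\delta$ only logarithmic by Lemma~\ref{lemma:mixing_time}), so that the constants are genuinely $N$-independent and the claimed rate is clean.
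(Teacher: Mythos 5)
Your proposal is correct and follows essentially the same route as the paper: both arguments specialize the bound of Theorem~\ref{theorem:1} by letting the initial step sizes scale with $N$ so that $C_1$ and $C_2$ become $O(N^{-2/3})$, and then use $T^2>N$ to absorb the $(T+2)^{-2}$ initialization term into the target rate. The only difference is bookkeeping: the paper's proof takes $\alpha_0=\beta_0=o(N^{-1/3}K^{-1/2})$, whereas you keep the stated hypothesis $\beta_0=o(N^{-2/3})$ and add $\alpha_0=\Theta(N^{-1/3})$ --- both choices serve exactly the purpose you identify, namely suppressing the $\Theta(\alpha_0^2)$ per-agent variance term in $C_2$ that the corollary's stated hypothesis on $\beta_0$ alone cannot control.
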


\begin{remark}
Corollary~\ref{cor:linearspeedp} indicates that to attain an $\epsilon$ accuracy, it requires $T = \mathcal{O}\left(N^{-1}\epsilon^{-3/2}\right)$ steps.
In this sense, we prove that \FedTD achieves a \textit{linear convergence speedup} with respect to the number of agents $N$, i.e., the number of steps until $\epsilon$-convergence is multiplied by a factor of $N^{-1}$. In other words, we can proportionally decrease $T$ as $N$ increases. To our knowledge, this is the first linear speedup result for personalized FedRL with shared representations under Markovian noise, which is highly desirable since it implies that one can efficiently leverage the massive parallelism in large-scale systems. {Recently, \cite{shen2023towards} considered a 2TSA in a federated RL setting and achieved a convergence rate of $\mathcal{O}\left(T^{-2/5}\right)$ and thus a sample complexity of $\mathcal{O}\left(\epsilon^{-5/2}\right)$. In contrast, our method can converge quicker and enjoys a lower sample complexity, and the convergence speed matches the best-known convergence speed for non-linear 2TSA under even i.i.d. noise \citep{doan2022nonlinear}.
In addition, we note that single-timescale (SA) methods may enjoy a faster convergence speed and a lower sample complexity. However, the analysis of the 2TSA setting is more involved, as there are two parameters to be updated in a coupled and asynchronous manner.  To our knowledge, there are no existing works in the 2TSA settings that achieve the same convergence rate or sample complexity as those in the SA settings. It may be an interesting direction to investigate for the community. Finally, similar to FL settings \citep{collins2021exploiting}, the local step does not hurt the global convergence with a proper learning rate choice. } 
\end{remark}

\begin{remark}
The gradient tracking technique discussed in \cite{zeng2024fast} could potentially be effective in handling Markovian sampling and improving the current convergence rate to $\mathcal{O}(T^{-1})$. However, it is unclear if it can be applied to \FedRL, because \cite{zeng2024fast} only considers single-agent settings under i.i.d. noise. Furthermore, \cite{zeng2024fast} assumes a second-order variance bound of the stochastic function, while our analysis does not include such an assumption. Investigating these is out of the scope of this work, which already considers a very challenging setting. 
\end{remark}

\subsubsection{Intuitions and Proof Sketch}\label{sec:convergence-proof}

We highlight the key ideas and challenges behind the convergence rate analysis of \FedTD with two coupled parameters, which is an example of a federated nonlinear 2TSA.  With the defined Lyapunov function in \eqref{eq:Lyapunov}, the key is to find the drift between $M(\{\pmb{\theta}^i_{t+1}, \forall i\},\pmb{\Phi}_t)$ in the $t$-th communication round and $M(\{\pmb{\theta}^i_{t}, \forall i\},\pmb{\Phi}_{t-1})$ in the $(t-1)$-th communication round. To achieve this, we separately characterize the drift between $\pmb{\Phi}_{t+1}$ and $\pmb{\Phi}_t$, and the drift between $\pmb{\theta}^i_{t+1}$ and $\pmb{\theta}^i_t, \forall i$. We emphasize the three main challenges in characterizing the drift: (i) how to bound the stochastic gradient with Markovian samples; (ii) how to leverage the mixing time $\tau$ to handle the biased parameter updates due to Markovian noise; and (iii) how to deal with multiple local updates for the local weight vector $\pmb{\theta}^i$.

\looseness-1 By the mixing time property of MDPs, we have that the gap between the biased gradient at each time step and the true gradient can be bounded when the time step exceeds the mixing time $\tau$, as defined in Definition~\ref{def:mixing}. To characterize the effect of local updates, the key idea is to bound the gradient at the initial local step and the gradient at the final local steps, which can be done by leveraging the Lipschitz property of those gradient functions in Lemmas~\ref{Assumption:g_lipschitz},~\ref{Assumption:h_lipschitz} and~\ref{Assumption:y_lipschitz}.
See Appendix~\ref{sec:phi_drift} and Appendix \ref{sec:theta_drift} for details.  Once we establish the drift of the Lyapunov function, the remaining task is to select suitable {\textit{dynamic two-timescale learning rates}} $\{\alpha_t, \forall t\}$ and $\{\beta_t, \forall t\}$ for the weight vector update in~\eqref{eq:local_model_update-TD} and the feature representation update in~\eqref{eq:global_model_update-TD}, respectively. {See Appendix~\ref{sec:finalproof} for details.

 \begin{figure}[t]
 	\centering
 \begin{minipage}{.5\textwidth}
  \centering
 	\includegraphics[width=0.9\textwidth]{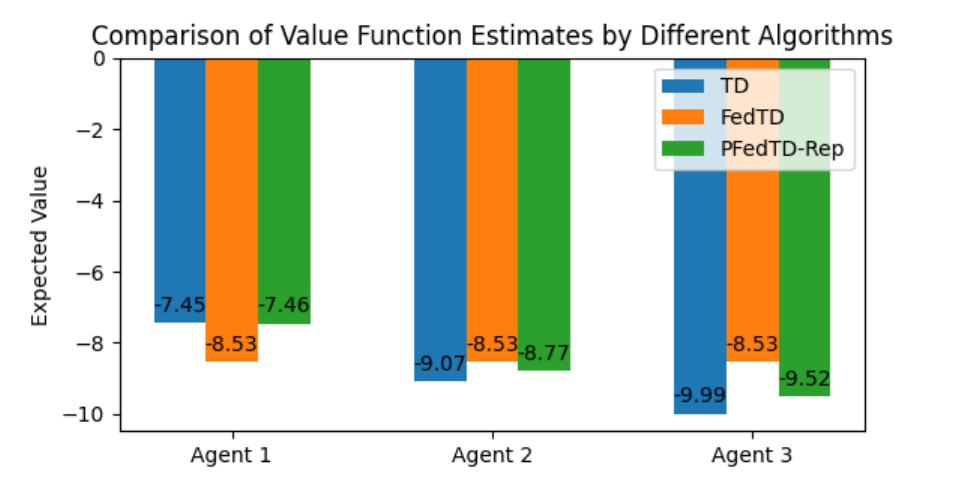}
 	\subcaption{Value function estimates. }
 	\label{fig:FedTD1}
   \end{minipage}\hfill
    \begin{minipage}{.5\textwidth}
  \centering
  \includegraphics[width=\textwidth]{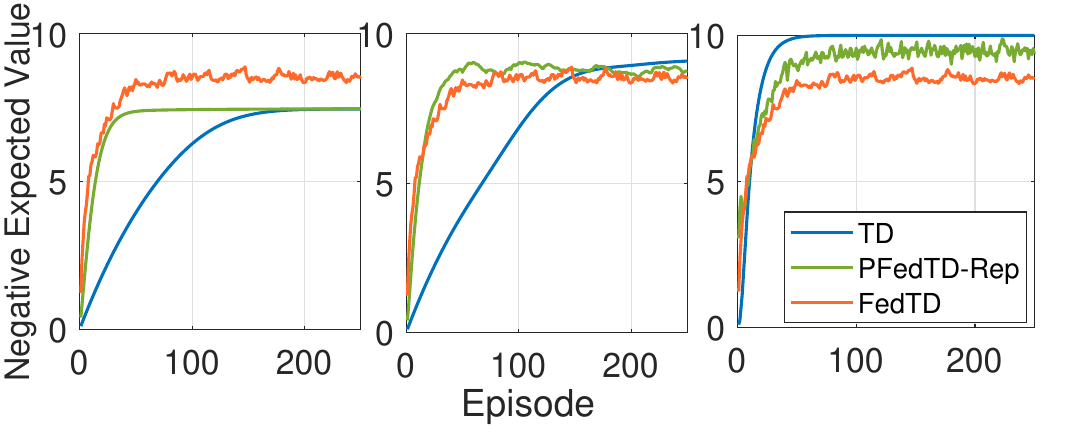}
  \subcaption{Convergence speed of value function learning. }
  \label{fig:FedTD2}
    \end{minipage}
    	\caption{Comparisons in a CliffWalking Environment with 3 agents. }
     \label{fig:FedTD}
     \vspace{-0.2in}
 \end{figure}

\subsection{Numerical Evaluation}\label{sec:sim}

We empirically evaluate the performance of \FedTD. We consider a tabular CliffWalking environment \citep{brockman2016openai} with a $4\times 12$ grid world, where 3 agents evaluate 3 different policies. The dimension for the feature representation and weight vector is set to be $6$. We compare \FedTD with (i) ``TD'': each agent independently leverages the conventional TD without communication; and (ii) ``FedTD'' without personalization \citep{khodadadian2022federated,dal2023federated} as listed in Table \ref{tal:sotas}. As shown in Figure~\ref{fig:FedTD1}, \FedTD ensures personalization among all agents while FedTD tends to converge uniformly among all agents.
Further, \FedTD attains values much closer to the ground-truth achieved by TD for each agent compared to FedTD; and \FedTD converges much faster than TD. For instance, agent 1 only needs 50 episodes to converge under \FedTD, while it takes more than 150 episodes to converge under TD, as illustrated in Figure~\ref{fig:FedTD2}.
The improved convergence performance of \FedTD further supports our theoretical findings that leveraging shared representations not only provides personalization among agents in heterogeneous environments but yield faster convergence.

\section{Application to Control Problems}\label{sec:application}
In this section, we briefly discuss how our proposed \FedRL framework can be applied to the control problems in RL. More details are provided in Appendices~\ref{alg:app} (i.e., Algorithm \ref{alg:Fed_DQN}) and~\ref{app:exp}.

\looseness-1 \textsc{PFedDQN-Rep} (an instance of \textsc{PFedRL-Rep} paired with DQN)
leverages shared representations to learn a common feature space that captures the underlying dynamics and features relevant across different but related tasks encountered by various agents. In \textsc{PFedDQN-Rep}, the target network is a critical component that provides stability to the learning process by serving as a relatively static benchmark for calculating the loss during training updates \citep{mnih2015human}. The target network's architecture mirrors that of the main network, including the shared representation model. However, its parameters are updated less frequently. This setup ensures that the calculated target values, which guide the policy updates, are based on a consistent representation of the environment's state, as encoded by the shared representation model. The synergy between the target network and the representation model is thus central to achieving stable and convergent learning.
In Line 13 of Algorithm \ref{alg:Fed_DQN}, the algorithm performs a scheduled update of the shared representation $\pmb{\Phi}$ of the main network's parameters with the guidance of the target network. In Line 18 of Algorithm \ref{alg:Fed_DQN}, every $T_{\textnormal{target}}$ steps, the algorithm performs a scheduled update of the target network's parameters by copying over the parameters from the main network. This step is essential for maintaining the stability of the learning process, as it ensures that the target values against which the policy updates are computed remain consistent and reflects the most recent knowledge encoded in the shared representation. The update frequency is carefully chosen to balance learning stability with model adaptivity.

\begin{figure}[t]
	\centering
\begin{minipage}{.5\textwidth}
 \centering
	\includegraphics[width=0.995\textwidth]{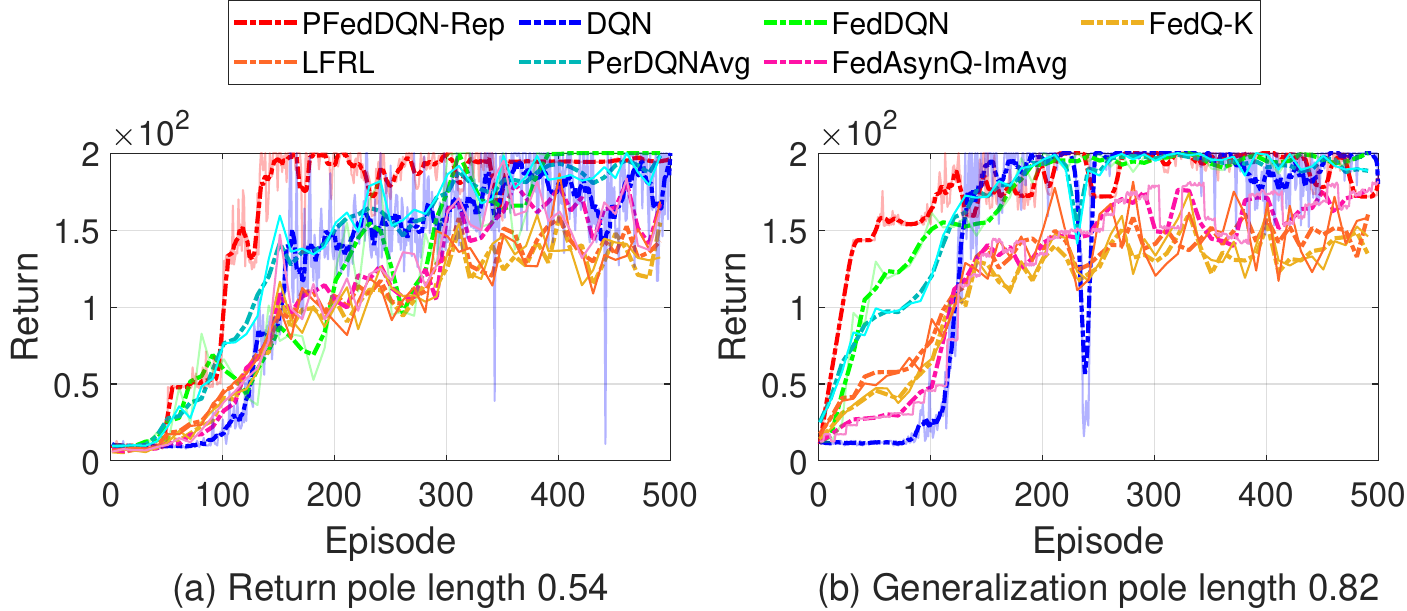}
	\subcaption{Cartpole environment.}
	\label{fig:FedDQN1}
  \end{minipage}\hfill
   \begin{minipage}{.5\textwidth}
 \centering
 \includegraphics[width=0.995\textwidth]{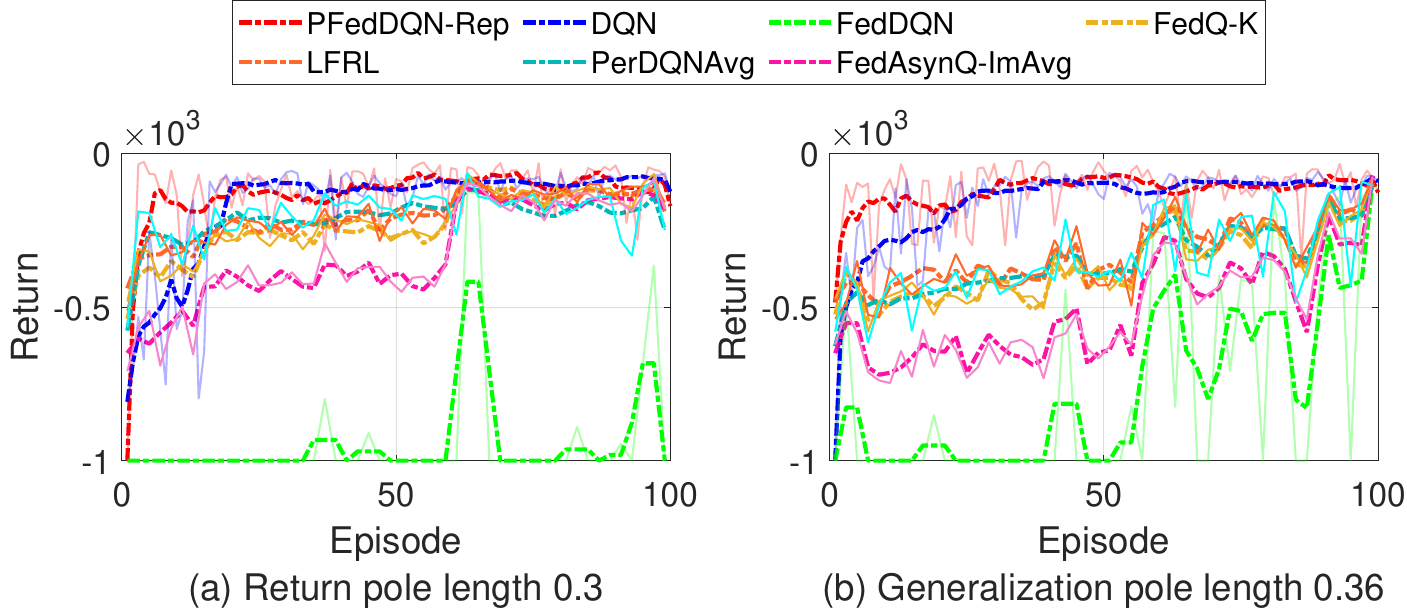}
 \subcaption{Acrobot environment.}
 \label{fig:FedDQN2}
   \end{minipage}
   		\vspace{-0.1in}
   	\caption{Comparisons in control problems.}
    \label{fig:FedDQN}
    \vspace{-0.25in}
\end{figure}

\looseness-1 \textbf{Numerical evaluation.} 
We consider a modified CartPole environment \citep{brockman2016openai} 
by changing the length of pole to create heterogenous environments \citep{jin2022federated}. Specifically, we consider 10 agents with varying pole lengths from $0.38$ to $0.74$ with a step size of $0.04$. We compare \textsc{PFedDQN-Rep} with (i) a conventional DQN where each agent learns its own environment independently; (ii) a federated version of DQN (FedDQN) that allows all agents to collaboratively learn a single policy (without personalization); (iii) two federated algorithms without personalizing, FedQ-K \citep{khodadadian2022federated} and LFRL \citep{liu2019lifelong}; and (iv) two personalized algorithms, PerDQNAvg \citep{jin2022federated} and FedAsynQ-ImAvg \citep{woo2023the}. We randomly choose one agent and present its performance in Figure~\ref{fig:FedDQN1}. We observe that our \textsc{PFedDQN-Rep} obtains larger reward than the baselines without personalization and achieves the maximal return faster than existing personalized algorithms. 
We further evaluate the effectiveness of the shared representation learned by \textsc{PFedDQN-Rep} when generalizing to a new environment. As shown in Figure~\ref{fig:FedDQN1}, \textsc{PFedDQN-Rep} generalizes quickly to the new environment. Similar observations can be made from Figure~\ref{fig:FedDQN2} using Acrobot environments (see details in Appendix~\ref{app:exp}). In summary, the significance of our \FedRL framework  lies in its superior performance in heterogeneous environments compared to existing algorithms that do not incorporate personalization. Additionally, our \FedRL framework also enables quick adaptation to new, previously unobserved environments.

\textbf{Limitations and open problems.}
In this paper, we characterize the finite-time convergence rate for \FedTD with linear feature representation. However, our analysis is not directly applicable to control problems. Consider Q-learning. The difficulty arises because
the update for Q-learning is not a linear operation with respect to the shared representation and local weights. Additionally, Q-learning is typically combined with deep neural networks, where the Q-function is approximated by a neural network as in \textsc{PFedDQN-Rep}. The complexity is further compounded in personalized federated RL frameworks, where multiple agents share a common representation  while maintaining personalized local weights. Given the promising experimental results on control, whether we can extend our theoretical results to the control setting remains an open problem.

\section*{Acknowledgements} 
This work was supported in part by the National Science Foundation (NSF) grants 2148309, 2315614 and 2337914, and was supported in part by funds from OUSD R\&E, NIST, and industry partners as specified in the Resilient \& Intelligent NextG Systems (RINGS) program. Any opinions, findings, and conclusions or recommendations expressed in this material are those of the authors and do not necessarily reflect the views of the funding agencies.

\bibliography{refs,refs_app}
\bibliographystyle{iclr2025_conference}

\appendix

\clearpage 
\section{Related Work}\label{sec:related}

\textbf{Single-agent reinforcement learning.}
RL is a machine learning paradigm that trains agents to make sequences of decisions by rewarding desired behaviors and/or penalizing undesired ones in a given environment \citep{sutton2018reinforcement}. Starting from Temporal Difference (TD) Learning \citep{sutton1988learning}, which introduced the concept of learning from the discrepancy between predicted and actual rewards through episodes, the widely used  Q-Learning \citep{watkins1992q} emerged, advancing the field with an off-policy algorithm that learns action-value functions and enables policy improvement without needing a model of the environment. Later on, the introduction of Deep Q-Networks (DQN) \citep{mnih2015human} marked a significant leap, integrating deep neural networks with Q-Learning to handle high-dimensional state spaces, thus enabling RL to tackle complex problems. Subsequently, policy-based algorithms such as Proximal Policy Optimization (PPO) \citep{schulman2017proximal} and deep Deterministic Policy Gradients (DDPG) \citep{silver2014deterministic}, leverage the Actor-Critic framework to provide more stable and robust ways to directly optimize the policy, overcoming challenges related to action space and variance. 

\looseness-1 \textbf{Federated reinforcement learning.}
\cite{jin2022federated} introduced a FedRL framework with $N$ agents collaboratively learning a policy by averaging their Q-values or policy gradients. \cite{khodadadian2022federated} provided a convergence analysis of federated TD (FedTD) and Q-learning (FedQ) when 
$N$ agents interact with homogeneous environments.  
A similar FedTD was considered in \cite{dal2023federated}, and expanded to heterogeneous environments in \cite{wang2023federated}. \cite{woo2023the} analyzed (a)synchronous variants of FedQ in heterogeneous settings, and an asynchronous actor-critic method was considered in \cite{shen2023towards} with linear speedup guarantee only under i.i.d. samples. \cite{zhang2024finite} provided a finite-time analysis of FedSARSA with linear function approximation (i.e., fixed feature representation). To facilitate personalization in heterogeneous settings, \cite{jin2022federated} proposed a heuristic personalized FedRL method where agents share a common model, but make use of individual environment embeddings.  There is also a related paper \cite{fan2021fault}, which considers a special setting where each agent can be Byzantine and suffers random failure in every round. In \cite{fan2021fault}, convergence was established based on i.i.d. noise.

\looseness-1 \textbf{Personalized federated learning (PFL).} In contrast to standard FL where a single model is learned, PFL aims to learn $N$ models specialized for $N$ local datasets. Many PFL methods have been developed, including but not limited to multi-task learning \citep{smith2017federated}, meta-learning \citep{chen2018federated}, and various personalization techniques such as local fine-tuning \citep{fallah2020personalized}, layer personalization \citep{arivazhagan2019federated}, and model compression \citep{bergou2022personalized}. Another line of work \citep{collins2021exploiting,xiong2023deprl} leveraged the common representation among agents in heterogeneous environments to guarantee personalized models for federated supervised learning.

\textbf{Representation learning in MDP.} Representation learning aims to transform high-dimensional observation to low-dimensional embedding to enable efficient learning, and has received increasing attention in Markov decision processs (MDP) settings, such as linear MDPs \citep{jin2020provably}, low-rank MDPs \citep{modi2021model,agarwal2020flambe} and block MDPs \citep{zhang2022efficient}. However, it is open in the context of leveraging representation learning in PFedFL. In this work, we prove that representation augmented PFedFL 
 forms a general framework as a federated two-timescale stochastic approximation with Markovian noise, which differs significantly from existing works, and hence necessitates different proof techniques.

\textbf{Multi-agent reinforcement learning versus federated reinforcement learning.} The advent of Multi-Agent Reinforcement Learning (MARL) expanded RL's applications, allowing multiple agents to learn from interactions in cooperative, competitive, or mixed settings, opening new avenues for complex applications and research \citep{zhang2021multi}. Multi-agent Reinforcement Learning (MARL) addresses scenarios where multiple agents operate within a shared or interrelated environment, potentially engaging in both cooperative and competitive behaviors. The complexity arises from each agent needing to consider the strategies and actions of others, making the learning process highly dynamic. Federated Reinforcement Learning (FedRL) \citep{qi2021federated}, contrasts with MARL by focusing on privacy-preserving, distributed learning across agents that do not share their raw data. Instead, these agents might contribute towards a centralized learning model without compromising individual data privacy, addressing the unique challenges of learning from decentralized data sources.

\section{Pseudocode of \FedTD}\label{sec:fedTD}

In this section, we present the pseudocode of  \textsc{PFedQ-Rep} as summarized in Algorithm \ref{alg:FedTD-Rep}.

\begin{algorithm}[t]
\caption{\FedTD} 
	\label{alg:FedTD-Rep}
	
	\begin{algorithmic}[1]
 \STATE \textbf{Input:} Sampling policy $\pi^i, \forall i\in[N]$;
 \STATE Initialize $\pmb{\theta}_0^i=\pmb{0},$ $S_0^i,\forall i\in[N]$, and randomly generate $\pmb{\Phi}\in\mathbb{R}^{|\cS|\times d}$  with each row being unit-norm vector;
		\FOR{$t=0,1,...,T-1$}
  \FOR {$i=1,\ldots,N$}
  \FOR {$k=1,\ldots, K$}
        \STATE Sample observations $X_{t, k-1}^i$;
        \STATE 
        Set 
        $\pmb{\theta}_{t,k}^i=\pmb{\theta}_{t,k-1}^i+\alpha_t \bg(\pmb{\theta}_{t,k-1}^i,  \pmb{\Phi}_t, X_{t,k-1}^i)$;
        \ENDFOR
        \STATE Scale $\|\pmb{\theta}_{t+1}^i\|$ to $B$ if  $\|\pmb{\theta}_{t+1}^i\|>B$, otherwise keep it unchanged;
         \STATE 
         Set $\pmb{\Phi}_{t+1/2}^i=\pmb{\Phi}_{t}+\beta_t \bh(\pmb{\theta}_{t+1}^i, \pmb{\Phi}_t, \{X_{t,k-1}^i\}_{k=1}^K)$;
        \STATE Normalize $\pmb{\Phi}_{t+1/2}^i$ as $\pmb{\Phi}_{t+1/2}^i\leftarrow \frac{\pmb{\Phi}_{t+1/2}^i}{\|\pmb{\Phi}_{t+1/2}^i\|}$;

        \ENDFOR
       
        \STATE $\pmb{\Phi}_{t+1}=\frac{1}{N}\sum_{i=1}^N\pmb{\Phi}_{t+1/2}^i$. 
        
		\ENDFOR
	\end{algorithmic}
\end{algorithm}

\section{Application to Control Tasks in RL}\label{sec:control}

The Q-function of agent $i$ in environment $\cM^i$ under policy $\pi^i$ is defined as $Q^{i, \pi^i}(s,a)= \mathbb{E}_{\pi^i}\left[\sum_{k=0}^\infty \gamma^k R^i(s^i_k, a^i_k)|s^i_0 = s, a^i_0=a\right].$
When the state and action spaces are large, it is computationally infeasible to store $Q^{i,\pi^i}(s,a)$ for all state-action pairs.  One way to deal with is to approximate the Q-function as $Q^{i,\pi^i}(s, a) \approx \pmb{\Phi}(s,a)\pmb{\theta}$, where $\pmb{\Phi}\in\mathbb{R}^{|\cS|\times|\cA|\times d}$ is a feature representation corresponding to state-actions, and $\pmb{\theta}\in\mathbb{R}^d$ is a low-dimensional unknown weight vector. When $\pmb{\Phi}$ is given and known, this falls under the paradigm of RL or FedRL with function approximation.

\subsection{Preliminaries: Control in Federated Reinforcement Learning}
Another task in RL is to search for an optimal policy, which is called \textit{a control problem}, and one commonly used approach is Q-learning \citep{watkins1992q}. Under the FedRL framework, the goal of a control problem is to let $N$ agents collaboratively learn a policy $\pi^*$ that performs uniformly well across $N$ different environments, i.e., $\pi^*=\arg\max_\pi  \frac{1}{N} \sum_{i=1}^{N}\mathbb{E}_{\pi^i} \big[ V^{i,\pi^i}(s_0^i)|s_0^i\sim d_0 \big]$, where \( d_0 \) is the common initial state distribution in these $N$ environments. Similar to~\eqref{eq:FedTD_cov}, this can be formulated as the optimization problem in~\eqref{eq:FedQ_cov} to collaboratively learn a common (non-personalized) weight vector $\pmb{\theta}\equiv\pmb{\theta}^i, \forall i\in[N]$ when the feature representation $\pmb{\Phi}(s,a), \forall s,a$ are given.
 \begin{align}\label{eq:FedQ_cov}
   \hspace{-0.3cm} \cL( \pmb{\theta}):=  \min_{\pmb{\theta}}\frac{1}{N}\sum_{i=1}^N\mathbb{E}_{\substack{s\sim\mu^{i,\pi^*}\\a\sim\pi^*(\cdot|s)}} \left\|\pmb{\Phi}(s,a)\pmb{\theta}-Q^{i,\pi^*}(s,a)\right\|^2.
 \end{align}
Again, we use the superscript $i$ to highlight heterogeneous environments $P^i$ among agents.

\subsection{Control in Personalized FedRL with Shared Representations}

The control problem in~\eqref{eq:FedQ_cov} aims to learn $\pmb{\Phi}$ and $\{\pmb{\theta}^i,\forall i\}$ simultaneously among all $N$ agents via solving the following optimization problem: 
\begin{align}
\cL(\pmb{\Phi}, \{\pmb{\theta}^i, \forall i\}) := \min_{\pmb{\Phi}} \frac{1}{N}\sum_{i=1}^N \min_{\{\pmb{\theta}^i,\forall i\}} \mathbb{E}_{\substack{s\sim\mu^{i,\pi^{i,*}}\\a\sim\pi^{i,*}(\cdot|s)}}\left\|f^i(\pmb{\theta}^i, \pmb{\Phi}(s,a))-Q^{i,\pi^{i,*}}(s,a)\right\|^2.
\end{align} 

\subsection{Algorithms}\label{alg:app}

In this subsection, we present two realizations of our proposed \FedRL in Algorithm \ref{alg:FedRL-Rep},
one is \textsc{PFedQ-Rep} as summarized in Algorithm \ref{alg:Fed_Q}, federated Q-learning with shared representations, and the other is \textsc{PFedDQN-Rep} as outlined in Algorithm \ref{alg:Fed_DQN}, federated DQN with shared representations.

\begin{algorithm}[h]
\caption{\textsc{PFedQ-Rep}} 
	\label{alg:Fed_Q}
	\textbf{Input:} Sampling policy $\pi^i, \forall i\in[N]$.
	\begin{algorithmic}[1]
  \STATE Initialize $\pmb{\theta}_0^i=\pmb{0}, $ and $s_0^i,\forall i\in[N]$, and randomly generate $\pmb{\Phi}\in\mathbb{R}^{|\cS||\cA|\times d}$  with each row being unit-norm vector.
		\FOR{$t=0,1,...,T-1$}
  \FOR {$i=1,\ldots,N$}
  \FOR {$k=1,\ldots, K$}
        \STATE Sample observations $X_{t, k-1}^i=(s_{t,k}^i, s_{t, k-1}^i, a^i_{t, k-1})$;
         \STATE With fixed $\pmb{\Phi}_t$, update $\pmb{\theta}_{t, k}^i\leftarrow \pmb{\theta}_{t,k-1}^i+\alpha_t\cdot(r_{t,k-1}^i+\gamma\max_a \pmb{\Phi}_t(s_{t, k+1}^i, a)\pmb{\theta}_{t,k-1}^i-\pmb{\Phi}_t(s_{t, k-1}^i)\pmb{\theta}_{t, k-1}^i)\cdot\pmb{\Phi}_t(s_{t,k-1}^i, a_{t, k-1}^i)$;
         \ENDFOR
         \STATE Scale $\|\pmb{\theta}_{t+1}^i\|$ to $B$ if  $\|\pmb{\theta}_{t+1}^i\|>B$, otherwise keep it unchanged.
        \IF{$(s,a)\in X_{t,k}^i, \exists k\in\{0,\ldots, K-1\}$} 
        \STATE  Update $\pmb{\Phi}^i_{t+1/2}(s,a) = \pmb{\Phi}_t^i(s,a)+\beta_t(r(s,a)+\gamma\max_a \pmb{\Phi}_t(s^\prime,a)\pmb{\theta}_{t+1}^i-\pmb{\Phi}_t(s,a)^\intercal\pmb{\theta}_{t+1}^i)\cdot\pmb{\theta}_{t+1}^i$;
        \ELSE
        \STATE $\pmb{\Phi}^i_{t+1/2}(s,a) = \pmb{\Phi}_{t}^i(s,a)$;

        \ENDIF
 \STATE Normalize $\pmb{\Phi}_{t+1/2}^i$ as $\pmb{\Phi}_{t+1/2}^i\leftarrow \frac{\pmb{\Phi}_{t+1/2}^i}{\|\pmb{\Phi}_{t+1/2}^i\|}$;
        \ENDFOR
      
        \STATE $\pmb{\Phi}_{t+1}\leftarrow \frac{1}{N}\sum_{i=1}^N\pmb{\Phi}_{t+1/2}^i, \forall i\in[N]$. 
       
		\ENDFOR
	\end{algorithmic}
\end{algorithm}

\begin{algorithm}[h]
	\caption{\textsc{PFedDQN-Rep}} 
	\label{alg:Fed_DQN}
	\textbf{Initialize:} The parameters $(\pmb{\Phi}, \pmb{\theta}^i)$ for each Q network $Q^i(s, a)$,   the replay buffer $\cR^i$, and copy the  same parameter from Q network to initialize the target Q network $Q^{i, \prime}(s, a)$ for agent $i, \forall i\in[N]$; 
	\begin{algorithmic}[1]
 \FOR{episode $e=1,\ldots, E$}
 \STATE Get the initial state of the environment;
		\FOR{$t=0,1,...,T-1$}
  \FOR {$i=1,\ldots,N$}
   \FOR {$k=1, \ldots, K$}
        \STATE Select action $a_{t,k-1}$ according to $\epsilon$-greedy policy with the current network $Q^i(s_{t,k-1},a)$;
        \STATE Execute action $a_{t,k-1}$, receive the reward $r(s_{t,k-1}, a_{t,k-1})$, and the environment state transits to $s_{t, k}$; 
        \STATE Store the tuple $(s_{t,k-1}, a_{t,k-1}, r(s_{t,k-1}, a_{t,k-1}, s_{t, k})$ into the replay buffer $\cR^i$;
        \STATE Sample $N$ data tuples from the replay buffer $\cR^i$;
        \STATE Update the local weight $\pmb{\theta}^i(t,k)$ by minimizing the loss compared with the target network $Q^{i,\prime}$ with fixed representation $\pmb{\Phi}_t$; 
		
        \ENDFOR
        \STATE Sample $N$ data tuples from replay buffer $\cR^i$;
        \STATE Update representation model locally by minimizing the loss compared with the target network $Q^{i,\prime}$ with fixed weights $\pmb{\theta}_{t+1}$,  and yield $\pmb{\Phi}^i_{t+1/2}$;
       
		\ENDFOR
    \STATE Average the representation model from all agents, i.e., $\pmb{\Phi}_{t+1}:=\frac{1}{N}\sum_{i=1}^N \pmb{\Phi}_{t+1/2}^i$;
  \ENDFOR
  \IF{$mod(t, T_{target})=0$}
  \STATE update the target network $Q^{i,*}$ be copy the up-to-date parameters of Q network $Q^i$, $\forall i\in[N]$;
  \ENDIF
  \ENDFOR
	\end{algorithmic}
\end{algorithm}

\section{Figure Illustrations}\label{sec:illustration-app}
We present some figures to further highlight the proposed personalized FedRL (\PFRL) framework with shared representations.

\textbf{Schematic framework of conventional FedRL.}
We begin by introducing the conventional FedRL  framework \citep{khodadadian2022federated}, where 
$N$ agents collaboratively learn a common policy (or optimal value functions) via a server while engaging with homogeneous environments. Each agent generates independent Markovian trajectories, as depicted in Figure \ref{fig:FRL}.
\begin{figure}[h]
	\centering
\includegraphics[width=0.55\textwidth]{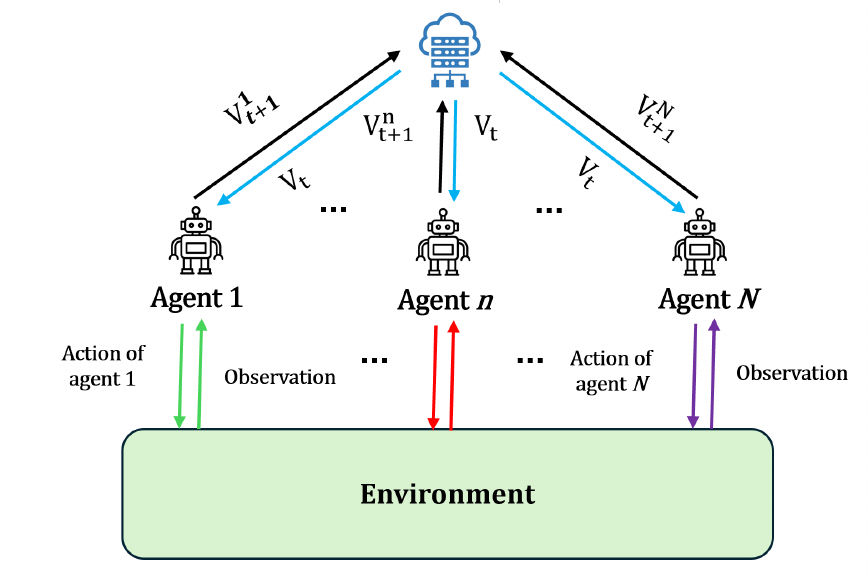}
	\vspace{-0.1in}
	\caption{Schematic representation of FedRL, where $N$ agents interact with homogeneous environments.}
	\vspace{-0.1in}
	\label{fig:FRL}
\end{figure}

\textbf{Schematic framework for our proposed \PFRL with shared representations.}
We introduce our proposed personalized FedRL (\PFRL) framework with shared representations in Figure \ref{fig:FRL-Rep}. 
In \PFRL,  $N$ agents independently interact with their own environments and execute actions according to their individual RL component parameterized by $\pmb{\Phi}$ and $\pmb{\theta}^i$. Each  agent $i$ performs local update on its local weight vector $\pmb{\theta}_i$, while jointly updating the global shared feature representation $\pmb{\Phi}$ through the server. Similarly, the update follows the Markovian trajectories.

\begin{figure*}[h]
	\centering
\includegraphics[width=0.85\textwidth]{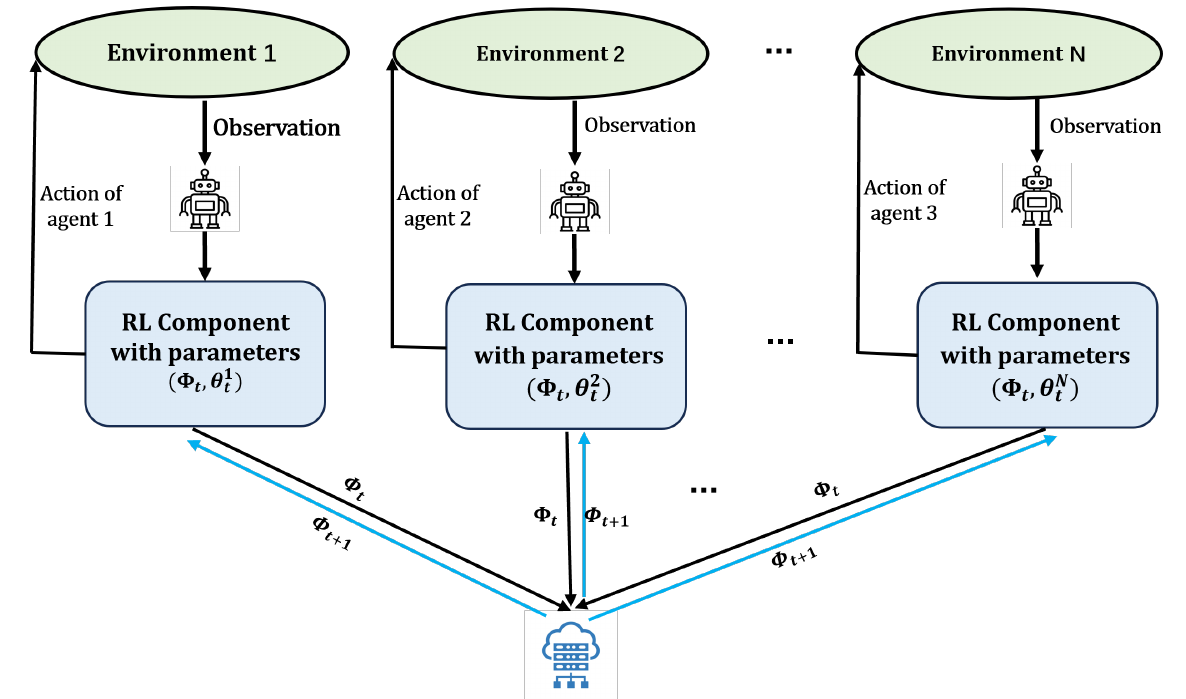}
	\vspace{-0.1in}
	\caption{Our proposed \FedRL framework where $N$ agents independently interact with their own environments and take actions according to their individual RL component  parameterized by $\pmb{\Phi}$ and $\pmb{\theta}^i$. Agent $i$ locally update weight vector $\pmb{\theta}_i$ while jointly updating the shared feature representation $\pmb{\Phi}$ through the central server. The update follows the Markovian trajectories. }
	\vspace{-0.1in}
	\label{fig:FRL-Rep}
\end{figure*}

\textbf{Motivation of personalized FedRL.} In the following, we also want to provide some examples showing that the conventional FedRL framework may fail, as depicted in Figure \ref{fig:examples}.
In Figure \ref{fig:TD}, we provide an example where three agents assess distinct policies within the same environment. In the traditional FedRL framework, agents exchange the evaluated value functions via a central server, leading to a unified consensus on value functions for three different policies. This enforced consensus on value functions, despite the diversity in policies, is not optimal. In another scenario depicted in Figure \ref{fig:EnvHetero}, three agents each interact with their unique environments. The objective for each agent is to learn an optimal policy tailored to its specific environment. However, within the traditional FedRL framework, the central server mandates a uniform policy across all three agents, which clearly contradicts the intended goal of achieving environment-specific optimization. This highlights the necessity for personalized decision-making, a feature that conventional FedRL frameworks do not accommodate.
\begin{figure*}[h]
\centering
\begin{minipage}{.45\textwidth}
\centering
\includegraphics[width=1\columnwidth]{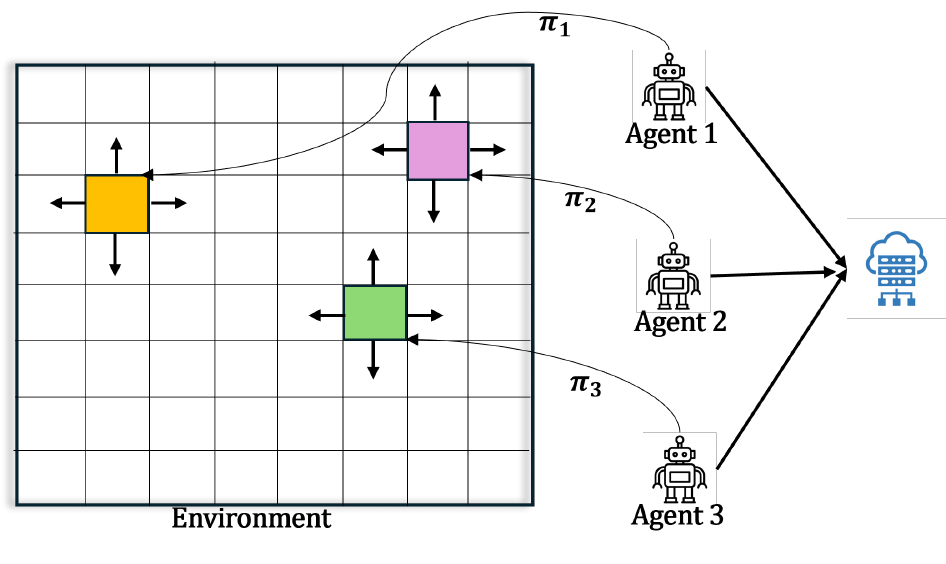}
\subcaption{Agents evaluate difference policies in the same environment.}
 \label{fig:TD}
\end{minipage}\hfill
\begin{minipage}{.49\textwidth}
\centering
\includegraphics[width=1\columnwidth]{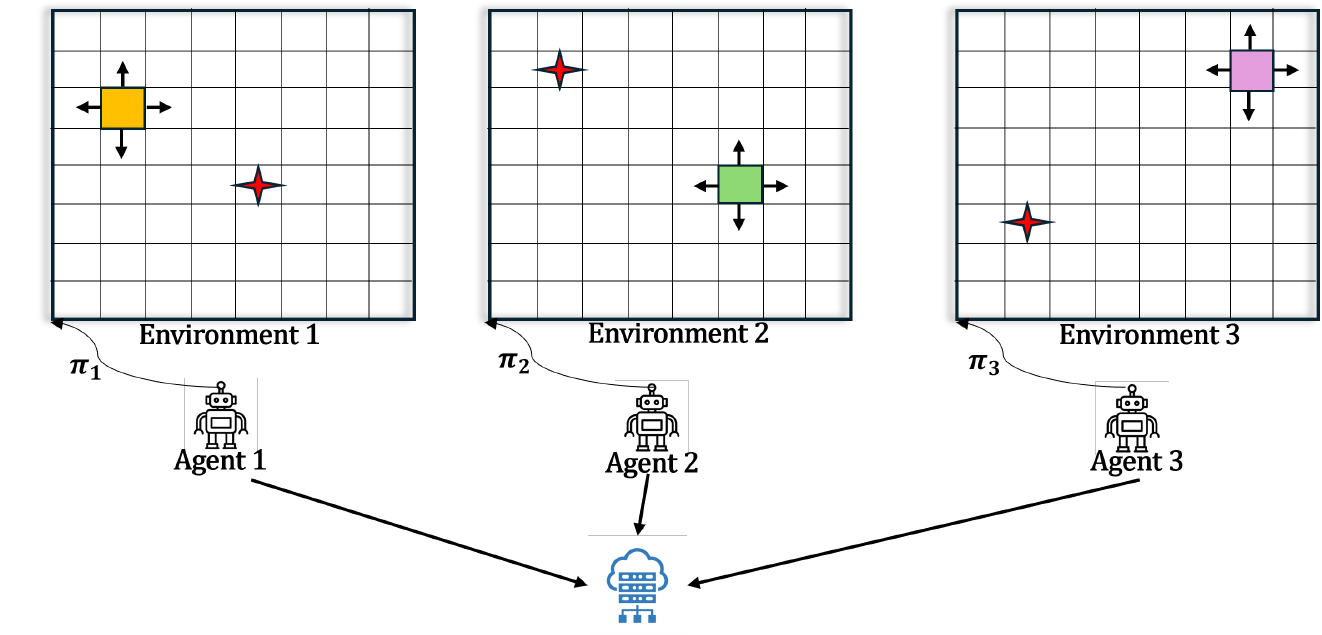}
\subcaption{Agents learn optimal policies for heterogeneous environments.}
 \label{fig:EnvHetero}
\end{minipage}
\vspace{-0.1in}
\caption{\textit{An illustrative example with three agents that demonstrates the conventional FedRL framework fails to work.}}
        \label{fig:examples}
\end{figure*}

\textbf{Example of RL components that fit the proposed \PFRL with shared representations.} In the following, we aim to showcase examples of RL components that are compatible with our proposed \PFRL framework featuring shared representations. An illustrative example of this framework is presented in Figure \ref{fig:RL_component}. It is important to note that both the DQN architecture in Figure \ref{fig:DQN} and the policy gradient (PG) approach in Figure \ref{fig:PG} seamlessly integrate into our proposed framework. This integration is achieved by designating the parameters of the feature extraction network as the shared feature representation 
$\pmb{\Phi}$, and the parameters of the fully connected network, which either predict the Q-values or determine the policy, as the local weight vector 
$\pmb{\theta}$. This arrangement underscores the adaptability of our framework to various RL methodologies, facilitating personalized learning while maintaining a common foundation of shared representations.
\begin{figure*}[h]
\centering
\begin{minipage}{.49\textwidth}
\centering
\includegraphics[width=1\columnwidth]{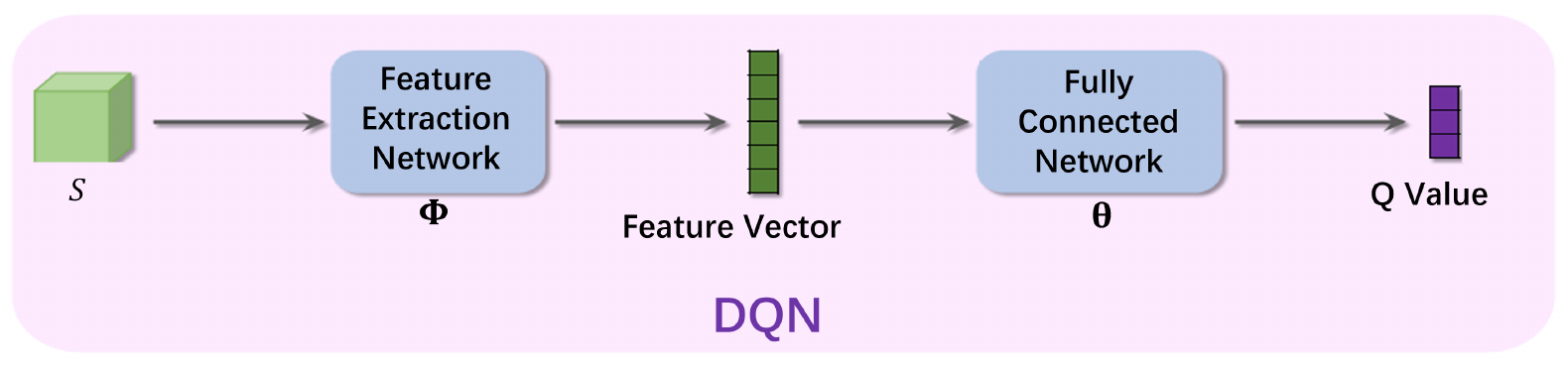}
\subcaption{When DQN meets the proposed framework.}
 \label{fig:DQN}
\end{minipage}\hfill
\begin{minipage}{.49\textwidth}
\centering
\includegraphics[width=1\columnwidth]{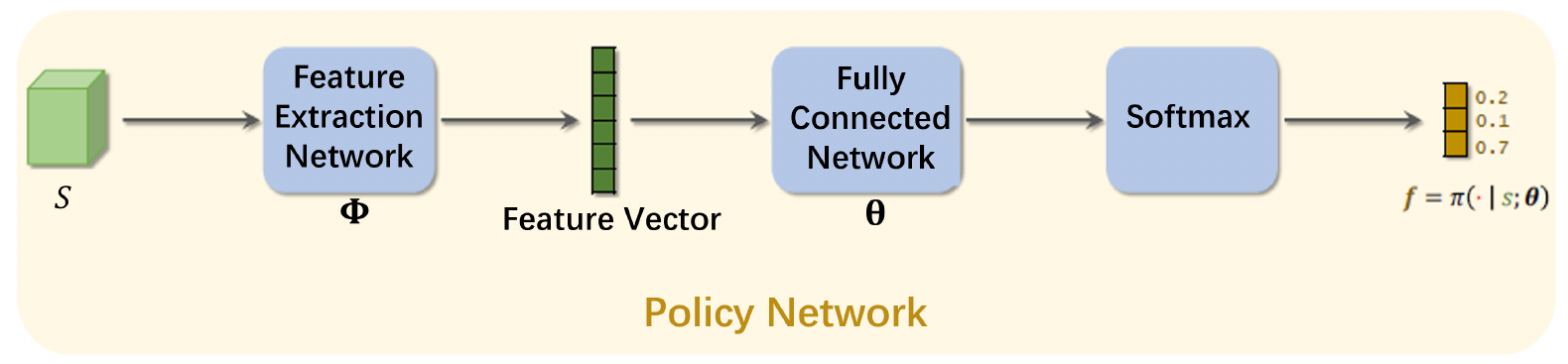}
\subcaption{When PG meets the proposed framework.}
 \label{fig:PG}
\end{minipage}
\caption{\textit{An illustrative example for the proposed framework. Notice that both the DQN in (a) and policy gradient (PG) in (b) can be fitted into the proposed framework by treating the parameters of the feature extraction network as the shared feature representation  $\pmb{\Phi}$ and the parameter of the fully connected network which maps to the Q value of policy as the local weight vector $\pmb{\theta}$.}}
        \label{fig:RL_component}
\end{figure*}

\section{Proof of Lemmas in Section \ref{sec:convergence}}\label{sec:appendixD}

\subsection{Proof of Lemma \ref{Assumption:g_lipschitz}}

\begin{proof}

Recall that for any observation $X=(s, a, s^\prime)$, the function $\bg(\pmb{\theta}, \pmb{\Phi}, X)$ defined in \eqref{eq:g} is expressed as
\begin{align*}
    \bg(\pmb{\theta}, \pmb{\Phi}, X)&:= (r(s,a)+\gamma \pmb{\Phi}(s^\prime) \pmb{\theta}-\pmb{\Phi}(s)\pmb{\theta})\cdot\pmb{\Phi}(s)^\intercal,
\end{align*}
and hence we have the following inequality for any parameter pairs $(\pmb{\theta}_1, \pmb{\Phi}_1)$ and $(\pmb{\theta}_2, \pmb{\lambda}_2)$ with $X=(s,a,s^\prime)$,
\begin{align*}
   &\| \bg(\pmb{\theta}_1, \pmb{\Phi}_1, X)- \bg(\pmb{\theta}_2, \pmb{\Phi}_2, X)\|\allowdisplaybreaks\\
   &=\left\|(r(s,a)+\gamma \pmb{\Phi}_1(s^\prime)\pmb{\theta}_1-\pmb{\Phi}_1(s)\pmb{\theta}_1)\cdot\pmb{\Phi}_1(s)^\intercal-(r(s,a)+\gamma \pmb{\Phi}_2(s^\prime) \pmb{\theta}_2-\pmb{\Phi}_2(s)\pmb{\theta}_2)\cdot\pmb{\Phi}_2(s)^\intercal\right\|\allowdisplaybreaks\\
    &\overset{(a_1)}{\leq} \left\|(\gamma \pmb{\Phi}_1(s^\prime) \pmb{\theta}_1-\pmb{\Phi}_1(s)\pmb{\theta}_1)\cdot\pmb{\Phi}_1(s)^\intercal-(\gamma \pmb{\Phi}_1(s^\prime) \pmb{\theta}_2-\pmb{\Phi}_1(s)\pmb{\theta}_2)\cdot\pmb{\Phi}_1(s)^\intercal\right\|\nonumber\allowdisplaybreaks\\
    &\qquad\qquad\qquad\qquad+\left\|(\gamma \pmb{\Phi}_1(s^\prime) \pmb{\theta}_2-\pmb{\Phi}_1(s)\pmb{\theta}_2)\cdot\pmb{\Phi}_1(s)^\intercal-(\gamma \pmb{\Phi}_2(s^\prime) \pmb{\theta}_2-\pmb{\Phi}_2(s)\pmb{\theta}_2)\cdot\pmb{\Phi}_2(s)^\intercal\right\|\allowdisplaybreaks\\
   &\overset{(a_2)}{\leq} \left\|(\gamma \pmb{\Phi}_1(s^\prime) \pmb{\theta}_1-\pmb{\Phi}_1(s)\pmb{\theta}_1)-(\gamma \pmb{\Phi}_1(s^\prime) \pmb{\theta}_2-\pmb{\Phi}_1(s)\pmb{\theta}_2)\right\|\cdot\left\|\pmb{\Phi}_1(s)\right\|\allowdisplaybreaks\\
   &\qquad\qquad\qquad\qquad+\left\|(\gamma \pmb{\Phi}_1(s^\prime) \pmb{\theta}_2-\pmb{\Phi}_1(s)\pmb{\theta}_2)\cdot\pmb{\Phi}_1(s)^\intercal-(\gamma \pmb{\Phi}_2(s^\prime) \pmb{\theta}_2-\pmb{\Phi}_2(s)\pmb{\theta}_2)\cdot\pmb{\Phi}_2(s)^\intercal\right\|\allowdisplaybreaks\\
   &\overset{(a_3)}{\leq} (1+\gamma)\left\|\pmb{\theta}_1-\pmb{\theta}_2\right\|+\left\|(\gamma \pmb{\Phi}_1(s^\prime) \pmb{\theta}_2-\pmb{\Phi}_1(s)\pmb{\theta}_2)\cdot\pmb{\Phi}_1(s)^\intercal-(\gamma \pmb{\Phi}_2(s^\prime) \pmb{\theta}_2-\pmb{\Phi}_2(s)\pmb{\theta}_2)\cdot\pmb{\Phi}_2(s)^\intercal\right\|\allowdisplaybreaks\\
   &\overset{(a_4)}{\leq} (1+\gamma)\left\|\pmb{\theta}_1-\pmb{\theta}_2\right\|+\left\|(\gamma \pmb{\Phi}_1(s^\prime) \pmb{\theta}_2-\pmb{\Phi}_1(s)\pmb{\theta}_2)\cdot\pmb{\Phi}_1(s)^\intercal-(\gamma \pmb{\Phi}_1(s^\prime) \pmb{\theta}_2-\pmb{\Phi}_1(s)\pmb{\theta}_2)\cdot\pmb{\Phi}_2(s)^\intercal\right\|\allowdisplaybreaks\\
   &\qquad\qquad\qquad\qquad+\left\|(\gamma \pmb{\Phi}_1(s^\prime) \pmb{\theta}_2-\pmb{\Phi}_1(s)\pmb{\theta}_2)\cdot\pmb{\Phi}_2(s)^\intercal-(\gamma \pmb{\Phi}_2(s^\prime) \pmb{\theta}_2-\pmb{\Phi}_2(s)\pmb{\theta}_2)\cdot\pmb{\Phi}_2(s)^\intercal\right\|\allowdisplaybreaks\\
   &\overset{(a_5)}{\leq} (1+\gamma)\left\|\pmb{\theta}_1-\pmb{\theta}_2\right\|+\left\|(\gamma \pmb{\Phi}_1(s^\prime) \pmb{\theta}_2-\pmb{\Phi}_1(s)\pmb{\theta}_2)\Big\|\cdot\Big\|\pmb{\Phi}_1(s)-\pmb{\Phi}_2(s)\right\|\allowdisplaybreaks\\
   &\qquad\qquad\qquad\qquad+\left\|(\gamma \pmb{\Phi}_1(s^\prime) \pmb{\theta}_2-\pmb{\Phi}_1(s)\pmb{\theta}_2)-(\gamma \pmb{\Phi}_2(s^\prime)\pmb{\theta}_2-\pmb{\Phi}_2(s)\pmb{\theta}_2)\right\|\cdot\left\|\pmb{\Phi}_2(s)\right\|\allowdisplaybreaks\\
   &\overset{(a_6)}{\leq} (1+\gamma)\left\|\pmb{\theta}_1-\pmb{\theta}_2\Big\|+\Big\|(\gamma \pmb{\Phi}_1(s^\prime) \pmb{\theta}_2-\pmb{\Phi}_1(s)\pmb{\theta}_2)\right\|\cdot\left\|\pmb{\Phi}_1(s)-\pmb{\Phi}_2(s)\right\|\allowdisplaybreaks\\
   &\qquad\qquad\qquad\qquad+\left\| \pmb{\Phi}_1(s^\prime)- \pmb{\Phi}_2(s^\prime)\right\|\cdot\left\|\gamma\pmb{\theta}_2\right\|+\left\|\pmb{\Phi}_1(s)-\pmb{\Phi}_2(s)\right\|\cdot\left\|\pmb{\theta}_2\right\|\allowdisplaybreaks\\
   &\leq (1+\gamma)\left\|\pmb{\theta}_1-\pmb{\theta}_2\right\|+(2+2\gamma)\left\|\pmb{\theta}_2\right\|\cdot\left\|\pmb{\Phi}_1-\pmb{\Phi}_2\right\|\allowdisplaybreaks\\
   &\overset{(a_7)}{\leq} L_g\left(\|\pmb{\theta}_1-\pmb{\theta}_2\|+\left\|\pmb{\Phi}_1-\pmb{\Phi}_2\right\|\right),
\end{align*}

$(a_1)$ is due to the fact that $\|\bx+\by\|\leq \|\bx\|+\|\by\|, \forall \bx,\by\in\mathbb{R}^{d}$, $(a_2)$ holds due to $\|\bx\cdot\by\|\leq \|\bx\|\cdot\|\by\|, \forall \bx,\by\in\mathbb{R}^{d}$, $(a_3)$ comes from the fact and $\|\pmb{\Phi}_1(s)\|\leq 1, \|\pmb{\Phi}_2(s)\|\leq 1 \forall s.$ 
$(a_4)-(a_6)$ holds for the same reason as $(a_1)-(a_3)$. The last inequalty $(a_7)$ comes from the fact that $\pmb{\theta}$ is bounded by norm $B$ and by setting $L_g:=\max(1+\gamma, (2+2\gamma)B)$.
\end{proof}

\subsection{Proof of Lemma \ref{Assumption:h_lipschitz}}

\begin{proof}

Recall that for any observation $X=(s, a, s^\prime)$, the function $\bh(\pmb{\theta}, \pmb{\Phi}, X)$ defined in \eqref{eq:h} is expressed as
\begin{align*}
   \bh(\pmb{\theta}, \pmb{\Phi}, X):=(r(s,a)+\gamma \pmb{\Phi}(s^\prime) \pmb{\theta}-\pmb{\Phi}(s)\pmb{\theta})\cdot\pmb{\theta}^\intercal,
\end{align*}
and hence we have the following inequality for any parameter pairs $(\pmb{\theta}_1, \pmb{\Phi}_1)$ and $(\pmb{\theta}_2, \pmb{\lambda}_2)$ with $X=(s,a,s^\prime)$,
\begin{align*}
   &\| \bh(\pmb{\theta}_1, \pmb{\Phi}_1, X)- \bh(\pmb{\theta}_2, \pmb{\Phi}_2, X)\|\allowdisplaybreaks\\
   &=\|(r(s,a)+\gamma \pmb{\Phi}_1(s^\prime) \pmb{\theta}_1-\pmb{\Phi}_1(s)\pmb{\theta}_1)\cdot\pmb{\theta}_1^\intercal-(r(s,a)+\gamma \pmb{\Phi}_2(s^\prime) \pmb{\theta}_2-\pmb{\Phi}_2(s)\pmb{\theta}_2)\cdot\pmb{\theta}_2^\intercal\|\allowdisplaybreaks\\
    &\overset{(b_1)}{\leq} \|(\gamma \pmb{\Phi}_1(s^\prime) \pmb{\theta}_1-\pmb{\Phi}_1(s)\pmb{\theta}_1)\cdot\pmb{\theta}_1^\intercal-(\gamma \pmb{\Phi}_2(s^\prime) \pmb{\theta}_1-\pmb{\Phi}_2(s)\pmb{\theta}_1)\cdot\pmb{\theta}_1^\intercal\|\nonumber\allowdisplaybreaks\\
    &\qquad\qquad\qquad\qquad+\|(\gamma \pmb{\Phi}_2(s^\prime) \pmb{\theta}_1-\pmb{\Phi}_2(s)\pmb{\theta}_1)\cdot\pmb{\theta}_1^\intercal-(\gamma \pmb{\Phi}_2(s^\prime) \pmb{\theta}_2-\pmb{\Phi}_2(s)\pmb{\theta}_2)\cdot\pmb{\theta}_2^\intercal\|\allowdisplaybreaks\\
   &\overset{(b_2)}{\leq} \|(\gamma \pmb{\Phi}_1(s^\prime) \pmb{\theta}_1-\pmb{\Phi}_1(s)\pmb{\theta}_1)-(\gamma \pmb{\Phi}_2(s^\prime) \pmb{\theta}_1-\pmb{\Phi}_2(s)\pmb{\theta}_1)\|\cdot\|\pmb{\theta}_1\|\allowdisplaybreaks\\
   &\qquad\qquad\qquad\qquad+\|(\gamma \pmb{\Phi}_2(s^\prime) \pmb{\theta}_1-\pmb{\Phi}_2(s)\pmb{\theta}_1)\cdot\pmb{\theta}_1^\intercal-(\gamma \pmb{\Phi}_2(s^\prime) \pmb{\theta}_2-\pmb{\Phi}_2(s)\pmb{\theta}_2)\cdot\pmb{\theta}_2^\intercal\|\allowdisplaybreaks\\
   &\overset{(b_3)}{\leq} (1+\gamma)\|\pmb{\theta}_1\|^2\cdot\|\pmb{\Phi}_1-\pmb{\Phi}_2\|+\|(\gamma \pmb{\Phi}_2(s^\prime) \pmb{\theta}_1-\pmb{\Phi}_2(s)\pmb{\theta}_1)\cdot\pmb{\theta}_1^\intercal-(\gamma \pmb{\Phi}_2(s^\prime) \pmb{\theta}_2-\pmb{\Phi}_2(s)\pmb{\theta}_2)\cdot\pmb{\theta}_2^\intercal\|\allowdisplaybreaks\\
   &\overset{(b_4)}{\leq} (1+\gamma)\|\pmb{\theta}_1\|^2\cdot\|\pmb{\Phi}_1-\pmb{\Phi}_2\|+\|(\gamma \pmb{\Phi}_2(s^\prime) \pmb{\theta}_1-\pmb{\Phi}_2(s)\pmb{\theta}_1)\cdot\pmb{\theta}_1^\intercal-(\gamma \pmb{\Phi}_2(s^\prime) \pmb{\theta}_1-\pmb{\Phi}_2(s)\pmb{\theta}_1)\cdot\pmb{\theta}_2^\intercal\|\allowdisplaybreaks\\
   &\qquad\qquad\qquad\qquad+\|(\gamma \pmb{\Phi}_2(s^\prime) \pmb{\theta}_1-\pmb{\Phi}_2(s)\pmb{\theta}_1)\cdot\pmb{\theta}_2^\intercal-(\gamma \pmb{\Phi}_2(s^\prime) \pmb{\theta}_2-\pmb{\Phi}_2(s)\pmb{\theta}_2)\cdot\pmb{\theta}_2^\intercal\|\allowdisplaybreaks\\
   &\overset{(b_5)}{\leq} (1+\gamma)\|\pmb{\theta}_1\|^2\cdot\|\pmb{\Phi}_1-\pmb{\Phi}_2\|+\|(\gamma \pmb{\Phi}_2(s^\prime) \pmb{\theta}_1-\pmb{\Phi}_2(s)\pmb{\theta}_1)\|\cdot\|\pmb{\theta}_1-\pmb{\theta}_2\|\allowdisplaybreaks\\
   &\qquad\qquad\qquad\qquad+\|(\gamma \pmb{\phi}_2(s^\prime) \pmb{\theta}_1-\pmb{\Phi}_2(s)\pmb{\theta}_1)-(\gamma \pmb{\Phi}_2(s^\prime) \pmb{\theta}_2-\pmb{\Phi}_2(s)\pmb{\theta}_2)\|\cdot\|\pmb{\theta}_2\|\allowdisplaybreaks\\
   &\overset{(b_6)}{\leq} (1+\gamma)\|\pmb{\theta}_1\|^2\cdot\|\pmb{\Phi}_1-\pmb{\Phi}_2\|+(1+\gamma)\|\pmb{\theta}_1\|\cdot\|\pmb{\theta}_1-\pmb{\theta}_2\|+(1+\gamma)\| \pmb{\theta}_2\|\cdot\|\pmb{\theta}_1-\pmb{\theta}_2\|\allowdisplaybreaks\\
   &\leq (1+\gamma)\|\pmb{\theta}_1\|^2\cdot\|\pmb{\Phi}_1-\pmb{\Phi}_2\|+(1+\gamma)(\|\pmb{\theta}_1\|+\|\pmb{\theta}_2\|)\cdot\|\pmb{\theta}_1-\pmb{\theta}_2\|\allowdisplaybreaks\\
   &\overset{(b_7)}{\leq} L_h(\|\pmb{\theta}_1-\pmb{\theta}_2\|+\|\pmb{\Phi}_1-\pmb{\Phi}_2\|),
\end{align*}

$(b_1)$ is due to the fact that $\|\bx+\by\|\leq \|\bx\|+\|\by\|, \forall \bx,\by\in\mathbb{R}^{d}$, $(b_2)$ holds due to $\|\bx\cdot\by\|\leq \|\bx\|\cdot\|\by\|, \forall \bx,\by\in\mathbb{R}^{d}$, $(b_3)$ comes from the fact and $\|\pmb{\Phi}_1(s)\|\leq 1, \|\pmb{\Phi}_2(s)\|\leq 1 \forall s.$ 
$(b_4)-(b_6)$ holds for the same reason as $(b_1)-(b_3)$. The last inequalty $(b_7)$ comes from by setting $L_h:=\max((1+\gamma)B^2, (2+2\gamma)B)$.
\end{proof}

\subsection{Proof of Lemma \ref{Assumption:y_lipschitz}}
\begin{proof}
Due to the norm-scale step (step 9) in Algorithm \ref{alg:FedTD-Rep}, we have
\begin{align}
    \|y^i(\pmb{\Phi}_1)-y^i(\pmb{\Phi}_2)\|\leq \max_{(\|\pmb{\theta}\|\leq B, \|\pmb{\theta}^\prime\|\leq B)}\|{\pmb{\theta}-\pmb{\theta}^\prime}\|\leq 2B.
\end{align}
Since the representation matrices $\pmb{\Phi}_1$ and $\pmb{\Phi}_2$ are of unit-norm in each row, there exists a positive constant $L_y$ such that 
\begin{align}
    \|y^i(\pmb{\Phi}_1)-y^i(\pmb{\Phi}_2)\|\leq L_y\|\pmb{\Phi}_1-\pmb{\Phi}_2\|.
\end{align}
\end{proof}

\subsection{Proof of Lemma \ref{lem:gh_monotone}}
\begin{proof}
In the TD learning setting for our \FedTD, at time step $k$, the state of agent $i$ is $s_k^i$, and its value function can be denoted as $V(s_k^i) = \pmb{\Phi}(s_k^i) \pmb{\theta}^i$ in a linear representation, where $\pmb{\Phi}(s_k^i)$ is a feature vector and $\pmb{\theta}^i$ is a weight vector. The goal of agent $i$ is to minimize the following loss function for every $s_k^i \in \mathcal{S}$:
\begin{align*}
    \mathcal{L}^i(\pmb{\Phi}(s_k^i), \pmb{\theta}^i) = \frac{1}{2} \left| V(s_k^i) - \hat{V}(s_k^i) \right|^2,
\end{align*}
with 
$
\hat{V}(s_k^i) = r_k^i + \gamma \Phi(s_{k+1}^i) \theta^i
$
being a constant. Therefore, to update $\pmb{\Phi}(s)$ and $\pmb{\theta}$, we just take the natural gradient descent. 
Specifically, we update $\pmb{\theta}$ according to \eqref{eq:local_model_update-TD} by taking a gradient descent step with respect to $\pmb{\theta}$, with fixed $\pmb{\Phi}$. Similarly, we update $\pmb{\Phi}(s)$ according to \eqref{eq:global_model_update-TD} by taking a gradient descent step with respect to $\pmb{\Phi}(s)$, with fixed $\pmb{\theta}$.

Next, we show the convexity of the loss function $\mathcal{L}^i(\pmb{\Phi}(s_k^i), \pmb{\theta}^i)$ with respect to the feature representation $\pmb{\Phi}(s_k^i)$ under a fixed $\pmb{\theta}^i$. Since the estimated value function is approximated as $V(s_k^i) = \pmb{\Phi}(s_k^i)\pmb{\theta}^i$, where $\pmb{\theta}^i$ is a fixed parameter. Taking the second-order derivative of $\mathcal{L}^i(\pmb{\Phi}(s_k^i), \pmb{\theta}^i)$ w.r.t. $\pmb{\Phi}(s_k^i)$ will involve $\pmb{\theta}^i {\pmb{\theta}^{i}}^\intercal$, which is a positive semi-definite matrix as long as $\pmb{\theta}^i \neq \pmb{0}$. Positive semi-definiteness of the Hessian implies convexity. Hence, $\mathcal{L}^i(\pmb{\Phi}(s_k^i), \pmb{\theta}^i)$ is convex on $\pmb{\Phi}(s_k^i)$ under a fixed $\theta^i$. This property holds vice versa, i.e., $\mathcal{L}^i(\pmb{\Phi}(s_k^i), \pmb{\theta}^i)$ is convex on $\pmb{\theta}^i$ under a fixed $\pmb{\Phi}(s_k^i)$.

Recall that the optimal solution $\pmb{\Phi}_0^*$ and $\pmb{\theta}^*$ is defined as the set of possible values that make the expectation of stochastic gradient $\bg$ and $\bh$ tends to be 0, as defined in \eqref{eq:equilibrium}, which is analogy to make the first-order gradient of loss function be 0 and achieve the local minima. The inequalities in Lemma \ref{lem:gh_monotone} denote that the updates made to the feature matrix $\pmb{\Phi}$ for fixed $\pmb{\theta}$ in the first equation and the parameters $\pmb{\theta}$ for fixed $\pmb{\Phi}$ in the second equation is directed towards reducing the deviation from the optimal solutions close to initial point. As we only care about the solution to make stochastic gradients be 0, for a fixed $\pmb{\theta}$, the loss function $\mathcal{L}$ is convex w.r.t. $\pmb{\Phi}$, the learning process of $\pmb{\Phi}$ is guaranteed to move towards decreasing the difference from an optimal point. This also holds for the update of $\pmb{\theta}$.
\end{proof}

\subsection{Proof of Lemma \ref{lemma:mixing_time}}
 \begin{proof}
 
Under Lemma \ref{Assumption:g_lipschitz}, we have
 \begin{align}\label{eq:lemma5-1}
     \|\bg(\pmb{\theta}, \pmb{\Phi}, X)-\bg(y^i(\pmb{\Phi}^*), \pmb{\Phi}^*, X)\|\leq L(\|\pmb{\theta}-y^i(\pmb{\Phi}^*)\|+\|\pmb{\Phi}-\pmb{\Phi}^*\|), \forall i\in[N].
 \end{align}
Similarly, under Lemma \ref{Assumption:h_lipschitz}, we have
\begin{align}\label{eq:lemma5-2}
     \|\bh(\pmb{\theta}, \pmb{\Phi}, X)-\bh(y^i(\pmb{\Phi}^*), \pmb{\Phi}^*, X)\|\leq L(\|\pmb{\theta}-y^i(\pmb{\Phi}^*)\|+\|\pmb{\Phi}-\pmb{\Phi}^*\|), \forall i\in[N].
 \end{align}
 Let $L_1=\max(L, \max_X \bg(y^i(\pmb{\Phi}^*), \pmb{\Phi}^*, X), \max_X \bh(y^i(\pmb{\Phi}^*), \pmb{\Phi}^*, X))$, then according to (\ref{eq:lemma5-1})-(\ref{eq:lemma5-2}), we have
\begin{align*}
     \|\bg(\pmb{\theta}, \pmb{\Phi})\|\leq L_1(\|\pmb{\theta}-y^i(\pmb{\Phi}^*)\|+\|\pmb{\Phi}-\pmb{\Phi}^*\|+1),
 \end{align*}
 and 
 \begin{align*}
     \|\bh(\pmb{\theta}, \pmb{\Phi})\|\leq L_1(\|\pmb{\theta}-y^i(\pmb{\Phi}^*)\|+\|\pmb{\Phi}-\pmb{\Phi}^*\|+1).
 \end{align*}
 
 Denote $h^j(\pmb{\theta}, \pmb{\phi}, X)$ as the $j$-th element of $\bh(\pmb{\theta}, \pmb{\Phi}, X)$. Following \cite{chen2019performance}, we can show that $\pmb{\theta}\in\mathbb{R}^{d}$, $
\pmb{\Phi}\in\mathbb{R}^{|\cS|\times d}$, and $x\in\cX$,
 \begin{align*}
     &\|\mathbb{E}[\bh( \pmb{\theta}, \pmb{\Phi}, X)|X_0=x]-\mathbb{E}_\mu[\bh( \pmb{\theta}, \pmb{\Phi}, X)]\|\\
    & \leq \sum_{j=1}^{d}|\mathbb{E}[h^j(\pmb{\theta}, \pmb{\lambda}, X)|X_0=x]-\mathbb{E}_\mu[h^j( \pmb{\theta}, \pmb{\Phi}, X)]|\\
    &\leq 2L_1(\|\pmb{\theta}-y^i(\pmb{\Phi}^*)\|+\|\pmb{\Phi}-\pmb{\Phi}^*\|+1)\sum_{j=1}^{d}\Bigg|\mathbb{E}\left[\frac{h^j( \pmb{\theta}, \pmb{\Phi}, X)}{2L_1(\|\pmb{\theta}-y^i(\pmb{\Phi}^*)\|+\|{\lambda}-{\lambda}^*\|+1)}\Big|X_0=x\right]\\
    &\qquad\qquad\qquad\qquad-\mathbb{E}_\mu\left[\frac{h^j(\pmb{\theta}, \pmb{\Phi},X)}{2L_1(\|\pmb{\theta}-y^i(\pmb{\Phi}^*)\|+\|{\lambda}-{\lambda}^*\|+1)}\right]\Bigg|\\
    &\leq 2L_1(\|\pmb{\theta}-y^i(\pmb{\Phi}^*)\|+\|\pmb{\Phi}-\pmb{\Phi}^*\|+1)dC_1\rho_1^k,
\end{align*}
where the last inequality holds due to Assumption \ref{assumption:markovian} with constants $C_1>0$ and $\rho_1\in(0,1)$.
To guarantee $2L_1(\|\pmb{\theta}-y^i(\pmb{\Phi}^*)\|+\|\pmb{\Phi}-\pmb{\Phi}^*\|+1)dC_1\rho_1^k\leq \delta(\|\pmb{\theta}-y^i(\pmb{\Phi}^*)\|+\|\pmb{\Phi}-\pmb{\Phi}^*\|+1)$, we have
\begin{align}\label{eq:22}
   \tau_\delta\leq \frac{\log(1/\delta)+\log(2L_1C_1d)}{\log(1/\rho_1)}. 
\end{align}

Using the same procedures we can show that 
 \begin{align*}
     \|\mathbb{E}[\bg( \pmb{\theta}, \pmb{\Phi}, X)|X_0=x]-\mathbb{E}_\mu[\bg( \pmb{\theta}, \pmb{\Phi}, X)]\|\leq 2L_1(\|\pmb{\theta}-y^i(\pmb{\Phi}^*)\|+\|\pmb{\Phi}-\pmb{\Phi}^*\|+1)dC_2\rho_2^k,
\end{align*}
hence we have 
\begin{align}\label{eq:23}
   \tau_\delta\leq \frac{\log(1/\delta)+\log(2L_1C_2d)}{\log(1/\rho_2)}. 
\end{align}
By setting $\tau_\delta$ as the largest value in \eqref{eq:22} and \eqref{eq:23}, we arrive at the final result in Lemma  \ref{lemma:mixing_time}.
\end{proof}

\section{Proofs of Main Results}\label{sec:proof}

\subsection{Proof of Theorem \ref{theorem:1}}
For notational simplicity, in the proofs, we use $\bh(\pmb{\theta}_{t+1}^i, \pmb{\Phi}_t)$ to denote
$\bh(\pmb{\theta}_{t+1}^i, \pmb{\Phi}_t, \{X_{t,k-1}^i\}_{k=1}^K)$, and $\bg(\pmb{\theta}_{t, k-1}^i, \pmb{\Phi}_t)$ to denote $\bg(\pmb{\theta}_{t, k-1}^i, \pmb{\Phi}_t, X_{t,k-1}^i)$. In the following, we first focus on the update of the global representation $\pmb{\Phi}_t$ and characterize the drift of it. 

\subsubsection{Drift of $\pmb{\Phi}_t$}\label{sec:phi_drift}
The drift of $\pmb{\Phi}_t$ is given in the following lemma. 
\begin{lemma}\label{lem:dfrit_phi}
The drift between $\pmb{\Phi}_{t+1}$ and $\pmb{\Phi}_t$ is given by 
\begin{align}
    &\mathbb{E}[\|\pmb{\Phi}_{t+1}-\pmb{\Phi}^*\|^2]\nonumber\\
    &=\mathbb{E}[\|\pmb{\Phi}_{t}-\pmb{\Phi}^*\|^2]+\underset{\textnormal{Term 1}}{\underbrace{\frac{\beta_t^2}{N^2}\mathbb{E}\left[\left\|\sum_{i=1}^N\bh(\pmb{\theta}_{t+1}^i, \pmb{\Phi}_t)\right\|^2\right]}}+\underset{\textnormal{Term 2}}{\underbrace{2\beta_t\mathbb{E}\left[\langle\pmb{\Phi}^*-\pmb{\Phi}_t, \frac{-1}{N}\sum_{i=1}^N\bar{\bh}(\pmb{\theta}_{t+1}^i,\pmb{\Phi}_t)\rangle\right]}}\nonumber\\
    &\qquad\qquad+\underset{\textnormal{Term 3}}{\underbrace{2\beta_t\mathbb{E}\left[\langle\pmb{\Phi}_t-\pmb{\Phi}^*, \frac{1}{N}\sum_{i=1}^N\bh(\pmb{\theta}_{t+1}^i, \pmb{\Phi}_t)-\bar{\bh}(\pmb{\theta}_{t+1}^i, \pmb{\Phi}_t)\rangle\right]}}.
\end{align}
\end{lemma}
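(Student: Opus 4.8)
The asserted drift is an \emph{exact} identity, not a bound, so the plan is simply to expand the squared norm produced by one server update and then regroup the cross term; no inequality or probabilistic estimate is needed at this stage. First I would recall the server-based update \eqref{eq:update_phi} in the abbreviated notation introduced above, namely $\pmb{\Phi}_{t+1}-\pmb{\Phi}_t=\frac{\beta_t}{N}\sum_{i=1}^N\bh(\pmb{\theta}_{t+1}^i,\pmb{\Phi}_t)$, and write $\pmb{\Phi}_{t+1}-\pmb{\Phi}^*=(\pmb{\Phi}_t-\pmb{\Phi}^*)+\frac{\beta_t}{N}\sum_{i=1}^N\bh(\pmb{\theta}_{t+1}^i,\pmb{\Phi}_t)$.

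Next I would apply the elementary identity $\|a+b\|^2=\|a\|^2+2\langle a,b\rangle+\|b\|^2$ with $a=\pmb{\Phi}_t-\pmb{\Phi}^*$ and $b=\frac{\beta_t}{N}\sum_{i=1}^N\bh(\pmb{\theta}_{t+1}^i,\pmb{\Phi}_t)$. The term $\|a\|^2$ gives $\|\pmb{\Phi}_t-\pmb{\Phi}^*\|^2$, and the pure quadratic $\|b\|^2=\frac{\beta_t^2}{N^2}\bigl\|\sum_{i=1}^N\bh(\pmb{\theta}_{t+1}^i,\pmb{\Phi}_t)\bigr\|^2$ is exactly $\mathit{Term}_1$ after taking expectation. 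For the cross term $2\langle a,b\rangle=\frac{2\beta_t}{N}\bigl\langle\pmb{\Phi}_t-\pmb{\Phi}^*,\sum_{i=1}^N\bh(\pmb{\theta}_{t+1}^i,\pmb{\Phi}_t)\bigr\rangle$, I would insert the steady-state direction $\bar{\bh}$ by adding and subtracting it inside the average, splitting $\frac{1}{N}\sum_i\bh=\frac{1}{N}\sum_i\bar{\bh}+\frac{1}{N}\sum_i(\bh-\bar{\bh})$. The deterministic piece, after rewriting $\langle\pmb{\Phi}_t-\pmb{\Phi}^*,\tfrac1N\sum_i\bar{\bh}\rangle=\langle\pmb{\Phi}^*-\pmb{\Phi}_t,-\tfrac1N\sum_i\bar{\bh}\rangle$, yields $\mathit{Term}_2$, while the fluctuation piece involving $\bh-\bar{\bh}$ yields $\mathit{Term}_3$. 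Taking total expectation throughout completes the identity.

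The only point requiring care is the sign bookkeeping that turns $\pmb{\Phi}_t-\pmb{\Phi}^*$ and $+\bar{\bh}$ into $\pmb{\Phi}^*-\pmb{\Phi}_t$ and $-\bar{\bh}$ in $\mathit{Term}_2$; there is no genuine obstacle here, since the statement is a verbatim expansion. The value of writing it in this particular grouping is forward-looking: $\mathit{Term}_2$ is the dissipative term that Lemma~\ref{lem:gh_monotone} will later render negative via the monotonicity of $\bar{\bh}$ around $\pmb{\Phi}_0^*$; $\mathit{Term}_1$ is the $\mathcal{O}(\beta_t^2)$ higher-order contribution to be bounded using the Lipschitz Lemmas~\ref{Assumption:g_lipschitz}--\ref{Assumption:h_lipschitz} together with the norm-scaling bound $\|\pmb{\theta}_{t+1}^i\|\le B$; and $\mathit{Term}_3$ isolates precisely the bias coming from the Markovian samples, which is the quantity that the mixing-time machinery of Definition~\ref{def:mixing} and Lemma~\ref{lemma:mixing_time} is designed to control. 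Thus the lemma itself is a bookkeeping step, and the real analytical work is deferred to the subsequent bounds on $\mathit{Term}_1$, $\mathit{Term}_2$, and $\mathit{Term}_3$.
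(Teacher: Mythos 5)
Your proposal is correct and follows essentially the same route as the paper: expand the squared norm after one server update, keep the quadratic term as $\mathit{Term}_1$, and split the cross term by adding and subtracting $\bar{\bh}$ to obtain $\mathit{Term}_2$ and $\mathit{Term}_3$. The only cosmetic difference is that you apply the binomial identity $\|a+b\|^2=\|a\|^2+2\langle a,b\rangle+\|b\|^2$ directly, whereas the paper reaches the same intermediate identity by expanding $\|\pmb{\Phi}_{t+1}-\pmb{\Phi}^*\|^2$ and $\|\pmb{\Phi}_{t}-\pmb{\Phi}^*\|^2$ separately and regrouping before substituting the update rule.
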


\begin{proof}
Based on the update of $\pmb{\Phi}_t$ in \eqref{eq:update_phi}, We have the following equation 
\begin{align}
    &\mathbb{E}[\|\pmb{\Phi}_{t+1}-\pmb{\Phi}^*\|^2]-\mathbb{E}[\|\pmb{\Phi}_{t}-\pmb{\Phi}^*\|^2]\nonumber\\
    &=\mathbb{E}[\|\pmb{\Phi}^*\|^2+\|\pmb{\Phi}_{t+1}\|^2-2\langle{\pmb{\Phi}^*}, \pmb{\Phi}_{t+1}\rangle]-\mathbb{E}[\|\pmb{\Phi}^*\|^2+\|\pmb{\Phi}_{t}\|^2-2\langle{\pmb{\Phi}^*}, \pmb{\Phi}_{t}\rangle]\nonumber\\
    &=\mathbb{E}[\|\pmb{\Phi}_{t+1}\|^2]-\mathbb{E}[\|\pmb{\Phi}_{t}\|^2]-2\langle{\pmb{\Phi}^*}, \pmb{\Phi}_{t+1}-\pmb{\Phi}_t\rangle]\nonumber\\
    &=\mathbb{E}[\langle\pmb{\Phi}_{t+1}-\pmb{\Phi}_t, \pmb{\Phi}_{t+1}+\pmb{\Phi}_t\rangle]-2\langle{\pmb{\Phi}^*}, \pmb{\Phi}_{t+1}-\pmb{\Phi}_t\rangle]\nonumber\\
    &=\mathbb{E}[\langle\pmb{\Phi}_{t+1}-\pmb{\Phi}_t, \pmb{\Phi}_{t+1}-\pmb{\Phi}_t\rangle]+2\mathbb{E}[\langle\pmb{\Phi}_{t+1}-\pmb{\Phi}_t, \pmb{\Phi}_t\rangle]-2\langle{\pmb{\Phi}^*}, \pmb{\Phi}_{t+1}-\pmb{\Phi}_t\rangle]\nonumber\\
    &=\frac{\beta_t^2}{N^2}\mathbb{E}\left[\left\|\sum_{i=1}^N\bh(\pmb{\theta}_{t+1}^i,\pmb{\Phi}_t)\right\|^2\right]-2\beta_t\mathbb{E}\left[\langle\pmb{\Phi}^*-\pmb{\Phi}_t, \frac{1}{N}\sum_{i=1}^N\bh(\pmb{\theta}_{t+1}^i, \pmb{\Phi}_t)\rangle\right],
\end{align}
which directly leads to
\begin{align}
    &\mathbb{E}[\|\pmb{\Phi}_{t+1}-\pmb{\Phi}^*\|^2]\nonumber\\
    &=\mathbb{E}[\|\pmb{\Phi}_{t}-\pmb{\Phi}^*\|^2]+\frac{\beta_t^2}{N^2}\mathbb{E}\left[\left\|\sum_{i=1}^N\bh(\pmb{\theta}_{t+1}^i, \pmb{\Phi}_t)\right\|^2\right]-2\beta_t\mathbb{E}\left[\langle\pmb{\Phi}^*-\pmb{\Phi}_t, \frac{1}{N}\sum_{i=1}^N\bh(\pmb{\theta}_{t+1}^i, \pmb{\Phi}_t)\rangle\right].
\end{align}
Rearranging the last term yields the desired result.
\end{proof}

In the following, we separately bound $\textnormal{Term 1}$ to $\textnormal{Term 3}$. We first bound $\textnormal{Term 1}$ as follows.

\begin{lemma}\label{lemma:term1}
For any $t\geq \tau$, we have
    \begin{align}
        \textnormal{Term 1}\leq  4\beta_t^2(L^2+L^4)\mathbb{E}[\|\pmb{\Phi}^*-\pmb{\Phi}_t\|^2]+\frac{4\beta_t^2L^2}{N}\mathbb{E}\left[\sum_{i=1}^N\|\pmb{\theta}_{t+1}-y^i(\pmb{\Phi}_t)\|^2\right]+4\beta_t^2\delta^2
    \end{align}
\end{lemma}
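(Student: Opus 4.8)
The plan is to bound the second moment in $Term_1$ by first decoupling the agents and then splitting each agent's stochastic feature gradient around its equilibrium pair $(y^i(\pmb{\Phi}^*),\pmb{\Phi}^*)$. First I would apply Cauchy--Schwarz, $\|\sum_{i=1}^N \bv_i\|^2 \le N\sum_{i=1}^N\|\bv_i\|^2$, so that $Term_1 \le \frac{\beta_t^2}{N}\sum_{i=1}^N \mathbb{E}\|\bh(\pmb{\theta}_{t+1}^i,\pmb{\Phi}_t)\|^2$. The factor $N$ dropped here is harmless: the two agent-independent contributions (those multiplying $\|\pmb{\Phi}_t-\pmb{\Phi}^*\|^2$ and $\delta^2$) are constant in $i$, so the outer $\frac1N\sum_i$ returns them with full coefficient, while the genuinely agent-specific contribution keeps its $\frac1N\sum_i$ form -- which is precisely the shape of the claimed bound.

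For a fixed agent I would use the telescoping identity $\bh(\pmb{\theta}_{t+1}^i,\pmb{\Phi}_t)=A_i+B_i+C_i$, all three evaluated on the same samples, with $A_i=\bh(\pmb{\theta}_{t+1}^i,\pmb{\Phi}_t)-\bh(y^i(\pmb{\Phi}_t),\pmb{\Phi}_t)$, $B_i=\bh(y^i(\pmb{\Phi}_t),\pmb{\Phi}_t)-\bh(y^i(\pmb{\Phi}^*),\pmb{\Phi}^*)$, and $C_i=\bh(y^i(\pmb{\Phi}^*),\pmb{\Phi}^*)$. For $A_i$, the Lipschitz continuity of $\bh$ in $\pmb{\theta}$ (Lemma~\ref{Assumption:h_lipschitz}) gives $\|A_i\|\le L\|\pmb{\theta}_{t+1}^i-y^i(\pmb{\Phi}_t)\|$. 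For $B_i$, the joint Lipschitz continuity of $\bh$ together with Lemma~\ref{Assumption:y_lipschitz} (so that $\|y^i(\pmb{\Phi}_t)-y^i(\pmb{\Phi}^*)\|\le L_y\|\pmb{\Phi}_t-\pmb{\Phi}^*\|\le L\|\pmb{\Phi}_t-\pmb{\Phi}^*\|$) gives $\|B_i\|\le L(\|y^i(\pmb{\Phi}_t)-y^i(\pmb{\Phi}^*)\|+\|\pmb{\Phi}_t-\pmb{\Phi}^*\|)$; squaring and applying $(a+b)^2\le 2a^2+2b^2$ together with $\|y^i(\pmb{\Phi}_t)-y^i(\pmb{\Phi}^*)\|^2\le L^2\|\pmb{\Phi}_t-\pmb{\Phi}^*\|^2$ yields $\|B_i\|^2\le 2(L^2+L^4)\|\pmb{\Phi}_t-\pmb{\Phi}^*\|^2$, the $L^4$ being the cost of the feature perturbation propagating through $y^i$ before entering $\bh$.

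To assemble the pieces I would group as $(A_i+C_i)+B_i$ and apply $\|x+y\|^2\le 2\|x\|^2+2\|y\|^2$ twice, obtaining $\|\bh(\pmb{\theta}_{t+1}^i,\pmb{\Phi}_t)\|^2\le 4\|A_i\|^2+2\|B_i\|^2+4\|C_i\|^2$. Substituting the three bounds and taking $\frac{\beta_t^2}{N}\sum_i$ reproduces the coefficients $\frac{4\beta_t^2 L^2}{N}\sum_i\|\pmb{\theta}_{t+1}^i-y^i(\pmb{\Phi}_t)\|^2$, $4\beta_t^2(L^2+L^4)\|\pmb{\Phi}_t-\pmb{\Phi}^*\|^2$, and $4\beta_t^2\delta^2$ exactly, provided $\mathbb{E}\|C_i\|^2\le\delta^2$.

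The hard part will be the residual $C_i=\bh(y^i(\pmb{\Phi}^*),\pmb{\Phi}^*)$, for which I must establish $\mathbb{E}\|C_i\|^2\le\delta^2$, and this is the only place the hypothesis $t\ge\tau_\delta$ is used. At the equilibrium $\bar{\bh}(y^i(\pmb{\Phi}^*),\pmb{\Phi}^*)=0$ by \eqref{eq:equilibrium}, and the discrepancy factor $(\|\pmb{\Phi}-\pmb{\Phi}^*\|+\|\pmb{\theta}^i-y^i(\pmb{\Phi}^*)\|+1)$ in Definition~\ref{def:mixing}, evaluated at $\pmb{\theta}^i=y^i(\pmb{\Phi}^*)$ and $\pmb{\Phi}=\pmb{\Phi}^*$, collapses to $1$, so mixing leaves a pure $\delta$. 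The subtlety is that Definition~\ref{def:mixing} controls the conditional bias $\|\mathbb{E}[\bh\mid X_0]-\bar{\bh}\|$ rather than the raw second moment; passing to $\mathbb{E}\|C_i\|^2$ requires conditioning on the state at the start of round $t$, using that after $\tau_\delta$ steps the conditional law is $\delta$-close to the stationary $\mu^i$ (Assumption~\ref{assumption:markovian}), and exploiting the norm-scaling bound $\|\pmb{\theta}_{t+1}^i\|\le B$ and the unit-norm rows of $\pmb{\Phi}$ to keep the residual fluctuation uniformly bounded. Controlling this bounded-but-biased noise term is the crux; the remaining manipulations are routine Lipschitz and Young estimates.
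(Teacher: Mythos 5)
Your proposal is correct and follows essentially the same route as the paper's own proof: you telescope $\bh(\pmb{\theta}_{t+1}^i,\pmb{\Phi}_t)$ through the intermediate points $(y^i(\pmb{\Phi}_t),\cdot)$ and $(y^i(\pmb{\Phi}^*),\pmb{\Phi}^*)$, apply the Lipschitz properties of $\bh$ and $y^i$ (Lemmas~\ref{Assumption:h_lipschitz} and~\ref{Assumption:y_lipschitz}) with Young's inequality to produce the $L^2$ and $L^2+L^4$ coefficients, and invoke the equilibrium condition \eqref{eq:equilibrium} together with Definition~\ref{def:mixing} to reduce the residual $\bh(y^i(\pmb{\Phi}^*),\pmb{\Phi}^*)$ to $\delta^2$, arriving at identical constants; the up-front per-agent decoupling via $\|\sum_i \bv_i\|^2\le N\sum_i\|\bv_i\|^2$ is only a cosmetic reordering of where the paper applies the same inequality. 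The subtlety you flag --- that Definition~\ref{def:mixing} controls the conditional bias $\|\mathbb{E}[\bh\mid X_0]-\bar{\bh}\|$ rather than the second moment $\mathbb{E}\|\bh-\bar{\bh}\|^2$ --- is genuine, but the paper's proof passes over it in exactly the same way at its step $(a_3)$, so your treatment is no less complete than the one it is being compared against.
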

\begin{proof}
Note that 
\begin{align*}
    \textnormal{Term 1} &=\frac{\beta_t^2}{N^2}\mathbb{E}\left[\left\|\sum_{i=1}^N\bh(\pmb{\theta}_{t+1}^i, \pmb{\Phi}_t)-\sum_{i=1}^N\bh(y^i(\pmb{\Phi}_t), \pmb{\Phi}^*)+\sum_{i=1}^N\bh(y^i(\pmb{\Phi}_t), \pmb{\Phi}^*)\right\|^2\right]\allowdisplaybreaks\\
    &\overset{\textnormal{triangle~inequality}}{\leq} \underset{\text{Lipschitz property of}~ \bh}{\underbrace{\frac{2\beta_t^2}{N^2}\mathbb{E}\left[\left\|\sum_{i=1}^N\bh(\pmb{\theta}_{t+1}^i, \pmb{\Phi}_t)-\sum_{i=1}^N\bh(y^i(\pmb{\Phi}_t), \pmb{\Phi}^*)\right\|^2\right]}}\allowdisplaybreaks\\
    &\qquad\qquad+\frac{2\beta_t^2}{N^2}\mathbb{E}\left[\left\|\sum_{i=1}^N\bh(y^i(\pmb{\Phi}_t), \pmb{\Phi}^*)\right\|^2\right]\allowdisplaybreaks\\
    &\overset{(a_1)}{\leq} \frac{2\beta_t^2L^2}{N^2}\mathbb{E}\left[2N\sum_{i=1}^N\left\|(\pmb{\theta}_{t+1}^i-y^i(\pmb{\Phi}_t) )\right\|^2+2N^2\left\|(\pmb{\Phi}_t- \pmb{\Phi}^*)\right\|^2\right]\allowdisplaybreaks\\
    &\qquad\qquad+\frac{2\beta_t^2}{N^2}\mathbb{E}\left[\left\|\sum_{i=1}^N\bh(y^i(\pmb{\Phi}_t), \pmb{\Phi}^*)-\sum_{i=1}^N\bh(y^i(\pmb{\Phi}^{*}), \pmb{\Phi}^*)+\sum_{i=1}^N\bh(y^i(\pmb{\Phi}^{*}), \pmb{\Phi}^*)\right\|^2\right]\allowdisplaybreaks\\
    &\leq 4\beta_t^2L^2\mathbb{E}[\|\pmb{\Phi}^*-\pmb{\Phi}_t\|^2]+\frac{4\beta_t^2L^2}{N}\mathbb{E}\left[\sum_{i=1}^N\|\pmb{\theta}_{t+1}^i-y^i(\pmb{\Phi}_t)\|^2\right]\allowdisplaybreaks\\
    &\qquad\qquad+\underset{\text{Lipschitz of}~ \bh,~ y^i}{\underbrace{\frac{4\beta_t^2}{N^2}\mathbb{E}\left[\left\|\sum_{i=1}^N\bh(y^i(\pmb{\Phi}_t), \pmb{\Phi}^*)-\sum_{i=1}^N\bh(y^i(\pmb{\Phi}^{*}), \pmb{\Phi}^*)\right\|^2\right]}}\allowdisplaybreaks\\
    &\qquad\qquad+\frac{4\beta_t^2}{N^2}\mathbb{E}\left[\left\|\sum_{i=1}^N\bh(y^i(\pmb{\Phi}^{*}), \pmb{\Phi}^*)\right\|^2\right]\\
    & \overset{(a_2)}{\leq} 4\beta_t^2L^2\mathbb{E}[\|\pmb{\Phi}^*-\pmb{\Phi}_t\|^2]+\frac{4\beta_t^2L^2}{N}\mathbb{E}\left[\sum_{i=1}^N\|\pmb{\theta}_{t+1}^i-y^i(\pmb{\Phi}_t)\|^2\right]\allowdisplaybreaks\\
    &\qquad\qquad+{4\beta_t^2L^4}\mathbb{E}\left[\left\|\pmb{\Phi}_t- \pmb{\Phi}^*\right\|^2\right]+\frac{4\beta_t^2}{N^2}\mathbb{E}\left[\left\|\sum_{i=1}^N\bh(y^i(\pmb{\Phi}^{*}), \pmb{\Phi}^*)-\sum_{i=1}^N\bar{\bh}(y^i(\pmb{\Phi}^{*}), \pmb{\Phi}^*)\right\|^2\right]\\
   & \overset{(a_3)}{\leq}4\beta_t^2(L^2+L^4)\mathbb{E}[\|\pmb{\Phi}^*-\pmb{\Phi}_t\|^2]+\frac{4\beta_t^2L^2}{N}\mathbb{E}\left[\sum_{i=1}^N\|\pmb{\theta}_{t+1}-y^i(\pmb{\Phi}_t)\|^2\right]+4\beta_t^2\delta^2,
\end{align*}
where the $(a_1)$ is due to $\|\sum_{i=1}^N \bx_i\|^2\leq N\sum_{i=1}^N\|\bx_i\|^2$, $(a_2)$ is due to the Lipschitz property of functions $\bh$ and $y^i$, and $(a_3)$ holds based on the mixing time property in Definition 4.3.

\end{proof}

{Next, we bound $\textnormal{Term 2}$ in the following lemma.}
\begin{lemma}\label{lemma:Term2}
We have 
\begin{align}
    \textnormal{Term 2}\leq 
 \beta_t(L/\alpha_t-2\omega)\mathbb{E}[\|\pmb{\Phi}^*-\pmb{\Phi}_t\|^2]+\frac{\beta_t\alpha_tL}{N}\mathbb{E}\left[\sum_{i=1}^N \|\pmb{\theta}_{t+1}^i-y^i(\pmb{\Phi}_t)\|^2\right].
\end{align}
\end{lemma}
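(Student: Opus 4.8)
The plan is to exploit the monotone structure of $\bar{\bh}$ from Lemma~\ref{lem:gh_monotone} by comparing the averaged drift direction at the current iterate $\pmb{\theta}_{t+1}^i$ against its stationary counterpart $y^i(\pmb{\Phi}_t)$. First I would rewrite the term by flipping the sign inside the inner product,
\[
Term_2 = 2\beta_t\mathbb{E}\left[\Big\langle\pmb{\Phi}_t-\pmb{\Phi}^*, \tfrac{1}{N}\sum_{i=1}^N\bar{\bh}(\pmb{\theta}_{t+1}^i,\pmb{\Phi}_t)\Big\rangle\right],
\]
and then add and subtract $\bar{\bh}(y^i(\pmb{\Phi}_t),\pmb{\Phi}_t)$ inside the sum to split $Term_2 = A + B$, where $A$ collects the contributions of $\bar{\bh}(y^i(\pmb{\Phi}_t),\pmb{\Phi}_t)$ and $B$ collects the residuals $\bar{\bh}(\pmb{\theta}_{t+1}^i,\pmb{\Phi}_t)-\bar{\bh}(y^i(\pmb{\Phi}_t),\pmb{\Phi}_t)$.

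For the term $A$, I would invoke Lemma~\ref{lem:gh_monotone} with $\pmb{\Phi}=\pmb{\Phi}_t$ and $\pmb{\Phi}_0^*=\pmb{\Phi}^*$, which gives $\langle\pmb{\Phi}_t-\pmb{\Phi}^*, \bar{\bh}(y^i(\pmb{\Phi}_t),\pmb{\Phi}_t)\rangle\leq -\omega\|\pmb{\Phi}^*-\pmb{\Phi}_t\|^2$ for every $i$. Averaging over $i$ and multiplying by $2\beta_t$ yields $A\leq -2\beta_t\omega\,\mathbb{E}[\|\pmb{\Phi}^*-\pmb{\Phi}_t\|^2]$, which supplies the $-2\omega$ contribution to the stated coefficient.

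For the term $B$, I would apply Cauchy--Schwarz termwise and then use that $\bar{\bh}$, being the stationary expectation $\bar{\bh}(\cdot,\cdot)=\mathbb{E}_{\mu^i}[\bh(\cdot,\cdot,X)]$, inherits the Lipschitz constant $L$ from Lemma~\ref{Assumption:h_lipschitz}; since both arguments share the same feature $\pmb{\Phi}_t$, only the weight gap survives, giving $\|\bar{\bh}(\pmb{\theta}_{t+1}^i,\pmb{\Phi}_t)-\bar{\bh}(y^i(\pmb{\Phi}_t),\pmb{\Phi}_t)\|\leq L\|\pmb{\theta}_{t+1}^i-y^i(\pmb{\Phi}_t)\|$. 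The crucial step is then Young's inequality $ab\leq \tfrac{1}{2\alpha_t}a^2+\tfrac{\alpha_t}{2}b^2$ applied with $a=\|\pmb{\Phi}_t-\pmb{\Phi}^*\|$ and $b=\|\pmb{\theta}_{t+1}^i-y^i(\pmb{\Phi}_t)\|$: the choice of split parameter $1/\alpha_t$ is exactly what produces the advertised $L/\alpha_t$ and $\alpha_t L$ coefficients and encodes the two-timescale coupling. Because $a$ does not depend on $i$, averaging leaves $\|\pmb{\Phi}_t-\pmb{\Phi}^*\|^2$ unchanged, so $B\leq \tfrac{\beta_t L}{\alpha_t}\mathbb{E}[\|\pmb{\Phi}^*-\pmb{\Phi}_t\|^2]+\tfrac{\beta_t\alpha_t L}{N}\mathbb{E}[\sum_{i}\|\pmb{\theta}_{t+1}^i-y^i(\pmb{\Phi}_t)\|^2]$. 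Adding $A$ and $B$ gives the claim.

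The only real subtlety, and the step I would be most careful about, is justifying that $\bar{\bh}$ is Lipschitz with the same constant $L$ as $\bh$. This follows because Lemma~\ref{Assumption:h_lipschitz} holds \emph{uniformly} in $X$, so $\bar{\bh}$ is an average of uniformly $L$-Lipschitz maps and Jensen's inequality preserves the bound. Everything else is routine bookkeeping once the Young parameter has been fixed to $1/\alpha_t$, and no mixing-time argument is needed here since $Term_2$ already involves only the deterministic steady-state drift $\bar{\bh}$.
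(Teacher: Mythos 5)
Your proof is correct and follows essentially the same route as the paper's: the same add-and-subtract of $\bar{\bh}(y^i(\pmb{\Phi}_t),\pmb{\Phi}_t)$ inside the inner product, the monotonicity bound from Lemma~\ref{lem:gh_monotone} for the first piece, and Lipschitz continuity of $\bar{\bh}$ combined with Young's inequality at parameter $1/\alpha_t$ for the residual piece. If anything, your termwise Cauchy--Schwarz (which avoids the paper's detour through $\|\sum_i(\cdot)\|^2\leq N\sum_i\|\cdot\|^2$) and your Jensen-based justification that $\bar{\bh}$ inherits the Lipschitz constant of $\bh$ from Lemma~\ref{Assumption:h_lipschitz} are written more carefully than the corresponding steps in the paper, but the argument is the same.
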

\begin{proof}
We have
\begin{align*}
 \textnormal{Term 2} &=2\beta_t\mathbb{E}\left[\langle\pmb{\Phi}^*-\pmb{\Phi}_t, \frac{-1}{N}\sum_{i=1}^N\bar{\bh}(\pmb{\theta}_{t+1}^i,\pmb{\Phi}_t)\rangle\right]\allowdisplaybreaks\\
&=2\beta_t\mathbb{E}\left[\langle\pmb{\Phi}^*-\pmb{\Phi}_t, \frac{-1}{N}\sum_{i=1}^N\bar{\bh}(y^i(\pmb{\Phi}_t),\pmb{\Phi}_t)\rangle\right]\allowdisplaybreaks\\
&\qquad\qquad+2\beta_t\mathbb{E}\left[\langle\pmb{\Phi}^*-\pmb{\Phi}_t, \underset{\text{Lipschitz ~of}~\bh}{\underbrace{\frac{1}{N}\sum_{i=1}^N\bar{\bh}(y^i(\pmb{\Phi}_t),\pmb{\Phi}_t)-\bar{\bh}(\pmb{\theta}_{t+1}^i,\pmb{\Phi}_t)\rangle}}\right]\allowdisplaybreaks\\
    &\leq 2\beta_t\mathbb{E}\left[\langle\pmb{\Phi}^*-\pmb{\Phi}_t, \frac{-1}{N}\sum_{i=1}^N\bar{\bh}(y^i(\pmb{\Phi}_t),\pmb{\Phi}_t)\rangle\right]+2\beta_t L\mathbb{E}\left[\langle\pmb{\Phi}^*-\pmb{\Phi}_t, \frac{1}{N}\sum_{i=1}^N(y^i(\pmb{\Phi}_t)-\pmb{\theta}_{t+1}^i)\rangle\right]\allowdisplaybreaks\\
    &\overset{(b_1)}{\leq} 2\beta_t\mathbb{E}\left[\langle\pmb{\Phi}^*-\pmb{\Phi}_t, \frac{-1}{N}\sum_{i=1}^N\bar{\bh}(y^i(\pmb{\Phi}_t),\pmb{\Phi}_t)\rangle\right]+\beta_t L/\alpha_t\mathbb{E}[\|\pmb{\Phi}^*-\pmb{\Phi}_t\|^2]\allowdisplaybreaks\\
    &\qquad\qquad+\frac{\beta_t\alpha_t L}{N^2}\mathbb{E}\left[ \|\sum_{i=1}^N(\pmb{\theta}_{t+1}^i-y^i(\pmb{\Phi}_t))\|^2\right]\allowdisplaybreaks\\
       &\overset{(b_2)}{\leq} 2\beta_t\mathbb{E}\left[\langle\pmb{\Phi}_t-\pmb{\Phi}^*, \frac{1}{N}\sum_{i=1}^N\bar{\bh}(y^i(\pmb{\Phi}_t),\pmb{\Phi}_t)\rangle\right]+\beta_t L/\alpha_t\mathbb{E}[\|\pmb{\Phi}^*-\pmb{\Phi}_t\|^2]\allowdisplaybreaks\\
       &\qquad\qquad+\frac{\beta_t\alpha_t L}{N}\mathbb{E}\left[\sum_{i=1}^N \|\pmb{\theta}_{t+1}^i-y^i(\pmb{\Phi}_t)\|^2\right]\\
       &\leq \beta_t(L/\alpha_t-2\omega)\mathbb{E}[\|\pmb{\Phi}^*-\pmb{\Phi}_t\|^2]+\frac{\beta_t\alpha_t L}{N}\mathbb{E}\left[\sum_{i=1}^N \|\pmb{\theta}_{t+1}^i-y^i(\pmb{\Phi}_t)\|^2\right],
\end{align*}
where $(b_1)$ holds because $2\bold{x}^T\bold{y}\leq \beta\|\bold{x}\|^2+1/\beta\|\bold{y}\|^2, \forall \beta>0$, $(b_2)$ is due to $\|\sum_{i=1}^N \bx_i\|^2\leq N\sum_{i=1}^N\|\bx_i\|^2$,
and the last inequality is due to Assumption \ref{lem:gh_monotone}.
 
\end{proof}

{Next, we bound $\textnormal{Term 3}$ in the following lemmas.}
\begin{lemma}\label{lemma:Term3}
For all $t\geq \tau$ we have
\begin{align}\nonumber
    \textnormal{Term 3} &\leq (7\beta_t/\alpha_t+2\beta_t\alpha_t L^2+6\beta_t\alpha_t\delta^2)\mathbb{E}[\|\pmb{\Phi}_{t-\tau}-\pmb{\Phi}_t\|^2]\allowdisplaybreaks\\
    &\qquad\qquad+(6\beta_t/\alpha_t+6\beta_t\alpha_t\delta^2(1+L^2) +4\beta_t\alpha_tL^2(3+4L^2))\mathbb{E}[\|\pmb{\Phi}_t-\pmb{\Phi}^*\|^2]\nonumber\allowdisplaybreaks\\
    &\qquad\qquad+\frac{16\beta_t \alpha_t L^2+6\beta_t\alpha_t\delta^2}{N}\mathbb{E}\left[\sum_{i=1}^N\|\pmb{\theta}^{i,*}-\pmb{\theta}_{t+1}\|^2\right]+11\beta_t\alpha_t\delta^2.
\end{align}

\end{lemma}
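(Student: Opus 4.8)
The plan is to control $Term_3$, which captures the bias injected by Markovian sampling, via the standard ``look back $\tau$ steps'' device adapted to the federated two-timescale setting. The central difficulty is that inside each summand $\langle \pmb{\Phi}_t-\pmb{\Phi}^*, \bh(\pmb{\theta}_{t+1}^i,\pmb{\Phi}_t)-\bar{\bh}(\pmb{\theta}_{t+1}^i,\pmb{\Phi}_t)\rangle$ the multiplier $\pmb{\Phi}_t-\pmb{\Phi}^*$ and the parameters $(\pmb{\theta}_{t+1}^i,\pmb{\Phi}_t)$ are both statistically correlated with the very samples $\{X_{t,k-1}^i\}$ that generate $\bh$, so the mixing-time estimate of Definition~\ref{def:mixing} cannot be applied directly. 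To decouple them I would insert and subtract the $\tau$-step-delayed iterates $(\pmb{\theta}_{t-\tau+1}^i,\pmb{\Phi}_{t-\tau})$, which are measurable with respect to the history up to round $t-\tau$ and are therefore essentially frozen relative to the current observation batch.

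First I would telescope $Term_3$ into three groups: (i) a \emph{mixing} term $2\beta_t\mathbb{E}[\langle\pmb{\Phi}_{t-\tau}-\pmb{\Phi}^*,\tfrac1N\sum_i(\bh(\pmb{\theta}_{t-\tau+1}^i,\pmb{\Phi}_{t-\tau})-\bar{\bh}(\pmb{\theta}_{t-\tau+1}^i,\pmb{\Phi}_{t-\tau}))\rangle]$; (ii) Lipschitz differences of the form $\bh(\pmb{\theta}_{t+1}^i,\pmb{\Phi}_t)-\bh(\pmb{\theta}_{t-\tau+1}^i,\pmb{\Phi}_{t-\tau})$ together with the analogous difference for $\bar{\bh}$; and (iii) a replacement error from swapping $\pmb{\Phi}_t-\pmb{\Phi}^*$ for $\pmb{\Phi}_{t-\tau}-\pmb{\Phi}^*$. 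For group (ii) I would apply the Lipschitz continuity of $\bh$ (Lemma~\ref{Assumption:h_lipschitz}, which also gives Lipschitzness of $\bar{\bh}=\mathbb{E}_\mu[\bh]$ with the same constant), producing factors proportional to $\|\pmb{\theta}_{t+1}^i-\pmb{\theta}_{t-\tau+1}^i\|+\|\pmb{\Phi}_t-\pmb{\Phi}_{t-\tau}\|$, and then convert each inner product into squared norms via Young's inequality $2\langle a,b\rangle\le\tfrac1{\alpha_t}\|a\|^2+\alpha_t\|b\|^2$; this particular weighting is exactly what manufactures the $\beta_t/\alpha_t$ and $\beta_t\alpha_t$ factors appearing in the target.

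For the mixing term (i) I would condition on the filtration $\mathcal{F}_{t-\tau}$, use measurability of the frozen iterates, and invoke Definition~\ref{def:mixing} (valid since $t\ge\tau_\delta$) to bound the conditional gap by $\delta(\|\pmb{\Phi}_{t-\tau}-\pmb{\Phi}^*\|+\|\pmb{\theta}_{t-\tau+1}^i-y^i(\pmb{\Phi}^*)\|+1)$. Using $y^i(\pmb{\Phi}^*)=\pmb{\theta}^{i,*}$ together with the triangle splits $\|\pmb{\Phi}_{t-\tau}-\pmb{\Phi}^*\|\le\|\pmb{\Phi}_{t-\tau}-\pmb{\Phi}_t\|+\|\pmb{\Phi}_t-\pmb{\Phi}^*\|$ and $\|\pmb{\theta}_{t-\tau+1}^i-\pmb{\theta}^{i,*}\|\le\|\pmb{\theta}_{t-\tau+1}^i-\pmb{\theta}_{t+1}^i\|+\|\pmb{\theta}_{t+1}^i-\pmb{\theta}^{i,*}\|$, every contribution is re-expressed in the three target quantities $\|\pmb{\Phi}_{t-\tau}-\pmb{\Phi}_t\|^2$, $\|\pmb{\Phi}_t-\pmb{\Phi}^*\|^2$, and $\tfrac1N\sum_i\|\pmb{\theta}^{i,*}-\pmb{\theta}_{t+1}^i\|^2$, with the ``$+1$'' in the mixing bound generating the residual constant $11\beta_t\alpha_t\delta^2$. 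The parameter-drift quantities $\|\pmb{\Phi}_t-\pmb{\Phi}_{t-\tau}\|$ and $\|\pmb{\theta}_{t+1}^i-\pmb{\theta}_{t-\tau+1}^i\|$ are themselves $O(\tau)$-step accumulations of norm-bounded gradient steps, which I would control in a companion estimate before collecting terms.

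I expect the main obstacle to be twofold. First, the bookkeeping: tracking the Young's-inequality weights and the repeated triangle-inequality splits so that the accumulated coefficients match the stated constants $7\beta_t/\alpha_t$, $6\beta_t/\alpha_t$, $16\beta_t\alpha_t L^2+6\beta_t\alpha_t\delta^2$, etc., exactly rather than merely up to absolute constants. Second, and more delicate, is justifying the conditional-independence step in the presence of \emph{both} the explicit Markovian samples and the $K$ local weight updates folded into $\pmb{\theta}_{t+1}^i$: the $\tau$-delay must be large enough that freezing the parameters at round $t-\tau$ genuinely decorrelates them from the round-$t$ observation batch used to form $\bh$, and this is where the $\tau^2$-type amplification (visible as $144\tau_\delta^2$ in $C_1$ of Theorem~\ref{theorem:1}) ultimately enters through the drift bound on $\|\pmb{\Phi}_t-\pmb{\Phi}_{t-\tau}\|$.
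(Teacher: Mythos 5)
Your proposal is correct and follows the same skeleton as the paper's proof: insert $\tau$-delayed iterates, telescope $Term_3$ into a replacement-error term, Lipschitz-difference terms, and a mixing term; apply Young's inequality with the $\alpha_t$-weighting to manufacture the $\beta_t/\alpha_t$ and $\beta_t\alpha_t$ factors; invoke Definition~\ref{def:mixing} on the delayed term; and absorb $\|\pmb{\Phi}_t-\pmb{\Phi}_{t-\tau}\|^2$ via a companion drift estimate. The one genuine difference is \emph{which} arguments get frozen. The paper's decomposition ($C_1$--$C_4$ in its proof) shifts only the representation, writing the mixing term as $\langle\pmb{\Phi}_{t-\tau}-\pmb{\Phi}^*,\tfrac1N\sum_i\bh(\pmb{\theta}_{t+1}^i,\pmb{\Phi}_{t-\tau})-\bar{\bh}(\pmb{\theta}_{t+1}^i,\pmb{\Phi}_{t-\tau})\rangle$ with the \emph{current} weights $\pmb{\theta}_{t+1}^i$ retained, and then applies Definition~\ref{def:mixing} directly, leaning on the fact that the mixing bound is stated uniformly over all $(\pmb{\theta}^i,\pmb{\Phi},x)$; this avoids any weight-drift bookkeeping (only the $\pmb{\Phi}$-drift bound of its Eq.~(29) is needed later), but it silently skips the conditioning step, since $\pmb{\theta}_{t+1}^i$ is built from the very round-$t$ samples entering $\bh$. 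Your version freezes both $\pmb{\theta}$ and $\pmb{\Phi}$ at round $t-\tau$, which makes the conditional-expectation argument airtight (the frozen iterates are $\mathcal{F}_{t-\tau}$-measurable, so Definition~\ref{def:mixing} applies to the conditional gap), at the cost of an extra Lipschitz term in $\|\pmb{\theta}_{t+1}^i-\pmb{\theta}_{t-\tau+1}^i\|$ and the companion estimate you flag --- which is cheap here, since the norm-scaling step gives $\|\pmb{\theta}_{t+1}^i-\pmb{\theta}_{t-\tau+1}^i\|\le 2B$ (or an $O(\tau K\alpha_t)$ accumulation bound). Both routes produce a bound of the stated structural form; your constants would differ from the paper's $(7,6,16,11)$-bookkeeping, but since those constants are only carried into $C_1,C_2$ of Theorem~\ref{theorem:1} rather than exploited for their exact values, this is immaterial. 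In short: same approach, with your variant trading a little extra bookkeeping for a cleaner justification of exactly the step the paper treats loosely.
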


\begin{proof}
We first decompose $\textnormal{Term 3}$ as follows
\begin{align*}
    \textnormal{Term 3}&=2\beta_t\mathbb{E}\left[\langle\pmb{\Phi}_t-\pmb{\Phi}^*, \frac{1}{N}\sum_{i=1}^N\bh(\pmb{\theta}_{t+1}^i, \pmb{\Phi}_t)-\bar{\bh}(\pmb{\theta}_{t+1}^i, \pmb{\Phi}_t)\rangle\right]\allowdisplaybreaks\\
&=2\beta_t\mathbb{E}\left[\langle\pmb{\Phi}_t-\pmb{\Phi}_{t-\tau}, \frac{1}{N}\sum_{i=1}^N\bh(\pmb{\theta}_{t+1}^i, \pmb{\Phi}_t)-\bar{\bh}(\pmb{\theta}_{t+1}^i, \pmb{\Phi}_t)\rangle\right]\allowdisplaybreaks\\
&\qquad\qquad+2\beta_t\mathbb{E}\left[\langle\pmb{\Phi}_{t-\tau}-\pmb{\Phi}^*, \frac{1}{N}\sum_{i=1}^N\bh(\pmb{\theta}_{t+1}^i, \pmb{\Phi}_t)-\bar{\bh}(\pmb{\theta}_{t+1}^i, \pmb{\Phi}_t)\rangle\right]\allowdisplaybreaks\\
    &=\underset{C_1}{\underbrace{2\beta_t\mathbb{E}\left[\langle\pmb{\Phi}_t-\pmb{\Phi}_{t-\tau}, \frac{1}{N}\sum_{i=1}^N\bh(\pmb{\theta}_{t+1}^i, \pmb{\Phi}_t)-\bar{\bh}(\pmb{\theta}_{t+1}^i, \pmb{\Phi}_t)\rangle\right]}}\allowdisplaybreaks\\
    &\qquad\qquad+\underset{C_2}{\underbrace{2\beta_t\mathbb{E}\left[\langle\pmb{\Phi}_{t-\tau}-\pmb{\Phi}^*, \frac{1}{N}\sum_{i=1}^N\bh(\pmb{\theta}_{t+1}^i, \pmb{\Phi}_t)- \frac{1}{N}\sum_{i=1}^N\bh(\pmb{\theta}_{t+1}^i, \pmb{\Phi}_{t-\tau})\rangle\right]}}\allowdisplaybreaks\\
    &\qquad\qquad+\underset{C_3}{\underbrace{2\beta_t\mathbb{E}\left[\langle\pmb{\Phi}_{t-\tau}-\pmb{\Phi}^*, \frac{1}{N}\sum_{i=1}^N\bh(\pmb{\theta}_{t+1}^i, \pmb{\Phi}_{t-\tau})- \frac{1}{N}\sum_{i=1}^N\bar{\bh}(\pmb{\theta}_{t+1}^i, \pmb{\Phi}_{t-\tau})\rangle\right]}}\allowdisplaybreaks\\
    &\qquad\qquad+\underset{C_4}{\underbrace{2\beta_t\mathbb{E}\left[\langle\pmb{\Phi}_{t-\tau}-\pmb{\Phi}^*, \frac{1}{N}\sum_{i=1}^N\bar{\bh}(\pmb{\theta}_{t+1}^i, \pmb{\Phi}_{t-\tau})- \frac{1}{N}\sum_{i=1}^N\bar{\bh}(\pmb{\theta}_{t+1}^i, \pmb{\Phi}_{t})\rangle\right]}}.
\end{align*}
Next, we bound $C_1$ as
\begin{align*}
    C_1&=2\beta_t\mathbb{E}\left[\langle\pmb{\Phi}_t-\pmb{\Phi}_{t-\tau}, \frac{1}{N}\sum_{i=1}^N\bh(\pmb{\theta}_{t+1}^i, \pmb{\Phi}_t)-\bar{\bh}(\pmb{\theta}_{t+1}^i, \pmb{\Phi}_t)\rangle\right]\\
    &\leq  \beta_t/\alpha_t\mathbb{E}[\|\pmb{\Phi}_t-\pmb{\Phi}_{t-\tau}\|^2] + \beta_t\alpha_t\mathbb{E}\left[\left\|\frac{1}{N}\sum_{i=1}^N \bh(\pmb{\theta}_{t+1}^i, \pmb{\Phi}_t)-\bar{\bh}(\pmb{\theta}_{t+1}^i, \pmb{\Phi}_t)+\bar{h}(y^i(\pmb{\Phi}^{*}), \pmb{\Phi}^*)\right\|^2\right]\\
    &\leq  \beta_t/\alpha_t\mathbb{E}[\|\pmb{\Phi}_t-\pmb{\Phi}_{t-\tau}\|^2]+ 2\beta_t \alpha_t\mathbb{E}\left[\left\|\frac{1}{N}\sum_{i=1}^N \bh(\pmb{\theta}_{t+1}^i, \pmb{\Phi}_t)\right\|^2\right]\\
    &\qquad\qquad\qquad\qquad+2\beta_t \alpha_t\mathbb{E}\left[\left\|\frac{1}{N}\sum_{i=1}^N \bar{\bh}(y^i(\pmb{\Phi}^{*}), \pmb{\Phi}^*)-\bar{\bh}(\pmb{\theta}_{t+1}^i, \pmb{\Phi}_t)\right\|^2\right]\allowdisplaybreaks\\
    &= \beta_t/\alpha_t\mathbb{E}[\|\pmb{\Phi}_t-\pmb{\Phi}_{t-\tau}\|^2]+\frac{2\beta_t\alpha_t}{N^2} \mathbb{E}\left[\left\|\sum_{i=1}^N \bh(\pmb{\theta}_{t+1}^i, \pmb{\Phi}_t)\right\|^2\right]\allowdisplaybreaks\\
     &\qquad\qquad\qquad\qquad+2\beta_t\alpha_t \mathbb{E}\left[\left\|\frac{1}{N}\sum_{i=1}^N \bar{\bh}(y^i(\pmb{\Phi}^{*}), \pmb{\Phi}^*)-\bar{\bh}(\pmb{\theta}_{t+1}^i, \pmb{\Phi}_t)\right\|^2\right]\allowdisplaybreaks\\
     &\overset{\text{Lemma \ref{lemma:term1}}}{\leq} \beta_t/\alpha_t\mathbb{E}[\|\pmb{\Phi}_t-\pmb{\Phi}_{t-\tau}\|^2] +8\beta_t \alpha_t (L^2+L^4)\mathbb{E}[\|\pmb{\Phi}^*-\pmb{\Phi}_t\|^2]\nonumber\allowdisplaybreaks\\
     &\qquad\qquad\qquad\qquad+\frac{8\beta_t
     \alpha_t L^2}{N}\mathbb{E}\left[\sum_{i=1}^N\|\pmb{\theta}^i_{t+1}-y^i(\pmb{\Phi}_t)\|^2\right]+8\beta_t\alpha_t\delta^2\allowdisplaybreaks\\
     &\qquad\qquad\qquad\qquad+\underset{\text{Lipschitz of }~\bh}{\underbrace{2\beta_t \alpha_t \mathbb{E}\left[\left\|\frac{1}{N}\sum_{i=1}^N \bar{\bh}(y^i(\pmb{\Phi}^{*}), \pmb{\Phi}^*)-\bar{\bh}(\pmb{\theta}_{t+1}^i, \pmb{\Phi}_t)\right\|^2\right]}}\allowdisplaybreaks\\
     & \leq \beta_t/\alpha_t\mathbb{E}[\|\pmb{\Phi}_t-\pmb{\Phi}_{t-\tau}\|^2] +8\beta_t \alpha_t (L^2+L^4)\mathbb{E}[\|\pmb{\Phi}^*-\pmb{\Phi}_t\|^2]\nonumber\allowdisplaybreaks\\
     &\qquad\qquad\qquad\qquad+\frac{8\beta_t
     \alpha_t L^2}{N}\mathbb{E}\left[\sum_{i=1}^N\|\pmb{\theta}_{t+1}^i-y^i(\pmb{\Phi}_t)\|^2\right]+8\beta_t\alpha_t\delta^2\allowdisplaybreaks\\
     &\qquad\qquad\qquad\qquad +2\beta_t\alpha_tL^2 \mathbb{E}\left[\left\|\frac{1}{N}\sum_{i=1}^N 2(\pmb{\Phi}^*-\pmb{\Phi}_t)+2(\pmb{\theta}^{i}_{t+1}-y^i(\pmb{\Phi}^*))\right\|^2\right]\allowdisplaybreaks\\
     & \leq \beta_t/\alpha_t\mathbb{E}[\|\pmb{\Phi}_t-\pmb{\Phi}_{t-\tau}\|^2] +8\beta_t \alpha_t (L^2+L^4)\mathbb{E}[\|\pmb{\Phi}^*-\pmb{\Phi}_t\|^2]\nonumber\allowdisplaybreaks\\
     &\qquad\qquad\qquad\qquad+\frac{8\beta_t
     \alpha_t L^2}{N}\mathbb{E}\left[\sum_{i=1}^N\|\pmb{\theta}_{t+1}^i-y^i(\pmb{\Phi}_t)\|^2\right]+8\beta_t\alpha_t\delta^2\allowdisplaybreaks\\
     &\qquad\qquad\qquad\qquad +4\beta_t\alpha_t L^2 \mathbb{E}[\|\pmb{\Phi}^*-\pmb{\Phi}_t\|^2]+\frac{4\beta_t\alpha_t L^2}{N}\mathbb{E}\left[\sum_{i=1}^N\|\pmb{\theta}^i_{t+1}-y^i(\pmb{\Phi}^*)\|^2\right]\allowdisplaybreaks\\
     & = \beta_t/\alpha_t\mathbb{E}[\|\pmb{\Phi}_t-\pmb{\Phi}_{t-\tau}\|^2] +8\beta_t \alpha_t (L^2+L^4)\mathbb{E}[\|\pmb{\Phi}^*-\pmb{\Phi}_t\|^2]\nonumber\allowdisplaybreaks\\
     &\qquad\qquad\qquad\qquad+\frac{8\beta_t
     \alpha_t L^2}{N}\mathbb{E}\left[\sum_{i=1}^N\|\pmb{\theta}_{t+1}^i-y^i(\pmb{\Phi}_t)\|^2\right]+8\beta_t\alpha_t\delta^2+4\beta_t\alpha_t L^2 \mathbb{E}[\|\pmb{\Phi}^*-\pmb{\Phi}_t\|^2]\allowdisplaybreaks\\
     &\qquad\qquad\qquad\qquad +\frac{4\beta_t\alpha_t L^2}{N}\mathbb{E}\left[\sum_{i=1}^N\|\pmb{\theta}^i_{t+1}-y^i(\pmb{\Phi}_t)+y^i(\pmb{\Phi}_t)-y^i(\pmb{\Phi}^*)\|^2\right]\allowdisplaybreaks\\
     &\leq \beta_t/\alpha_t\mathbb{E}[\|\pmb{\Phi}_t-\pmb{\Phi}_{t-\tau}\|^2] +8\beta_t \alpha_t (L^2+L^4)\mathbb{E}[\|\pmb{\Phi}^*-\pmb{\Phi}_t\|^2]\nonumber\allowdisplaybreaks\\
     &\qquad\qquad\qquad\qquad+\frac{8\beta_t
     \alpha_t L^2}{N}\mathbb{E}\left[\sum_{i=1}^N\|\pmb{\theta}_{t+1}^i-y^i(\pmb{\Phi}_t)\|^2\right]+8\beta_t\alpha_t\delta^2\allowdisplaybreaks\\
     &\qquad\qquad\qquad\qquad+4\beta_t\alpha_t L^2 \mathbb{E}[\|\pmb{\Phi}^*-\pmb{\Phi}_t\|^2]+\frac{8\beta_t\alpha_t L^2}{N}\mathbb{E}\left[\sum_{i=1}^N\|\pmb{\theta}^i_{t+1}-y^i(\pmb{\Phi}_t)\|^2\right]\allowdisplaybreaks\\
     &\qquad\qquad\qquad\qquad+8\beta_t\alpha_tL^4\mathbb{E}[\|\pmb{\Phi}^*-\pmb{\Phi}_t\|^2]\allowdisplaybreaks\\
     &=\beta_t/\alpha_t\mathbb{E}[\|\pmb{\Phi}_t-\pmb{\Phi}_{t-\tau}\|^2] +4\beta_t \alpha_tL^2 (3+4L^2)\mathbb{E}[\|\pmb{\Phi}^*-\pmb{\Phi}_t\|^2]\nonumber\allowdisplaybreaks\\
     &\qquad\qquad\qquad\qquad+\frac{16\beta_t
     \alpha_t L^2}{N}\mathbb{E}\left[\sum_{i=1}^N\|\pmb{\theta}_{t+1}^i-y^i(\pmb{\Phi}_t)\|^2\right]+8\beta_t\alpha_t\delta^2,
\end{align*}
where the last inequality is due to the Lipschitz of the function $y^i$.

Next, we bound $C_2$ as follows.
\begin{align*}
C_2&=2\beta_t\mathbb{E}\left[\langle\pmb{\Phi}_{t-\tau}-\pmb{\Phi}^*, \frac{1}{N}\sum_{i=1}^N\bh(\pmb{\theta}_{t+1}^i, \pmb{\Phi}_t)- \frac{1}{N}\sum_{i=1}^N\bh(\pmb{\theta}_{t+1}^i, \pmb{\Phi}_{t-\tau})\rangle\right]\allowdisplaybreaks\\
&\leq \beta_t/\alpha_t\mathbb{E}[\|\pmb{\Phi}_{t-\tau}-\pmb{\Phi}^*\|^2]+\beta_t\alpha_t\underset{\text{Lipschitz of }~\bh}{\underbrace{\mathbb{E}\left[\left\|\frac{1}{N}\sum_{i=1}^N\bh(\pmb{\theta}_{t+1}^i, \pmb{\Phi}_t)- \frac{1}{N}\sum_{i=1}^N\bh(\pmb{\theta}_{t+1}^i, \pmb{\Phi}_{t-\tau})\right\|^2\right]}}\allowdisplaybreaks\\
&\leq  \beta_t/\alpha_t\mathbb{E}[\|\pmb{\Phi}_{t-\tau}-\pmb{\Phi}^*\|^2]+ \beta_t\alpha_t L^2\mathbb{E}[\|\pmb{\Phi}_t-\pmb{\Phi}_{t-\tau}\|^2]\allowdisplaybreaks\\
&=  \beta_t/\alpha_t\mathbb{E}[\|\pmb{\Phi}_{t-\tau}-\pmb{\Phi}_t+\pmb{\Phi}_t-\pmb{\Phi}^*\|^2]+ \beta_t\alpha_t L^2\mathbb{E}[\|\pmb{\Phi}_t-\pmb{\Phi}_{t-\tau}\|^2]\allowdisplaybreaks\\
&\leq 2\beta_t/\alpha_t\mathbb{E}[\|\pmb{\Phi}_{t-\tau}-\pmb{\Phi}_t\|^2]+2\beta_t/\alpha_t\mathbb{E}[\|\pmb{\Phi}_t-\pmb{\Phi}^*\|^2]+ \beta_t\alpha_t L^2\mathbb{E}[\|\pmb{\Phi}_t-\pmb{\Phi}_{t-\tau}\|^2]\allowdisplaybreaks\\
&= (2\beta_t/\alpha_t+\beta_t\alpha_t L^2)\mathbb{E}[\|\pmb{\Phi}_{t-\tau}-\pmb{\Phi}_t\|^2]+2\beta_t/\alpha_t\mathbb{E}[\|\pmb{\Phi}_t-\pmb{\Phi}^*\|^2].
\end{align*}
Similarly, $C_4$ is bounded exactly same as $C_2$, i.e., 
\begin{align*}
    C_4\leq (2\beta_t/\alpha_t+\beta_t\alpha_t L^2)\mathbb{E}[\|\pmb{\Phi}_{t-\tau}-\pmb{\Phi}_t\|^2]+2\beta_t/\alpha_t\mathbb{E}[\|\pmb{\Phi}_t-\pmb{\Phi}^*\|^2].
\end{align*}

Next, we bound $C_3$ as follows.
\begin{align*}
    C_3&=2\beta_t\mathbb{E}\left[\langle\pmb{\Phi}_{t-\tau}-\pmb{\Phi}^*, \frac{1}{N}\sum_{i=1}^N\bh(\pmb{\theta}_{t+1}^i, \pmb{\Phi}_{t-\tau})- \frac{1}{N}\sum_{i=1}^N\bar{\bh}(\pmb{\theta}_{t+1}^i, \pmb{\Phi}_{t-\tau})\rangle\right]\allowdisplaybreaks\\
    &\leq\beta_t/\alpha_t\mathbb{E}[\|\pmb{\Phi}_{t-\tau}-\pmb{\Phi}^*\|^2]+\beta_t\alpha_t\frac{1}{N^2}\mathbb{E}\left[\left\|\sum_{i=1}^N\bh(\pmb{\theta}_{t+1}^i, \pmb{\Phi}_{t-\tau})- \sum_{i=1}^N\bar{\bh}(\pmb{\theta}_{t+1}^i, \pmb{\Phi}_{t-\tau})\right\|^2\right]\allowdisplaybreaks\\
    &\overset{\text{Definition}~4.3}{\leq}\beta_t/\alpha_t\mathbb{E}[\|\pmb{\Phi}_{t-\tau}-\pmb{\Phi}^*\|^2]\allowdisplaybreaks\\
    &\qquad\qquad+\beta_t\alpha_t\frac{1}{N^2}\mathbb{E}\left[\left(N\delta\left\|\pmb{\Phi}_{t-\tau}-\pmb{\Phi}^*\right\|+N\delta+\delta\sum_{i=1}^N\left\|\pmb{\theta}_{t+1}^i-y^i(\pmb{\Phi}^*)\right\|\right)^2\right]\allowdisplaybreaks\\
    &\leq\beta_t/\alpha_t\mathbb{E}[\|\pmb{\Phi}_{t-\tau}-\pmb{\Phi}^*\|^2]+3\beta_t\alpha_t\delta^2\mathbb{E}\left[\left\|\pmb{\Phi}_{t-\tau}-\pmb{\Phi}^*\right\|^2\right]+3\beta_t\alpha_t\delta^2\allowdisplaybreaks\\
    &\qquad\qquad+\frac{3\beta_t\alpha_t\delta^2}{N}\mathbb{E}\left[\sum_{i=1}^N\left\|\pmb{\theta}_{t+1}^i-y^i(\pmb{\Phi}^*)\right\|^2\right]\allowdisplaybreaks\\
    &=\beta_t/\alpha_t\mathbb{E}[\|\pmb{\Phi}_{t-\tau}-\pmb{\Phi}^*\|^2]+3\beta_t\alpha_t\delta^2\mathbb{E}\left[\left\|\pmb{\Phi}_{t-\tau}-\pmb{\Phi}^*\right\|^2\right]+3\beta_t\alpha_t\delta^2\nonumber\allowdisplaybreaks\\
    &\qquad\qquad+\frac{3\beta_t\alpha_t\delta^2}{N}\mathbb{E}\left[\sum_{i=1}^N\left\|\pmb{\theta}_{t+1}^i-y^i(\pmb{\Phi}_t)+y^i(\pmb{\Phi}_t)-y^i(\pmb{\Phi}^*)\right\|^2\right]\allowdisplaybreaks\\
    &\leq\beta_t/\alpha_t\mathbb{E}[\|\pmb{\Phi}_{t-\tau}-\pmb{\Phi}^*\|^2]+3\beta_t\alpha_t\delta^2\mathbb{E}\left[\left\|\pmb{\Phi}_{t-\tau}-\pmb{\Phi}^*\right\|^2\right]+3\beta_t\alpha_t\delta^2\allowdisplaybreaks\\
    &\qquad\qquad+\frac{6\beta_t\alpha_t\delta^2}{N}\mathbb{E}\left[\sum_{i=1}^N\left\|\pmb{\theta}_{t+1}^i-y^i(\pmb{\Phi}_t)\right\|^2\right]+{6\beta_t\alpha_tL^2\delta^2}\mathbb{E}\left[\left\|\pmb{\Phi}_t-\pmb{\Phi}^*\right\|^2\right]\allowdisplaybreaks\\
    &\leq (2\beta_t/\alpha_t+6\beta_t\alpha_t\delta^2)\mathbb{E}[\|\pmb{\Phi}_{t-\tau}-\pmb{\Phi}_t\|^2]+(2\beta_t/\alpha_t+6\beta_t\alpha_t\delta^2+6\beta_t\alpha_tL^2\delta^2)\mathbb{E}[\|\pmb{\Phi}_t-\pmb{\Phi}^*\|^2]\allowdisplaybreaks\\
    &\qquad\qquad+3\beta_t\alpha_t\delta^2+\frac{6\beta_t\alpha_t\delta^2}{N}\mathbb{E}\left[\sum_{i=1}^N\left\|\pmb{\theta}_{t+1}^i-y^i(\pmb{\Phi}_t)\right\|^2\right],
\end{align*}
where the last inequality comes from $\mathbb{E}[\|\pmb{\Phi}_{t-\tau}-\pmb{\Phi}^*\|^2]\leq 2\mathbb{E}[\|\pmb{\Phi}_{t-\tau}-\pmb{\Phi}_t\|^2]+2\mathbb{E}[\|\pmb{\Phi}_{t}-\pmb{\Phi}^*\|^2]$.

Hence, we can write $\textnormal{Term 3}$ as follows
\begin{align*}
    \textnormal{Term 3} &= C_1+C_2+C_3+C_4\allowdisplaybreaks\\
    &\leq \beta_t/\alpha_t\mathbb{E}[\|\pmb{\Phi}_t-\pmb{\Phi}_{t-\tau}\|^2] +4\beta_t \alpha_tL^2 (3+4L^2)\mathbb{E}[\|\pmb{\Phi}^*-\pmb{\Phi}_t\|^2]\nonumber\allowdisplaybreaks\\
    &\qquad\qquad+\frac{16\beta_t
     \alpha_t L^2}{N}\mathbb{E}\left[\sum_{i=1}^N\|\pmb{\theta}_{t+1}^i-y^i(\pmb{\Phi}_t)\|^2\right]+8\beta_t\alpha_t\delta^2\allowdisplaybreaks\\
    &\qquad\qquad+(2\beta_t/\alpha_t+\beta_t\alpha_t L^2)\mathbb{E}[\|\pmb{\Phi}_{t-\tau}-\pmb{\Phi}_t\|^2]+2\beta_t/\alpha_t\mathbb{E}[\|\pmb{\Phi}_t-\pmb{\Phi}^*\|^2]\allowdisplaybreaks\\
    &\qquad\qquad+(2\beta_t/\alpha_t+6\beta_t\alpha_t\delta^2)\mathbb{E}[\|\pmb{\Phi}_{t-\tau}-\pmb{\Phi}_t\|^2]\allowdisplaybreaks\\
    &\qquad\qquad+(2\beta_t/\alpha_t+6\beta_t\alpha_t\delta^2+6\beta_t\alpha_tL^2\delta^2)\mathbb{E}[\|\pmb{\Phi}_t-\pmb{\Phi}^*\|^2]\allowdisplaybreaks\\
    &\qquad\qquad+3\beta_t\alpha_t\delta^2+\frac{6\beta_t\alpha_t\delta^2}{N}\mathbb{E}\left[\sum_{i=1}^N\left\|\pmb{\theta}_{t+1}^i-y^i(\pmb{\Phi}_t)\right\|^2\right]\allowdisplaybreaks\\
    &\qquad\qquad+(2\beta_t/\alpha_t+\beta_t\alpha_t L^2)\mathbb{E}[\|\pmb{\Phi}_{t-\tau}-\pmb{\Phi}_t\|^2]+2\beta_t/\alpha_t\mathbb{E}[\|\pmb{\Phi}_t-\pmb{\Phi}^*\|^2]\allowdisplaybreaks\\
    &\leq (7\beta_t/\alpha_t+2\beta_t\alpha_t L^2+6\beta_t\alpha_t\delta^2)\mathbb{E}[\|\pmb{\Phi}_{t-\tau}-\pmb{\Phi}_t\|^2]\allowdisplaybreaks\\
    &\qquad\qquad+(6\beta_t/\alpha_t+6\beta_t\alpha_t\delta^2(1+L^2) +4\beta_t\alpha_tL^2(3+4L^2))\mathbb{E}[\|\pmb{\Phi}_t-\pmb{\Phi}^*\|^2]\allowdisplaybreaks\\
    &\qquad\qquad+\frac{16\beta_t \alpha_t L^2+6\beta_t\alpha_t\delta^2}{N}\mathbb{E}\left[\sum_{i=1}^N\|y^i(\pmb{\Phi}_t)-\pmb{\theta}^i_{t+1}\|^2\right]+11\beta_t\alpha_t\delta^2,
\end{align*}
which completes the proof.
\end{proof}

To bound $\textnormal{Term 3}$, we need to bound $\mathbb{E}[\|\pmb{\Phi}_t-\pmb{\Phi}_{t-\tau}\|^2]$, which is shown in the following lemma.
\begin{lemma}
We have $\forall t\geq 2\tau$
\begin{align}\label{eq:29}
    \mathbb{E}[\|\pmb{\Phi}_t-\pmb{\Phi}_{t-\tau}\|^2]\leq  4\tau^2\beta_0^2/\alpha_0^2\mathbb{E}[\|\pmb{\Phi}^*-\pmb{\Phi}_{t}\|^2]+8\beta_0^2L^2B^2\tau^2 +8\beta_0^2\delta^2\tau^2.
\end{align}
\end{lemma}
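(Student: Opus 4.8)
The plan is to expand the one-step feature updates and telescope over the window of length $\tau$. From the server update \eqref{eq:update_phi}, for $t-\tau\le l\le t-1$ we have $\pmb{\Phi}_{l+1}-\pmb{\Phi}_l=\beta_l\frac{1}{N}\sum_{i=1}^N\bh(\pmb{\theta}_{l+1}^i,\pmb{\Phi}_l)$, so that $\pmb{\Phi}_t-\pmb{\Phi}_{t-\tau}=\sum_{l=t-\tau}^{t-1}\beta_l\frac{1}{N}\sum_{i=1}^N\bh(\pmb{\theta}_{l+1}^i,\pmb{\Phi}_l)$. Applying the Cauchy--Schwarz inequality $\|\sum_{l=1}^{\tau}\bx_l\|^2\le\tau\sum_{l=1}^{\tau}\|\bx_l\|^2$ then gives $\mathbb{E}\|\pmb{\Phi}_t-\pmb{\Phi}_{t-\tau}\|^2\le\tau\sum_{l=t-\tau}^{t-1}\beta_l^2\,\mathbb{E}\|\frac{1}{N}\sum_{i=1}^N\bh(\pmb{\theta}_{l+1}^i,\pmb{\Phi}_l)\|^2$. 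Because $t\ge 2\tau$, every index $l$ in the window satisfies $l\ge\tau$, which is exactly what is needed for the discrepancy estimate of Definition~\ref{def:mixing} to be in force, and the step-size monotonicity from Assumption~\ref{assumptions:lr} lets me replace each $\beta_l$ by $\beta_{t-\tau}\le\beta_0$.

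The second step is to bound the per-round gradient norm $\mathbb{E}\|\frac{1}{N}\sum_i\bh(\pmb{\theta}_{l+1}^i,\pmb{\Phi}_l)\|^2$. I would decompose each summand around the equilibrium, writing $\bh(\pmb{\theta}_{l+1}^i,\pmb{\Phi}_l)=\big[\bh(\pmb{\theta}_{l+1}^i,\pmb{\Phi}_l)-\bar{\bh}(\pmb{\theta}_{l+1}^i,\pmb{\Phi}_l)\big]+\bar{\bh}(\pmb{\theta}_{l+1}^i,\pmb{\Phi}_l)$ and using $\bar{\bh}(y^i(\pmb{\Phi}^*),\pmb{\Phi}^*)=0$ from \eqref{eq:equilibrium}. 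The first bracket is controlled by the mixing-time bound of Definition~\ref{def:mixing}, i.e. by $\delta(\|\pmb{\Phi}_l-\pmb{\Phi}^*\|+\|\pmb{\theta}_{l+1}^i-y^i(\pmb{\Phi}^*)\|+1)$; the second, via the Lipschitz continuity of $\bar{\bh}$ (Lemma~\ref{Assumption:h_lipschitz}) and of $y^i$ (Lemma~\ref{Assumption:y_lipschitz}), is at most $L(\|\pmb{\theta}_{l+1}^i-y^i(\pmb{\Phi}^*)\|+\|\pmb{\Phi}_l-\pmb{\Phi}^*\|)$. The norm-scaling (clipping) step guarantees $\|\pmb{\theta}_{l+1}^i-y^i(\pmb{\Phi}^*)\|\le 2B$, so after squaring and using $\|a+b\|^2\le 2\|a\|^2+2\|b\|^2$ I obtain a bound of the form $c_1(L^2+\delta^2)\|\pmb{\Phi}_l-\pmb{\Phi}^*\|^2+c_2(L^2B^2+\delta^2)$ for absolute constants $c_1,c_2$.

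The third step assembles these pieces. Substituting into the telescoped inequality produces a factor $\tau^2\beta_0^2$ multiplying the constant part, yielding the $8\beta_0^2L^2B^2\tau^2+8\beta_0^2\delta^2\tau^2$ terms, together with $\tau\beta_0^2(L^2+\delta^2)\sum_{l=t-\tau}^{t-1}\mathbb{E}\|\pmb{\Phi}_l-\pmb{\Phi}^*\|^2$ for the state-dependent part. To convert the coefficient $L^2+\delta^2$ into the $1/\alpha_0^2$ that appears in the statement, I would invoke the step-size restriction $\alpha_0\le 1/(2L\sqrt{2(1+L^2)})$, which gives $1/\alpha_0^2\ge 8L^2(1+L^2)$ and hence dominates $L^2+\delta^2$ up to the stated constant $4$.

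The main obstacle is the final reduction of the windowed residuals $\sum_{l}\mathbb{E}\|\pmb{\Phi}_l-\pmb{\Phi}^*\|^2$ to the single endpoint quantity $\mathbb{E}\|\pmb{\Phi}_t-\pmb{\Phi}^*\|^2$ on the right-hand side, since the naive split $\|\pmb{\Phi}_l-\pmb{\Phi}^*\|^2\le 2\|\pmb{\Phi}_t-\pmb{\Phi}^*\|^2+2\|\pmb{\Phi}_l-\pmb{\Phi}_t\|^2$ reintroduces a shorter-window drift of exactly the type being estimated. I would resolve this by observing that $\|\pmb{\Phi}_l-\pmb{\Phi}_t\|$ is itself $O(\beta_0\tau)$ --- the feature rows remain unit-normalized and $\pmb{\theta}$ is clipped, so the one-step increments are uniformly bounded --- so the self-referential term is higher order in $\beta_0$ and is absorbed into the constants, leaving only the $\mathbb{E}\|\pmb{\Phi}_t-\pmb{\Phi}^*\|^2$ contribution with coefficient $4\tau^2\beta_0^2/\alpha_0^2$. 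A secondary subtlety worth flagging is that Definition~\ref{def:mixing} controls a conditional-expectation discrepancy rather than the raw sample $\bh-\bar{\bh}$, so $\tau$ must be the mixing time $\tau_\delta$ and the estimate applied at each $l\ge\tau$, which is precisely what the hypothesis $t\ge 2\tau$ guarantees.
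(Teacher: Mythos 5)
Your proposal is correct and shares the paper's outer skeleton---telescope the server update over the window, apply Cauchy--Schwarz, and bound each per-round update direction through the mixing-time discrepancy of Definition~\ref{def:mixing}, the Lipschitz bounds of Lemmas~\ref{Assumption:h_lipschitz} and~\ref{Assumption:y_lipschitz}, and the clipping radius $B$ (this is essentially the content of Lemma~\ref{lemma:term1})---but it closes the self-referential loop by a genuinely different device. The paper works in two stages: it first derives a one-step growth recursion $\mathbb{E}\|\pmb{\Phi}^*-\pmb{\Phi}_{t'+1}\|^2 \le (1+2\beta_0/\alpha_0)\,\mathbb{E}\|\pmb{\Phi}^*-\pmb{\Phi}_{t'}\|^2 + 8\beta_0\alpha_0(L^2B^2+\delta^2)$ and iterates it across the window, using $(1+2\beta_0/\alpha_0)^\tau \le e^{1/4}\le 2$ under $\beta_0/\alpha_0\le 1/(8\tau)$, so that every in-window residual is controlled by the window \emph{start} $\mathbb{E}\|\pmb{\Phi}^*-\pmb{\Phi}_{t-\tau}\|^2$; it then swaps start for endpoint via $\|\pmb{\Phi}^*-\pmb{\Phi}_{t-\tau}\|^2 \le 2\|\pmb{\Phi}^*-\pmb{\Phi}_{t}\|^2 + 2\|\pmb{\Phi}_t-\pmb{\Phi}_{t-\tau}\|^2$ and, crucially, moves the reappearing term $2\tau^2\beta_0^2/\alpha_0^2\,\mathbb{E}\|\pmb{\Phi}_t-\pmb{\Phi}_{t-\tau}\|^2$ to the left-hand side, solving for the drift since $2\tau^2\beta_0^2/\alpha_0^2\le 1/2$. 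You instead split each in-window residual around the endpoint immediately and neutralize $\|\pmb{\Phi}_l-\pmb{\Phi}_t\|$ with a uniform increment bound: clipping, normalization of $\pmb{\Phi}$, and bounded rewards make $\|\bh\|$ uniformly bounded, hence $\|\pmb{\Phi}_l-\pmb{\Phi}_t\|=O(\tau\beta_0)$ and the self-reference enters only at order $\tau^4\beta_0^4$. Your route is more elementary (no geometric recursion, no rearrangement step), while the paper's route never needs uniform boundedness of the update direction and keeps every remainder at the order $\beta_0^2\tau^2$ that actually appears in the statement. Two bookkeeping points you should state explicitly rather than appeal to absorption: dominating your state-term coefficient $c_1(L^2+\delta^2)$ by $4/\alpha_0^2$, and burying the $O(\tau^4\beta_0^4)$ remainder inside $8\beta_0^2(L^2B^2+\delta^2)\tau^2$, require precisely the conditions the paper imposes here, namely $\alpha_0\le 1/(2L\sqrt{2(1+L^2)})$ and $\beta_0/\alpha_0\le 1/(8\tau)$ (together with $\delta\lesssim L$, since the paper's coefficient $4\beta_t^2(L^2+L^4)$ carries no $\delta^2$); with those spelled out, your argument delivers the claimed constants.
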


\begin{proof}
The proof is similar to that of Lemma 3 in \cite{dal2023federated}.
Starting with
\begin{align}\nonumber
    \|\pmb{\Phi}^*-\pmb{\Phi}_{t+1}\|^2&=\|\pmb{\Phi}^*-\pmb{\Phi}_{t}\|^2+\frac{\beta_t^2}{N^2}\left\|\sum_{i=1}^N\bh(\pmb{\theta}_{t+1}^i, \pmb{\Phi}_t)\right\|^2-2\beta_t\langle\pmb{\Phi}^*-\pmb{\Phi}_t, \frac{1}{N}\sum_{i=1}^N\bh(\pmb{\theta}_{t+1}^i, \pmb{\Phi}_t)\rangle\\ \nonumber
    &\leq (1+\beta_t/\alpha_0)\|\pmb{\Phi}^*-\pmb{\Phi}_{t}\|^2+\frac{(\beta_t\alpha_0+\beta_t^2)}{N^2}\left\|\sum_{i=1}^N\bh_t^i(\pmb{\theta}_{t+1}^i, \pmb{\Phi}_t)\right\|^2\\
    &\leq (1+\beta_t/\alpha_0)\|\pmb{\Phi}^*-\pmb{\Phi}_{t}\|^2+\frac{2\beta_t\alpha_0}{N^2}\left\|\sum_{i=1}^N\bh_t^i(\pmb{\theta}_{t+1}^i, \pmb{\Phi}_t)\right\|^2,
\end{align}
where the first inequality holds due to $2\bold{x}^T\bold{y}\leq \gamma\|\bold{x}\|^2+1/\gamma\|\bold{y}\|^2, \forall \gamma>0$, and the second inequality holds since $\beta_t\alpha_0\geq \beta_t^2$.
We then have the following inequality according to Lemma \ref{lemma:term1},
\begin{align}
   \mathbb{E}\left[ \|\pmb{\Phi}^*-\pmb{\Phi}_{t+1}\|^2\right]&\leq (1+\beta_t/\alpha_0+8\beta_t\alpha_0L^2(1+L^2))\mathbb{E}\left[ \|\pmb{\Phi}^*-\pmb{\Phi}_{t}\|^2\right]\nonumber\allowdisplaybreaks\\
   &\qquad+\frac{8\beta_t\alpha_0L^2}{N}\mathbb{E}\left[\sum_{i=1}^N\|\pmb{\theta}_{t+1}-y^i(\pmb{\Phi}_t)\|^2\right]+8\beta_t\alpha_0\delta^2\nonumber\allowdisplaybreaks\\
   &\leq (1+\beta_t/\alpha_0+8\beta_t\alpha_0L^2(1+L^2))\mathbb{E}\left[ \|\pmb{\Phi}^*-\pmb{\Phi}_{t}\|^2\right]+8\beta_t\alpha_0(L^2B^2+\delta^2).
\end{align}
By letting $\alpha_0\leq \frac{1}{2L\sqrt{2(1+L^2)}}$, we have $\beta_t/\alpha_0\geq 8\beta_t\alpha_0L^2(1+L^2) $, and hence
\begin{align}
    \mathbb{E}\left[ \|\pmb{\Phi}^*-\pmb{\Phi}_{t+1}\|^2\right]\leq (1+2\beta_0/\alpha_0)\mathbb{E}\left[ \|\pmb{\Phi}^*-\pmb{\Phi}_{t}\|^2\right]+8\beta_0\alpha_0(L^2B^2+\delta^2).
\end{align}

Therefore, for all $t^\prime$ such that $t-\tau\leq t^\prime\leq t$,
\begin{align}
    \mathbb{E}[\|\pmb{\Phi}^*-\pmb{\Phi}_{t^\prime}\|^2]&\leq (1+2\beta_0/\alpha_0)^\tau\mathbb{E}[\|\pmb{\Phi}^*-\pmb{\Phi}_{t-\tau}\|^2]+8\beta_0\alpha_0(L^2B^2+\delta^2)\sum_{\ell=0}^{\tau-1}(1+2\beta_0/\alpha_0)^\ell. 
\end{align}
Using the fact that $(1+x)\leq e^x$ \citep{dal2023federated}, if we let $\beta_0/\alpha_0\leq \frac{1}{8\tau}$, we have $$(1+2\beta_0/\alpha_0)^\ell\leq (1+2\beta_0/\alpha_0)^\tau\leq e^{0.25}\leq 2,$$
and
 $$\sum_{\ell=0}^{\tau-1}(1+32\beta^2)^\ell\leq 2\tau.$$ 
 Hence, we have
\begin{align*}
    \mathbb{E}[\|\pmb{\Phi}^*-\pmb{\Phi}_{t^\prime}\|^2]\leq 2\mathbb{E}[\|\pmb{\Phi}^*-\pmb{\Phi}_{t-\tau}\|^2]+16\beta_0\alpha_0\tau (L^2B^2+\delta^2).
\end{align*}
Since $\|\pmb{\Phi}_t-\pmb{\Phi}_{t-\tau}\|^2\leq \tau\sum_{\ell=t-\tau}^{t-1}\|\pmb{\Phi}_{\ell+1}-\pmb{\Phi}_\ell\|^2=\tau\frac{\beta^2}{N^2}\sum_{\ell=t-\tau}^{t-1}\|\sum_{i=1}^N \bh_\ell^i(\pmb{\theta}_{\ell+1}^i,\pmb{\Phi}_\ell)\|^2,$
when $t\geq 2\tau$, we have $\ell\geq \tau $ and thus
\begin{align*}
    &\mathbb{E}[\|\pmb{\Phi}_t-\pmb{\Phi}_{t-\tau}\|^2]\allowdisplaybreaks\\
    &\leq \tau\frac{\beta^2}{N^2}\sum_{\ell=t-\tau}^{t-1}\|\sum_{i=1}^N \bh_\ell^i(\pmb{\theta}_{\ell+1}^i,\pmb{\Phi}_\ell)\|^2\allowdisplaybreaks\\
    &\leq \tau \sum_{\ell=t-\tau}^{t-1}((4\beta_0^2(L^2+L^4)\mathbb{E}[\|\pmb{\Phi}^*-\pmb{\Phi}_\ell\|^2]+4\beta_0^2L^2B^2\tau^2 +4\beta_0^2\delta^2\tau^2\allowdisplaybreaks\\
    &\leq 4\beta_0^2(L^2+L^4)\tau^2(2\mathbb{E}[\|\pmb{\Phi}^*-\pmb{\Phi}_{t-\tau}\|^2]+16\beta_0\alpha_0\tau(L^2B^2+\delta^2))+4\beta_0^2L^2B^2\tau^2 +4\beta_0^2\delta^2\tau^2\allowdisplaybreaks\\
    &= 8\beta_0^2(L^2+L^4)\tau^2\mathbb{E}[\|\pmb{\Phi}^*-\pmb{\Phi}_{t-\tau}\|^2]+4\beta_0^2L^2B^2\tau^2 +4\beta_0^2\delta^2\tau^2\allowdisplaybreaks\\
    &\leq  \tau^2\beta_0^2/\alpha_0^2\mathbb{E}[\|\pmb{\Phi}^*-\pmb{\Phi}_{t-\tau}\|^2]+4\beta_0^2L^2B^2\tau^2 +4\beta_0^2\delta^2\tau^2\allowdisplaybreaks\\
    &\leq 2\tau^2\beta_0^2/\alpha_0^2\mathbb{E}[\|\pmb{\Phi}^*-\pmb{\Phi}_{t}\|^2]+2\tau^2\beta_0^2/\alpha_0^2\mathbb{E}[\|\pmb{\Phi}_t-\pmb{\Phi}_{t-\tau}\|^2]+4\beta_0^2L^2B^2\tau^2 +4\beta_0^2\delta^2\tau^2.
\end{align*}
Since $2\tau^2\beta_0^2/\alpha_0^2\leq 1/2$ when $\beta_0/\alpha_0\leq \frac{1}{8\tau}$, we have
\begin{align*}
   \mathbb{E}[\|\pmb{\Phi}_t-\pmb{\Phi}_{t-\tau}\|^2]\leq  4\tau^2\beta_0^2/\alpha_0^2\mathbb{E}[\|\pmb{\Phi}^*-\pmb{\Phi}_{t}\|^2]+8\beta_0^2L^2B^2\tau^2 +8\beta_0^2\delta^2\tau^2.
\end{align*}
This completes the proof.
\end{proof}

\begin{lemma}\label{lemma:Term3_final}
    $\textnormal{Term 3}$ is bounded as follows
    \begin{align*}
      Term_3&\leq (7\beta_t/\alpha_t+2\beta_t\alpha_t L^2+6\beta_t\alpha_t\delta^2)(4\tau^2\beta_0^2/\alpha_0^2\mathbb{E}[\|\pmb{\Phi}^*-\pmb{\Phi}_{t}\|^2]+8\beta_0^2L^2B^2\tau^2 +8\beta_0^2\delta^2\tau^2)\allowdisplaybreaks\\
    &\qquad\qquad+(6\beta_t/\alpha_t+6\beta_t\alpha_t\delta^2(1+L^2) +4\beta_t\alpha_tL^2(3+4L^2))\mathbb{E}[\|\pmb{\Phi}_t-\pmb{\Phi}^*\|^2]\nonumber\allowdisplaybreaks\\
    &\qquad\qquad+\frac{16\beta_t \alpha_t L^2+6\beta_t\alpha_t\delta^2}{N}\mathbb{E}\left[\sum_{i=1}^N\|\pmb{\theta}^{i,*}-\pmb{\theta}_{t+1}\|^2\right]+11\beta_t\alpha_t\delta^2.
\end{align*}
\end{lemma}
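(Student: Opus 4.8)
The plan is to derive Lemma~\ref{lemma:Term3_final} simply by chaining together the two results immediately preceding it: the decomposition bound for $Term_3$ proved in Lemma~\ref{lemma:Term3}, and the lagged-drift estimate for $\mathbb{E}[\|\pmb{\Phi}_t-\pmb{\Phi}_{t-\tau}\|^2]$ established in \eqref{eq:29}. Inspecting the right-hand side of Lemma~\ref{lemma:Term3}, every term is already expressed in the desired target form (a multiple of $\mathbb{E}[\|\pmb{\Phi}_t-\pmb{\Phi}^*\|^2]$, a multiple of $\frac{1}{N}\mathbb{E}\sum_i\|\pmb{\theta}^{i,*}-\pmb{\theta}_{t+1}\|^2$, or a constant $\propto\beta_t\alpha_t\delta^2$) except for the single term $(7\beta_t/\alpha_t+2\beta_t\alpha_t L^2+6\beta_t\alpha_t\delta^2)\,\mathbb{E}[\|\pmb{\Phi}_{t-\tau}-\pmb{\Phi}_t\|^2]$. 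Hence the whole task reduces to replacing that lagged-difference factor by a bound involving only $\mathbb{E}[\|\pmb{\Phi}_t-\pmb{\Phi}^*\|^2]$ and constants.

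First I would read off the coefficient multiplying $\mathbb{E}[\|\pmb{\Phi}_{t-\tau}-\pmb{\Phi}_t\|^2]$, namely $(7\beta_t/\alpha_t+2\beta_t\alpha_t L^2+6\beta_t\alpha_t\delta^2)$, and observe that it is nonnegative since $\alpha_t,\beta_t>0$ and $L,\delta\geq 0$. Because the coefficient is nonnegative, substituting any upper bound for $\mathbb{E}[\|\pmb{\Phi}_{t-\tau}-\pmb{\Phi}_t\|^2]$ preserves the direction of the inequality. I would then plug in the bound $\mathbb{E}[\|\pmb{\Phi}_t-\pmb{\Phi}_{t-\tau}\|^2]\leq 4\tau^2\beta_0^2/\alpha_0^2\,\mathbb{E}[\|\pmb{\Phi}^*-\pmb{\Phi}_{t}\|^2]+8\beta_0^2L^2B^2\tau^2+8\beta_0^2\delta^2\tau^2$ from \eqref{eq:29} into this single term, leaving the remaining three terms of Lemma~\ref{lemma:Term3} exactly as they stand. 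Collecting the result reproduces the stated inequality verbatim.

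There is essentially no genuine obstacle here: the step is a monotone substitution of one proved bound into another, with no regrouping of constants. The only point requiring minor bookkeeping is the range of validity — \eqref{eq:29} holds only for $t\geq 2\tau$, so the conclusion of Lemma~\ref{lemma:Term3_final} inherits that same restriction, which is harmless because it is already consistent with the standing $t\geq\tau$ convention used throughout the drift analysis and with the hypothesis $T\geq 2\tau_\delta$ in Theorem~\ref{theorem:1}. I would deliberately leave the bound in this unconsolidated form, since the subsequent step-size selection is what absorbs the $\beta_t/\alpha_t$, $\beta_t\alpha_t$, and $\beta_0^2\tau^2$ factors when the three term-bounds of Lemma~\ref{lem:dfrit_phi} are finally assembled into the full Lyapunov drift.
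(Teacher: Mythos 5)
Your proposal is correct and matches the paper's own proof exactly: the paper likewise obtains Lemma~\ref{lemma:Term3_final} by substituting the bound \eqref{eq:29} on $\mathbb{E}[\|\pmb{\Phi}_t-\pmb{\Phi}_{t-\tau}\|^2]$ into the lagged-difference term of Lemma~\ref{lemma:Term3}, leaving the other terms untouched. Your added remarks on the nonnegativity of the coefficient and on inheriting the $t\geq 2\tau$ validity range are careful bookkeeping that the paper leaves implicit, but they do not constitute a different argument.
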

\begin{proof}
Substituting the bound of $
   \mathbb{E}[\|\pmb{\Phi}_t-\pmb{\Phi}_{t-\tau}\|^2]
$ in \eqref{eq:29} into $\textnormal{Term 3}$  in Lemma \ref{lemma:Term3} yield the final results.
\end{proof}

Provided $\textnormal{Term 1}$ in Lemma \ref{lemma:term1}, $\textnormal{Term 2}$ in Lemma \ref{lemma:Term2}, and $\textnormal{Term 3}$ in Lemma \ref{lemma:Term3_final}, we have the following lemma to characterize the drift between $\pmb{\Phi}_{t+1}$ and $\pmb{\Phi}_t$.
\begin{lemma}
For $t\geq 2\tau$, the following holds
\begin{align*}
  &\mathbb{E}[\|\pmb{\Phi}^*-\pmb{\Phi}_{t+1}\|^2]\allowdisplaybreaks\\
  &\leq(1+4\beta_t^2(L^2+L^4)+(7\beta_t/\alpha_t+2\beta_t\alpha_tL^2+6\beta_t\alpha_t\delta^2)4\tau^2\beta_0^2/\alpha_0^2\nonumber\allowdisplaybreaks\\
    &\qquad+(6\beta_t/\alpha_t+6\beta_t\alpha_t\delta^2(1+L^2) +4\beta_t\alpha_tL^2(3+4L^2))+\beta_t(L/\alpha_t-2\omega))\mathbb{E}[\|\pmb{\Phi}_t-\pmb{\Phi}^*\|^2]\nonumber\allowdisplaybreaks\\
    &\qquad+\frac{4\beta_t^2L^2+\beta_t\alpha_tL+16\beta_t\alpha_t L^2+6\beta_t\alpha_t \delta^2}{N}\mathbb{E}\left[\sum_{i=1}^N\|\pmb{\theta}^{i,*}-\pmb{\theta}_{t+1}\|^2\right]\allowdisplaybreaks\\
    &\qquad+(7\beta_t/\alpha_t+2\beta_t\alpha_t L^2+6\beta_t\alpha_t\delta^2)(8\beta_0^2L^2B^2\tau^2 +8\beta_0^2\delta^2\tau^2)+4\beta_t^2\delta^2+11\beta_t\alpha_t\delta^2.
\end{align*}
\end{lemma}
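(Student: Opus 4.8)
The plan is to obtain this bound by purely mechanical assembly: the statement is the direct consequence of inserting the three already-established term estimates into the \emph{exact} drift identity of Lemma~\ref{lem:dfrit_phi} and then collecting like terms. First I would invoke Lemma~\ref{lem:dfrit_phi}, which gives the exact decomposition
\[
\mathbb{E}[\|\pmb{\Phi}_{t+1}-\pmb{\Phi}^*\|^2]=\mathbb{E}[\|\pmb{\Phi}_{t}-\pmb{\Phi}^*\|^2]+Term_1+Term_2+Term_3,
\]
so that no new inequality is needed at the top level; everything reduces to the three terms, each of which has already been controlled.

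Next I would substitute the estimate for $Term_1$ from Lemma~\ref{lemma:term1}, the estimate for $Term_2$ from Lemma~\ref{lemma:Term2}, and the estimate for $Term_3$ from Lemma~\ref{lemma:Term3_final}. The hypothesis $t\geq 2\tau$ enters here precisely because Lemma~\ref{lemma:Term3_final} relies on the auxiliary bound for $\mathbb{E}[\|\pmb{\Phi}_t-\pmb{\Phi}_{t-\tau}\|^2]$, which is only valid for $t\geq 2\tau$. The key structural observation is that all three estimates are linear combinations of exactly the same three nonnegative quantities: the slow-variable error $\mathbb{E}[\|\pmb{\Phi}_t-\pmb{\Phi}^*\|^2]$, the averaged fast-variable error $\tfrac{1}{N}\mathbb{E}\sum_{i=1}^N\|\pmb{\theta}_{t+1}^i-y^i(\pmb{\Phi}_t)\|^2$, and an additive constant.

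Then I would group coefficients by these three quantities. The coefficient of $\mathbb{E}[\|\pmb{\Phi}_t-\pmb{\Phi}^*\|^2]$ is the leading $1$ (the carried-over term), plus $4\beta_t^2(L^2+L^4)$ from $Term_1$, plus $\beta_t(L/\alpha_t-2\omega)$ from $Term_2$, plus the two $\|\pmb{\Phi}_t-\pmb{\Phi}^*\|^2$-contributions of $Term_3$: the factor $(7\beta_t/\alpha_t+2\beta_t\alpha_tL^2+6\beta_t\alpha_t\delta^2)\cdot 4\tau^2\beta_0^2/\alpha_0^2$ coming from re-expressing $\mathbb{E}[\|\pmb{\Phi}_t-\pmb{\Phi}_{t-\tau}\|^2]$ through $\mathbb{E}[\|\pmb{\Phi}_t-\pmb{\Phi}^*\|^2]$, together with $6\beta_t/\alpha_t+6\beta_t\alpha_t\delta^2(1+L^2)+4\beta_t\alpha_tL^2(3+4L^2)$. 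Summing the weight-error prefactors $4\beta_t^2L^2$, $\beta_t\alpha_tL$, and $16\beta_t\alpha_tL^2+6\beta_t\alpha_t\delta^2$ yields the stated $\tfrac{1}{N}$-coefficient, and summing the additive constants $4\beta_t^2\delta^2$, $(7\beta_t/\alpha_t+2\beta_t\alpha_tL^2+6\beta_t\alpha_t\delta^2)(8\beta_0^2L^2B^2\tau^2+8\beta_0^2\delta^2\tau^2)$, and $11\beta_t\alpha_t\delta^2$ gives the final constant block. Matching the grouped expressions against the stated right-hand side closes the argument.

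The only genuine point of care is bookkeeping. There is a minor notational shift: $Term_1$ and $Term_2$ are written with $\|\pmb{\theta}_{t+1}^i-y^i(\pmb{\Phi}_t)\|^2$ while the $Term_3$ estimate (and the statement) uses $\|\pmb{\theta}^{i,*}-\pmb{\theta}_{t+1}\|^2$; I would treat these as the same fast-variable error term, consistent with the convention used throughout the drift analysis. Beyond that identification, the risk is purely arithmetic slippage in aggregating the $\beta_t/\alpha_t$, $\beta_t\alpha_t$, $\beta_t^2$, and $\delta^2$ factors, rather than any conceptual obstacle; there is no new analytic ingredient in this lemma.
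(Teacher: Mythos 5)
Your proposal is correct and is essentially identical to the paper's proof: the paper likewise takes the exact drift identity of Lemma~\ref{lem:dfrit_phi}, substitutes the bounds on $Term_1$, $Term_2$, and $Term_3$ from Lemmas~\ref{lemma:term1}, \ref{lemma:Term2}, and \ref{lemma:Term3_final} (the last being where $t\geq 2\tau$ is needed), and collects coefficients of $\mathbb{E}[\|\pmb{\Phi}_t-\pmb{\Phi}^*\|^2]$, the averaged weight error, and the constants. Your handling of the notational mismatch between $\|\pmb{\theta}^{i,*}-\pmb{\theta}_{t+1}\|^2$ and $\|\pmb{\theta}_{t+1}^i-y^i(\pmb{\Phi}_t)\|^2$ as the same fast-variable error is also consistent with the paper's (loose) convention.
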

\begin{proof}
Substituting $Term_1, Term_2$ and $Term_3$ back into Lemma \ref{lem:dfrit_phi}, we have 
\begin{align*}
  &\mathbb{E}[\|\pmb{\Phi}^*-\pmb{\Phi}_{t+1}\|^2]\nonumber\\
    &\leq\mathbb{E}[\|\pmb{\Phi}^*-\pmb{\Phi}_{t}\|^2]+4\beta_t^2(L^2+L^4)\mathbb{E}[\|\pmb{\Phi}^*-\pmb{\Phi}_t\|^2]+\frac{4\beta_t^2L^2}{N}\mathbb{E}\left[\sum_{i=1}^N\|\pmb{\theta}_{t+1}-y^i(\pmb{\Phi}_t)\|^2\right]+4\beta_t^2\delta^2\allowdisplaybreaks\\
    &\qquad\qquad+ \beta_t(L/\alpha_t-2\omega)\mathbb{E}[\|\pmb{\Phi}^*-\pmb{\Phi}_t\|^2]+\frac{\beta_t\alpha_tL}{N}\mathbb{E}\left[\sum_{i=1}^N \|\pmb{\theta}_{t+1}^i-y^i(\pmb{\Phi}_t)\|^2\right]\allowdisplaybreaks\\
    &\qquad\qquad+(7\beta_t/\alpha_t+2\beta_t\alpha_t L^2+6\beta_t\alpha_t\delta^2)(4\tau^2\beta_0^2/\alpha_0^2\mathbb{E}[\|\pmb{\Phi}^*-\pmb{\Phi}_{t}\|^2]+8\beta_0^2L^2B^2\tau^2 +8\beta_0^2\delta^2\tau^2)\allowdisplaybreaks\\
    &\qquad\qquad+(6\beta_t/\alpha_t+6\beta_t\alpha_t\delta^2(1+L^2) +4\beta_t\alpha_tL^2(3+4L^2))\mathbb{E}[\|\pmb{\Phi}_t-\pmb{\Phi}^*\|^2]\nonumber\allowdisplaybreaks\\
    &\qquad\qquad+\frac{16\beta_t \alpha_t L^2+6\beta_t\alpha_t\delta^2}{N}\mathbb{E}\left[\sum_{i=1}^N\|\pmb{\theta}^{i,*}-\pmb{\theta}_{t+1}\|^2\right]+11\beta_t\alpha_t\delta^2\allowdisplaybreaks\\
    &=(1+4\beta_t^2(L^2+L^4)+(7\beta_t/\alpha_t+2\beta_t\alpha_tL^2+6\beta_t\alpha_t\delta^2)4\tau^2\beta_0^2/\alpha_0^2\nonumber\allowdisplaybreaks\\
    &\qquad\qquad+(6\beta_t/\alpha_t+6\beta_t\alpha_t\delta^2(1+L^2) +4\beta_t\alpha_tL^2(3+4L^2))+\beta_t(L/\alpha_t-2\omega))\mathbb{E}[\|\pmb{\Phi}_t-\pmb{\Phi}^*\|^2]\nonumber\allowdisplaybreaks\\
    &\qquad\qquad+\frac{4\beta_t^2L^2+\beta_t\alpha_tL+16\beta_t\alpha_t L^2+6\beta_t\alpha_t \delta^2}{N}\mathbb{E}\left[\sum_{i=1}^N\|\pmb{\theta}^{i,*}-\pmb{\theta}_{t+1}\|^2\right]\allowdisplaybreaks\\
    &\qquad\qquad+(7\beta_t/\alpha_t+2\beta_t\alpha_t L^2+6\beta_t\alpha_t\delta^2)(8\beta_0^2L^2B^2\tau^2 +8\beta_0^2\delta^2\tau^2)+4\beta_t^2\delta^2+11\beta_t\alpha_t\delta^2.
\end{align*}
This completes the proof. 

\end{proof}

\subsubsection{Drift of $\pmb{\theta}_t^i, \forall i$.}\label{sec:theta_drift}

{Next, we characterize the drift between $\pmb{\theta}_{t+1}^i$ and $\pmb{\theta}_{t}^i$.}
\begin{lemma}
The drift between $\pmb{\theta}_{t+1}^i$ and $\pmb{\theta}_t^i, \forall i$ is given by
\begin{align}\label{eq:Lemma_E8}
    \mathbb{E}[\|\pmb{\theta}^i_{t+1}-y^i(\pmb{\Phi}_t)\|^2]
    &=\underset{Term_4}{\underbrace{\mathbb{E} \left[\left\|\pmb{\theta}_{t}^i-y^i(\pmb{\Phi}_{t-1})+\alpha_t\sum_{k=1}^K\bg(\pmb{\theta}_{t,k-1}^i,\pmb{\Phi}_t)\right\|^2\right]}}+\underset{Term_5}{\underbrace{\mathbb{E}\left[\left\|y^i(\pmb{\Phi}_{t-1})-y^i(\pmb{\Phi}_{t})\right\|^2\right]}}\nonumber\allowdisplaybreaks\\
    &+\underset{Term_6}{\underbrace{2\mathbb{E}\left[\left\langle\pmb{\theta}_{t}^i-y^i(\pmb{\Phi}_{t-1})+\alpha_t\sum_{k=1}^K\bg(\pmb{\theta}_{t,k-1}^i,\pmb{\Phi}_t),y^i(\pmb{\Phi}_{t-1})-y^i(\pmb{\Phi}_{t})\right\rangle\right]}}.
\end{align}

\end{lemma}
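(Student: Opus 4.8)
The plan is to obtain the identity \eqref{eq:Lemma_E8} by a single application of the completing-the-square expansion $\|A+B\|^2=\|A\|^2+\|B\|^2+2\langle A,B\rangle$, after first unrolling the $K$ local gradient steps into one telescoped increment and then inserting the anchor point $y^i(\pmb{\Phi}_{t-1})$. This is fundamentally an algebraic bookkeeping step rather than an estimate, so the work is in arranging the terms so that each of $Term_4$, $Term_5$, $Term_6$ is exactly the quantity that the later lemmas are set up to bound.

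First I would unroll the local weight update. Over the $K$ inner iterations of round $t$ the representation $\pmb{\Phi}_t$ is held fixed (line 4 of Algorithm~\ref{alg:FedRL-Rep} and Figure~\ref{fig:FRL-Rep-main}), so summing the recursion~\eqref{eq:local_model_update-TD} from $k=1$ to $K$ telescopes to
\begin{equation*}
\pmb{\theta}_{t+1}^i=\pmb{\theta}_{t,K}^i=\pmb{\theta}_t^i+\alpha_t\sum_{k=1}^K\bg(\pmb{\theta}_{t,k-1}^i,\pmb{\Phi}_t),
\end{equation*}
using $\pmb{\theta}_{t,0}^i=\pmb{\theta}_t^i$. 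Next I would write $\pmb{\theta}_{t+1}^i-y^i(\pmb{\Phi}_t)$ and insert $\pm\,y^i(\pmb{\Phi}_{t-1})$, splitting it as $A+B$ with $A=\pmb{\theta}_t^i-y^i(\pmb{\Phi}_{t-1})+\alpha_t\sum_{k=1}^K\bg(\pmb{\theta}_{t,k-1}^i,\pmb{\Phi}_t)$ and $B=y^i(\pmb{\Phi}_{t-1})-y^i(\pmb{\Phi}_t)$. Expanding $\|A+B\|^2$ and taking expectations then produces exactly $Term_4=\mathbb{E}\|A\|^2$, $Term_5=\mathbb{E}\|B\|^2$, and $Term_6=2\mathbb{E}\langle A,B\rangle$.

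The choice of anchor $y^i(\pmb{\Phi}_{t-1})$ rather than $y^i(\pmb{\Phi}_t)$ is deliberate and is really the only conceptual content of the lemma. The Lyapunov function~\eqref{eq:Lyapunov} at round $t$ tracks $\|\pmb{\theta}_{t+1}^i-y^i(\pmb{\Phi}_t)\|^2$, while the previous round tracks $\|\pmb{\theta}_t^i-y^i(\pmb{\Phi}_{t-1})\|^2$; anchoring at $\pmb{\Phi}_{t-1}$ lets $Term_4$ recover the previous residual plus the per-round gradient drift, and isolates in $Term_5$ the mapping drift $y^i(\pmb{\Phi}_{t-1})-y^i(\pmb{\Phi}_t)$ caused by the slow update of $\pmb{\Phi}$, which downstream is controlled by the Lipschitz continuity of $y^i$ (Lemma~\ref{Assumption:y_lipschitz}) together with the slow-timescale rate $\beta_t$.

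Since the identity is purely algebraic there is no genuine obstacle in this lemma itself; the only point requiring care is the norm-scaling (clipping) step applied to $\pmb{\theta}_{t+1}^i$. Because projection onto the ball $\{\|\pmb{\theta}\|\le B\}$ is non-expansive and $y^i(\pmb{\Phi}_t)$ lies in that ball, the clipped iterate is at least as close to $y^i(\pmb{\Phi}_t)$ as the unclipped one, so the decomposition either holds verbatim when clipping is inactive or becomes the same splitting with ``$=$'' replaced by ``$\le$'', which is precisely the form consumed by the subsequent bounds. I expect the real difficulty in the broader analysis, though not in this lemma, to lie in bounding $Term_4$: there the $K$ Markovian gradient increments must be tied to the steady-state direction $\bar{\bg}$ via the mixing-time machinery of Definition~\ref{def:mixing} and the monotonicity in Lemma~\ref{lem:gh_monotone}.
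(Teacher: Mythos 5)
Your proposal is correct and takes essentially the same route as the paper's proof: both unroll the $K$ local updates into $\pmb{\theta}_{t+1}^i=\pmb{\theta}_t^i+\alpha_t\sum_{k=1}^K\bg(\pmb{\theta}_{t,k-1}^i,\pmb{\Phi}_t)$, insert the anchor $y^i(\pmb{\Phi}_{t-1})$, and apply the exact identity $\|\bx+\by\|^2=\|\bx\|^2+\|\by\|^2+2\langle\bx,\by\rangle$ to read off $Term_4$, $Term_5$, and $Term_6$. Your extra remark about the norm-scaling step (equality degrading to "$\leq$" via non-expansiveness of the projection onto $\{\|\pmb{\theta}\|\leq B\}$, which is all the downstream bounds need) addresses a point the paper's proof silently skips, but it does not change the argument.
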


\begin{proof}
According to the  update of $\pmb{\theta}_t^i$ in \eqref{eq:local_model_update-TD}, we have

\begin{align}\label{eq:eq35}
    \mathbb{E}[\|\pmb{\theta}^i_{t+1}-y^i(\pmb{\Phi}_t)\|^2]&=\mathbb{E}\left[\left\|\pmb{\theta}_{t}^i-y^i(\pmb{\Phi}_{t-1})+\alpha_t\sum_{k=1}^K\bg(\pmb{\theta}_{t,k-1}^i,\pmb{\Phi}_t)+y^i(\pmb{\Phi}_{t-1})-y^i(\pmb{\Phi}_{t})\right\|^2\right]\nonumber\allowdisplaybreaks\\
    &=\underset{Term_4}{\underbrace{\mathbb{E} \left[\left\|\pmb{\theta}_{t}^i-y^i(\pmb{\Phi}_{t-1})+\alpha_t\sum_{k=1}^K\bg(\pmb{\theta}_{t,k-1}^i,\pmb{\Phi}_t)\right\|^2\right]}}+\underset{Term_5}{\underbrace{\mathbb{E}\left[\left\|y^i(\pmb{\Phi}_{t-1})-y^i(\pmb{\Phi}_{t})\right\|^2\right]}}\nonumber\allowdisplaybreaks\\
    &\qquad+\underset{Term_6}{\underbrace{2\mathbb{E}\left[\left\langle\pmb{\theta}_{t}^i-y^i(\pmb{\Phi}_{t-1})+\alpha_t\sum_{k=1}^K\bg(\pmb{\theta}_{t,k-1}^i,\pmb{\Phi}_t),y^i(\pmb{\Phi}_{t-1})-y^i(\pmb{\Phi}_{t})\right\rangle\right]}},
\end{align}
where the second inequality holds due to $\|\bx+\by\|^2=\|\bx\|^2+\|\by\|^2+2\langle\bx, \by\rangle.$

\end{proof}

We next analyze each term in \eqref{eq:eq35}. {First, we bound $\textnormal{Term 4}$ in the following lemma.}

\begin{lemma}\label{lemma:Term_4}
With $t\geq \tau$, we have $\textnormal{Term 4}$ bounded as
\begin{align}
    \textnormal{Term 4}&\leq (1+2\beta_{t-1}/\alpha_t-2\alpha_tK\omega)\mathbb{E}\left[\left\|\pmb{\theta}_{t}^i-y^i(\pmb{\Phi}_{t-1})\right\|^2 \right]\nonumber\allowdisplaybreaks\\
    &+(12\alpha_t^2\delta^2K^2+6K^2\delta^2\alpha_t^3/\beta_{t-1})\mathbb{E}[\|\pmb{\Phi}_{t-1}-\pmb{\Phi}^*\|^2]\nonumber\allowdisplaybreaks\\
    &+(12\alpha_t^2\delta^2K^2+2L^2\alpha_t^3/\beta_{t-1}+6K^2\delta^2\alpha_t^3/\beta_{t-1})\mathbb{E}[\|\pmb{\Phi}_{t}-\pmb{\Phi}_{t-1}\|^2]\nonumber\allowdisplaybreaks\\
    &+6\alpha_t^2\delta^2K^2(1+B^2)+2\alpha_t^2K^2L^2B^2+2L^2K^2B^2\alpha_t^3/\beta_{t-1}+\alpha_t^3/\beta_{t-1}(3K^2B^2+3K^2\delta^2). 
\end{align}

\end{lemma}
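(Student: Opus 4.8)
The plan is to expand the square defining $Term_4$ and control the three resulting pieces separately, exactly as was done for the $\pmb{\Phi}$-drift. Writing $\tilde{\pmb{\theta}}_t^i := \pmb{\theta}_t^i - y^i(\pmb{\Phi}_{t-1})$ and $G_t^i := \sum_{k=1}^K \bg(\pmb{\theta}_{t,k-1}^i, \pmb{\Phi}_t)$, I would start from
\[
Term_4 = \mathbb{E}\bigl[\|\tilde{\pmb{\theta}}_t^i\|^2\bigr] + 2\alpha_t\,\mathbb{E}\bigl[\langle\tilde{\pmb{\theta}}_t^i, G_t^i\rangle\bigr] + \alpha_t^2\,\mathbb{E}\bigl[\|G_t^i\|^2\bigr].
\]
The first summand is carried over verbatim and supplies the leading $1$ in front of $\mathbb{E}[\|\tilde{\pmb{\theta}}_t^i\|^2]$; the remaining task is to extract the negative drift $-2\alpha_t K\omega\,\mathbb{E}[\|\tilde{\pmb{\theta}}_t^i\|^2]$ from the cross term and to show that every other contribution is of higher order in the step sizes.

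For the cross term I would introduce the steady-state direction at the reference pair $(\pmb{\theta}_t^i, \pmb{\Phi}_{t-1})$ via $G_t^i = K\bar{\bg}(\pmb{\theta}_t^i, \pmb{\Phi}_{t-1}) + \sum_{k=1}^K\bigl[\bg(\pmb{\theta}_{t,k-1}^i, \pmb{\Phi}_t) - \bg(\pmb{\theta}_t^i, \pmb{\Phi}_{t-1})\bigr] + \sum_{k=1}^K\bigl[\bg(\pmb{\theta}_t^i, \pmb{\Phi}_{t-1}) - \bar{\bg}(\pmb{\theta}_t^i, \pmb{\Phi}_{t-1})\bigr]$. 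The inner product against the deterministic first piece is handled by the second monotonicity assertion of Lemma \ref{lem:gh_monotone}, evaluated exactly at $(\pmb{\theta}_t^i, \pmb{\Phi}_{t-1})$, turning $2\alpha_t K\langle\tilde{\pmb{\theta}}_t^i, \bar{\bg}(\pmb{\theta}_t^i, \pmb{\Phi}_{t-1})\rangle$ into $-2\alpha_t K\omega\|\tilde{\pmb{\theta}}_t^i\|^2$. For each of the two error pieces I would apply Young's inequality in the form $2\langle u, v\rangle \le \tfrac{\beta_{t-1}}{\alpha_t}\|u\|^2 + \tfrac{\alpha_t}{\beta_{t-1}}\|v\|^2$ with $u = \tilde{\pmb{\theta}}_t^i$ and $v$ the corresponding $\alpha_t$-scaled error sum; the two applications produce precisely the $2\beta_{t-1}/\alpha_t$ increment of the $\|\tilde{\pmb{\theta}}_t^i\|^2$-coefficient together with residuals of order $\alpha_t^3/\beta_{t-1}$. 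The Lipschitz piece is bounded by Lemma \ref{Assumption:g_lipschitz}, which converts the argument mismatches into $L(\|\pmb{\theta}_{t,k-1}^i - \pmb{\theta}_t^i\| + \|\pmb{\Phi}_t - \pmb{\Phi}_{t-1}\|)$ and hence, after telescoping the inner-loop increments, into the $2L^2\alpha_t^3/\beta_{t-1}$ multiple of $\|\pmb{\Phi}_t - \pmb{\Phi}_{t-1}\|^2$; the Markovian piece is bounded, once $t\ge\tau$, by the mixing-time discrepancy of Definition \ref{def:mixing}, giving the $\delta^2$ terms, and after splitting $\|\pmb{\Phi}_t - \pmb{\Phi}^*\| \le \|\pmb{\Phi}_t - \pmb{\Phi}_{t-1}\| + \|\pmb{\Phi}_{t-1} - \pmb{\Phi}^*\|$ this is what separates the $\|\pmb{\Phi}_{t-1} - \pmb{\Phi}^*\|^2$ and $\|\pmb{\Phi}_t - \pmb{\Phi}_{t-1}\|^2$ coefficients.

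For the quadratic piece I would use $\|\sum_{k=1}^K a_k\|^2 \le K\sum_{k=1}^K\|a_k\|^2$ and then bound each $\|\bg(\pmb{\theta}_{t,k-1}^i, \pmb{\Phi}_t)\|^2$ by the three-way split $3\|\bg - \bar{\bg}\|^2 + 3\|\bar{\bg}(\pmb{\theta}_{t,k-1}^i, \pmb{\Phi}_t) - \bar{\bg}(y^i(\pmb{\Phi}^*), \pmb{\Phi}^*)\|^2 + 3\|\bar{\bg}(y^i(\pmb{\Phi}^*), \pmb{\Phi}^*)\|^2$, in which the last term vanishes by the equilibrium condition \eqref{eq:equilibrium}, the first is controlled by the mixing time, and the middle by Lemma \ref{Assumption:g_lipschitz}. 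The norm-scaling step $\|\pmb{\theta}^i\|\le B$ and the unit-norm rows of $\pmb{\Phi}$ then let me replace the residual magnitudes by constants, producing the $\alpha_t^2$-order terms $6\alpha_t^2\delta^2K^2(1+B^2)$ and $2\alpha_t^2K^2L^2B^2$ together with the $12\alpha_t^2\delta^2K^2$ multiples of the $\pmb{\Phi}$-distances. Collecting the monotone drift, the two Young increments, and all mixing-time/Lipschitz residuals yields the stated bound.

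The main obstacle is the bookkeeping forced by the \emph{local updates}: the gradients are evaluated at the running inner iterates $\pmb{\theta}_{t,k-1}^i$ and at the frozen representation $\pmb{\Phi}_t$, whereas the contraction must be read off at the reference pair $(\pmb{\theta}_t^i, \pmb{\Phi}_{t-1})$. I must therefore telescope the inner-loop drift $\pmb{\theta}_{t,k-1}^i - \pmb{\theta}_t^i$ and the outer mismatch $\pmb{\Phi}_t - \pmb{\Phi}_{t-1}$ uniformly in $k$ (which is where the $K^2$ factors are born), align the Markovian bias with the conditional-expectation form required by Definition \ref{def:mixing}, and choose the Young parameter precisely as $\beta_{t-1}/\alpha_t$ so that, once the step-size schedule of Theorem \ref{theorem:1} is imposed, the positive $\|\tilde{\pmb{\theta}}_t^i\|^2$-terms remain dominated by the negative drift $-2\alpha_t K\omega$.
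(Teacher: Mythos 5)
Your proposal matches the paper's proof in all essentials: the same expansion of the square into a carried-over term, a cross term, and a quadratic term; the same extraction of the negative drift $-2\alpha_t K\omega\,\mathbb{E}\left[\|\pmb{\theta}_t^i-y^i(\pmb{\Phi}_{t-1})\|^2\right]$ by invoking Lemma \ref{lem:gh_monotone} at the reference pair $(\pmb{\theta}_t^i,\pmb{\Phi}_{t-1})$; the same Young parameter $\beta_{t-1}/\alpha_t$ producing the $2\beta_{t-1}/\alpha_t$ increment and the $\alpha_t^3/\beta_{t-1}$ residuals; and the same use of the mixing-time bound, the Lipschitz lemmas, the clipping radius $B$, and the triangle split $\|\pmb{\Phi}_t-\pmb{\Phi}^*\|\le\|\pmb{\Phi}_t-\pmb{\Phi}_{t-1}\|+\|\pmb{\Phi}_{t-1}-\pmb{\Phi}^*\|$ to separate the two $\pmb{\Phi}$-distance coefficients. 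The only deviations are cosmetic anchoring choices (you evaluate the Markovian bias at the frozen pair and the equilibrium at $(y^i(\pmb{\Phi}^*),\pmb{\Phi}^*)$, where the paper uses the running iterates $(\pmb{\theta}_{t,k-1}^i,\pmb{\Phi}_t)$ and $(y^i(\pmb{\Phi}_t),\pmb{\Phi}_t)$), which affect neither the orders in the step sizes nor the structure of the resulting bound.
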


\begin{proof}
According to the definition of $\textnormal{Term 4}$, we have 
\begin{align}\label{eq:37}
    \textnormal{Term 4}&=\mathbb{E} \left[\left\|\pmb{\theta}_{t}^i-y^i(\pmb{\Phi}_{t-1})+\alpha_t\sum_{k=1}^K\bg(\pmb{\theta}_{t,k-1}^i,\pmb{\Phi}_t)\right\|^2\right]\nonumber\allowdisplaybreaks\\
    &=\mathbb{E}\left[\left\|\pmb{\theta}_{t}^i-y^i(\pmb{\Phi}_{t-1})\right\|^2\right]+\alpha_t^2\mathbb{E}\left[\left\|\sum_{k=1}^K\bg(\pmb{\theta}_{t,k-1}^i,\pmb{\Phi}_t)\right\|^2\right]\nonumber\allowdisplaybreaks\\
    &\qquad\qquad+2\alpha_t\left\langle\pmb{\theta}_{t}^i-y^i(\pmb{\Phi}_{t-1}), \sum_{k=1}^K\bg(\pmb{\theta}_{t,k-1}^i,\pmb{\Phi}_t)\right\rangle\nonumber\allowdisplaybreaks\\
    &=\mathbb{E}\left[\left\|\pmb{\theta}_{t}^i-y^i(\pmb{\Phi}_{t-1})\right\|^2\right]+2\alpha_t\mathbb{E}\left[\left\langle\pmb{\theta}_{t}^i-y^i(\pmb{\Phi}_{t-1}), \sum_{k=1}^K\bg(\pmb{\theta}_{t,k-1}^i,\pmb{\Phi}_t)\right\rangle\right]\nonumber\allowdisplaybreaks\\
    &+\alpha_t^2\mathbb{E}\left[\left\|\sum_{k=1}^K\bg(\pmb{\theta}_{t,k-1}^i,\pmb{\Phi}_t)-\sum_{k=1}^K\bar{\bg}(\pmb{\theta}_{t,k-1}^i,\pmb{\Phi}_t)+\sum_{k=1}^K\bar{\bg}(\pmb{\theta}_{t,k-1}^i,\pmb{\Phi}_t)-\sum_{k=1}^K\bar{\bg}(y^i(\pmb{\Phi}_{t}),\pmb{\Phi}_t)\right\|^2\right]\nonumber\allowdisplaybreaks\\
    &\leq \mathbb{E}\left[\left\|\pmb{\theta}_{t}^i-y^i(\pmb{\Phi}_{t-1})\right\|^2\right]+2\alpha_t^2\underset{\text{Mixing time property in Definition}~4.3}{\underbrace{\mathbb{E}\left[\left\|\sum_{k=1}^K\bg(\pmb{\theta}_{t,k-1}^i,\pmb{\Phi}_t)-\sum_{k=1}^K\bar{\bg}(\pmb{\theta}_{t,k-1}^i,\pmb{\Phi}_t)\right\|^2\right]}}\nonumber\allowdisplaybreaks\\
    &\qquad\qquad+2\alpha_t^2\mathbb{E}\left[\left\|\sum_{k=1}^K\bar{\bg}(\pmb{\theta}_{t,k-1}^i,\pmb{\Phi}_t)-\sum_{k=1}^K\bar{\bg}(y^i(\pmb{\Phi}_{t}),\pmb{\Phi}_t)\right\|^2\right]\nonumber\allowdisplaybreaks\\
    &\qquad\qquad+2\alpha_t\mathbb{E}\left[\left\langle\pmb{\theta}_{t}^i-y^i(\pmb{\Phi}_{t-1}), \sum_{k=1}^K\bg(\pmb{\theta}_{t,k-1}^i,\pmb{\Phi}_t)\right\rangle\right]\nonumber\allowdisplaybreaks\\
    &\leq \mathbb{E}\left[\left\|\pmb{\theta}_{t}^i-y^i(\pmb{\Phi}_{t-1})\right\|^2\right]+6\alpha_t^2\delta^2K^2\mathbb{E}\left[\left\|\pmb{\Phi}_t-\pmb{\Phi}^*\right\|^2\right]+6\alpha_t^2\delta^2K^2(1+B^2)+2\alpha_t^2K^2L^2B^2\nonumber\allowdisplaybreaks\\
    &\qquad\qquad+\underset{\textnormal{Term {4.1}}}{\underbrace{2\alpha_t\mathbb{E}\left[\left\langle\pmb{\theta}_{t}^i-y^i(\pmb{\Phi}_{t-1}), \sum_{k=1}^K\bar{\bg}(\pmb{\theta}_{t,k-1}^i,\pmb{\Phi}_t)\right\rangle\right]}}\nonumber\allowdisplaybreaks\\
    &\qquad\qquad+\underset{\textnormal{Term {4.2}}}{\underbrace{2\alpha_t\mathbb{E}\left[\left\langle\pmb{\theta}_{t}^i-y^i(\pmb{\Phi}_{t-1}), \sum_{k=1}^K{\bg}(\pmb{\theta}_{t,k-1}^i,\pmb{\Phi}_t)-\sum_{k=1}^K\bar{\bg}(\pmb{\theta}_{t,k-1}^i,\pmb{\Phi}_t)\right\rangle\right]}},
\end{align}
where the first inequality holds due to the fact that $\|\bx+\by\|^2\leq 2\|\bx\|^2+2\|\by\|^2$, and the second inequality is due to the mixing time property of function $\bg$ as in Definition 4.3.

Next, we bound $\textnormal{Term {4.1}}$ as
\begin{align}\label{eq:38}
    \textnormal{Term {4.1}}&=2\alpha_t\mathbb{E}\left[\left\langle\pmb{\theta}_{t}^i-y^i(\pmb{\Phi}_{t-1}), \sum_{k=1}^K\bar{\bg}(\pmb{\theta}_{t,k-1}^i,\pmb{\Phi}_t)\right\rangle\right]\nonumber\allowdisplaybreaks\\
    &=2\alpha_t\mathbb{E}\left[\left\langle\pmb{\theta}_{t}^i-y^i(\pmb{\Phi}_{t-1}), \sum_{k=1}^K\bar{\bg}(\pmb{\theta}_{t}^i,\pmb{\Phi}_{t-1})\right\rangle\right]\nonumber\allowdisplaybreaks\\
    &\qquad+2\alpha_t\mathbb{E}\left[\left\langle\pmb{\theta}_{t}^i-y^i(\pmb{\Phi}_{t-1}), \sum_{k=1}^K\bar{\bg}(\pmb{\theta}_{t,k-1}^i,\pmb{\Phi}_t)-\sum_{k=1}^K\bar{\bg}(\pmb{\theta}_{t}^i,\pmb{\Phi}_{t-1})\right\rangle\right]\nonumber\allowdisplaybreaks\\
    &\leq -2\alpha_tK\omega\mathbb{E}\left[\|\pmb{\theta}_t^i-y^i(\pmb{\Phi}_{t-1})\|^2\right]\nonumber\allowdisplaybreaks\\
    &\qquad+2\alpha_t\mathbb{E}\left[\left\langle\pmb{\theta}_{t}^i-y^i(\pmb{\Phi}_{t-1}), \sum_{k=1}^K\bar{\bg}(\pmb{\theta}_{t,k-1}^i,\pmb{\Phi}_t)-\sum_{k=1}^K\bar{\bg}(\pmb{\theta}_{t}^i,\pmb{\Phi}_{t-1})\right\rangle\right]\nonumber\allowdisplaybreaks\\
    &\leq -2\alpha_tK\omega\mathbb{E}\left[\|\pmb{\theta}_t^i-y^i(\pmb{\Phi}_{t-1})\|^2\right]+\beta_{t-1}/\alpha_t\mathbb{E}\left[\left\|\pmb{\theta}_{t}^i-y^i(\pmb{\Phi}_{t-1})\right\|^2 \right]\nonumber\allowdisplaybreaks\\
    &\qquad+\alpha_t^3/\beta_{t-1}\mathbb{E}\left[\left\|\sum_{k=1}^K\bar{\bg}(\pmb{\theta}_{t,k-1}^i,\pmb{\Phi}_t)-\sum_{k=1}^K\bar{\bg}(\pmb{\theta}_{t}^i,\pmb{\Phi}_{t-1})\right\|^2\right].
\end{align}
In particular, we can bound  $\mathbb{E}\left[\left\|\sum_{k=1}^K\bar{\bg}(\pmb{\theta}_{t,k-1}^i,\pmb{\Phi}_t)-\sum_{k=1}^K\bar{\bg}(\pmb{\theta}_{t}^i,\pmb{\Phi}_{t-1}) \right\|^2\right]$ as 
\begin{align}\label{eq:39}
\mathbb{E}&\left[\left\|\sum_{k=1}^K\bar{\bg}(\pmb{\theta}_{t,k-1}^i,\pmb{\Phi}_t)-\sum_{k=1}^K\bar{\bg}(\pmb{\theta}_{t}^i,\pmb{\Phi}_{t-1}) \right\|^2\right]\nonumber\allowdisplaybreaks\\
&\leq 2L^2\mathbb{E}\left[\left\|\pmb{\Phi}_t-\pmb{\Phi}_{t-1}\right\|^2\right]+2L^2\mathbb{E}\left[\left\|\sum_{k=1}^K\pmb{\theta}_{t,k-1}-\pmb{\theta}_t\right\|^2\right]\nonumber\allowdisplaybreaks\\
   &\leq 2L^2\mathbb{E}\left[\left\|\pmb{\Phi}_t-\pmb{\Phi}_{t-1}\right\|^2\right]+2L^2K\mathbb{E}\left[\sum_{k=1}^K\left\|\pmb{\theta}_{t,k-1}-\pmb{\theta}_t\right\|^2\right]\nonumber\allowdisplaybreaks\\
   &\leq 2L^2\mathbb{E}\left[\left\|\pmb{\Phi}_t-\pmb{\Phi}_{t-1}\right\|^2\right]+2L^2K^2B^2.
\end{align}
Substituting \eqref{eq:39} back into \eqref{eq:38}, we have $\textnormal{Term {4.1}}$ bounded as
\begin{align}\label{eq:40}
    \textnormal{Term {4.1}}&\leq -2\alpha_tK\omega\mathbb{E}\left[\|\pmb{\theta}_t^i-y^i(\pmb{\Phi}_{t-1})\|^2\right]+\beta_{t-1}/\alpha_t\mathbb{E}\left[\left\|\pmb{\theta}_{t}^i-y^i(\pmb{\Phi}_{t-1})\right\|^2 \right]\nonumber\allowdisplaybreaks\\
    &\qquad+\alpha_t^3/\beta_{t-1}(2L^2\mathbb{E}\left[\left\|\pmb{\Phi}_t-\pmb{\Phi}_{t-1}\right\|^2\right]+2L^2K^2B^2).
\end{align}
We next bound $\textnormal{Term {4.2}}$ as 
\begin{align}
    \textnormal{Term {4.2}}&=2\alpha_t\mathbb{E}\left[\left\langle\pmb{\theta}_{t}^i-y^i(\pmb{\Phi}_{t-1}), \sum_{k=1}^K{\bg}(\pmb{\theta}_{t,k-1}^i,\pmb{\Phi}_t)-\sum_{k=1}^K\bar{\bg}(\pmb{\theta}_{t,k-1}^i,\pmb{\Phi}_t)\right\rangle\right]\nonumber\allowdisplaybreaks\\
    &\leq \beta_{t-1}/\alpha_t\mathbb{E}\left[\left\|\pmb{\theta}_{t}^i-y^i(\pmb{\Phi}_{t-1})\right\|^2 \right]+\alpha_t^3/\beta_{t-1}(3K^2B^2+3K^2\delta^2+3K^2\delta^2\mathbb{E}[\|\pmb{\Phi}_t-\pmb{\Phi}^*\|^2])\nonumber\allowdisplaybreaks\\
    &\leq \beta_{t-1}/\alpha_t\mathbb{E}\left[\left\|\pmb{\theta}_{t}^i-y^i(\pmb{\Phi}_{t-1})\right\|^2 \right]+\alpha_t^3/\beta_{t-1}(3K^2B^2+3K^2\delta^2)\nonumber\allowdisplaybreaks\\
    &\qquad\qquad+6K^2\delta^2\alpha_t^3/\beta_{t-1}\mathbb{E}[\|\pmb{\Phi}_{t-1}-\pmb{\Phi}^*\|^2]+6K^2\delta^2\alpha_t^3/\beta_{t-1}\mathbb{E}[\|\pmb{\Phi}_{t}-\pmb{\Phi}_{t-1}\|^2]
\end{align}
Substituting $\textnormal{Term {4.1}}$ and $\textnormal{Term {4.2}}$ back into \eqref{eq:37}, we get the final result

\begin{align}
    \textnormal{Term 4}&\leq \mathbb{E}\left[\left\|\pmb{\theta}_{t}^i-y^i(\pmb{\Phi}_{t-1})\right\|^2\right]+6\alpha_t^2\delta^2K^2\mathbb{E}\left[\left\|\pmb{\Phi}_t-\pmb{\Phi}^*\right\|^2\right]+6\alpha_t^2\delta^2K^2(1+B^2)+2\alpha_t^2K^2L^2B^2\nonumber\allowdisplaybreaks\\
    &\qquad\qquad+ \textnormal{Term 4.1}+\textnormal{Term 4.2}\nonumber\allowdisplaybreaks\\
    &=\mathbb{E}\left[\left\|\pmb{\theta}_{t}^i-y^i(\pmb{\Phi}_{t-1})\right\|^2\right]+6\alpha_t^2\delta^2K^2\mathbb{E}\left[\left\|\pmb{\Phi}_t-\pmb{\Phi}^*\right\|^2\right]+6\alpha_t^2\delta^2K^2(1+B^2)+2\alpha_t^2K^2L^2B^2\nonumber\allowdisplaybreaks\\
    &\qquad\qquad-2\alpha_tK\omega\mathbb{E}\left[\|\pmb{\theta}_t^i-y^i(\pmb{\Phi}_{t-1})\|^2\right]+\beta_{t-1}/\alpha_t\mathbb{E}\left[\left\|\pmb{\theta}_{t}^i-y^i(\pmb{\Phi}_{t-1})\right\|^2 \right]\nonumber\allowdisplaybreaks\\
    &\qquad+\alpha_t^3/\beta_{t-1}(2L^2\mathbb{E}\left[\left\|\pmb{\Phi}_t-\pmb{\Phi}_{t-1}\right\|^2\right]+2L^2K^2B^2)\nonumber\allowdisplaybreaks\\
    &\qquad\qquad+\beta_{t-1}/\alpha_t\mathbb{E}\left[\left\|\pmb{\theta}_{t}^i-y^i(\pmb{\Phi}_{t-1})\right\|^2 \right]+\alpha_t^3/\beta_{t-1}(3K^2B^2+3K^2\delta^2)\nonumber\allowdisplaybreaks\\
    &\qquad\qquad+6K^2\delta^2\alpha_t^3/\beta_{t-1}\mathbb{E}[\|\pmb{\Phi}_{t-1}-\pmb{\Phi}^*\|^2]+6K^2\delta^2\alpha_t^3/\beta_{t-1}\mathbb{E}[\|\pmb{\Phi}_{t}-\pmb{\Phi}_{t-1}\|^2]\nonumber\allowdisplaybreaks\\
    &\leq (1+2\beta_{t-1}/\alpha_t-2\alpha_tK\omega)\mathbb{E}\left[\left\|\pmb{\theta}_{t}^i-y^i(\pmb{\Phi}_{t-1})\right\|^2 \right]\nonumber\allowdisplaybreaks\\
    &\qquad\qquad+(12\alpha_t^2\delta^2K^2+6K^2\delta^2\alpha_t^3/\beta_{t-1})\mathbb{E}[\|\pmb{\Phi}_{t-1}-\pmb{\Phi}^*\|^2]\nonumber\allowdisplaybreaks\\
    &\qquad\qquad +(12\alpha_t^2\delta^2K^2+2L^2\alpha_t^3/\beta_{t-1}+6K^2\delta^2\alpha_t^3/\beta_{t-1})\mathbb{E}[\|\pmb{\Phi}_{t}-\pmb{\Phi}_{t-1}\|^2]\nonumber\allowdisplaybreaks\\
    &\qquad\qquad+6\alpha_t^2\delta^2K^2(1+B^2)+2\alpha_t^2K^2L^2B^2+2L^2K^2B^2\alpha_t^3/\beta_{t-1}\nonumber\allowdisplaybreaks\\
    &\qquad\qquad+\alpha_t^3/\beta_{t-1}(3K^2B^2+3K^2\delta^2)
\end{align}
This completes the proof.
\end{proof}

{Next, we bound $\textnormal{Term 5}$ in the following lemma.}

\begin{lemma}\label{lemma:Term5}
With $t\geq \tau$, we have $\textnormal{Term 5}$ bounded as
\begin{align}
    \textnormal{Term 5}
    &\leq 4\beta_{t-1}^2(L^4+L^6)\mathbb{E}[\|\pmb{\Phi}^*-\pmb{\Phi}_{t-1}\|^2]+\frac{4\beta_{t-1}^2L^4}{N}\mathbb{E}\left[\sum_{i=1}^N\|\pmb{\theta}_{t}-y^i(\pmb{\Phi}_{t-1})\|^2\right]+4L^2\beta_{t-1}^2\delta^2.
\end{align}
\end{lemma}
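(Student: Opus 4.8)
The plan is to reduce $Term_5$ to a quantity already controlled by Lemma~\ref{lemma:term1}, so that essentially no fresh analysis of the Markovian noise is needed. First I would invoke the Lipschitz continuity of the map $y^i$ from Lemma~\ref{Assumption:y_lipschitz} (with the common constant $L=\max\{L_g,L_h,L_y\}$) to pull the difference of the $y^i$ values inside a single squared norm of the feature increment:
\[
Term_5=\mathbb{E}\left[\|y^i(\pmb{\Phi}_{t-1})-y^i(\pmb{\Phi}_{t})\|^2\right]\leq L^2\,\mathbb{E}\left[\|\pmb{\Phi}_t-\pmb{\Phi}_{t-1}\|^2\right].
\]

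Next I would use the server update~\eqref{eq:update_phi} written at round $t-1$, namely $\pmb{\Phi}_t=\pmb{\Phi}_{t-1}+\beta_{t-1}\frac{1}{N}\sum_{j=1}^N\bh(\pmb{\theta}_t^j,\pmb{\Phi}_{t-1})$, so that the feature increment becomes exactly $\pmb{\Phi}_t-\pmb{\Phi}_{t-1}=\frac{\beta_{t-1}}{N}\sum_{j=1}^N\bh(\pmb{\theta}_t^j,\pmb{\Phi}_{t-1})$. Hence
\[
\mathbb{E}\left[\|\pmb{\Phi}_t-\pmb{\Phi}_{t-1}\|^2\right]=\frac{\beta_{t-1}^2}{N^2}\mathbb{E}\left[\Big\|\sum_{j=1}^N\bh(\pmb{\theta}_t^j,\pmb{\Phi}_{t-1})\Big\|^2\right].
\]
The key observation is that the right-hand side is precisely $Term_1$ from Lemma~\ref{lem:dfrit_phi} with the time index shifted from $t$ to $t-1$. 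Applying the bound of Lemma~\ref{lemma:term1} at index $t-1$ then gives
\[
\frac{\beta_{t-1}^2}{N^2}\mathbb{E}\left[\Big\|\sum_{j=1}^N\bh(\pmb{\theta}_t^j,\pmb{\Phi}_{t-1})\Big\|^2\right]\leq 4\beta_{t-1}^2(L^2+L^4)\mathbb{E}[\|\pmb{\Phi}^*-\pmb{\Phi}_{t-1}\|^2]+\frac{4\beta_{t-1}^2L^2}{N}\mathbb{E}\Big[\sum_{i=1}^N\|\pmb{\theta}_{t}-y^i(\pmb{\Phi}_{t-1})\|^2\Big]+4\beta_{t-1}^2\delta^2,
\]
and multiplying through by $L^2$ distributes the factor onto the three terms, producing exactly the stated bound since $L^2(L^2+L^4)=L^4+L^6$ and $L^2\cdot L^2=L^4$.

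This is essentially a one-line reduction, and the only point requiring care is the bookkeeping on the time index. Lemma~\ref{lemma:term1} is stated for $t\ge\tau$, whereas I apply it at index $t-1$, so strictly one needs $t-1\ge\tau$; under the standing convention $t\ge\tau$ this places us in the regime where the mixing-time estimate underlying Lemma~\ref{lemma:term1} holds, since that estimate only requires the sample index to exceed the mixing time. The main (and only) conceptual step is recognizing that $\|\pmb{\Phi}_t-\pmb{\Phi}_{t-1}\|^2$ coincides, after the Lipschitz reduction, with the index-shifted $Term_1$, so the entire content of $Term_5$ is inherited from the already-established control on the feature increment rather than obtained from a new Markovian-noise argument.
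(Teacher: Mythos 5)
Your proposal is correct and follows exactly the paper's own argument: the paper likewise applies the Lipschitz property of $y^i$ to get $Term_5\leq L^2\,\mathbb{E}[\|\pmb{\Phi}_t-\pmb{\Phi}_{t-1}\|^2]$, substitutes the server update \eqref{eq:update_phi} at round $t-1$ to identify the increment with the index-shifted $Term_1$, and then invokes Lemma~\ref{lemma:term1} and multiplies through by $L^2$, yielding the constants $4\beta_{t-1}^2(L^4+L^6)$, $4\beta_{t-1}^2L^4/N$, and $4L^2\beta_{t-1}^2\delta^2$. Your added remark about the time-index bookkeeping ($t-1\geq\tau$ being the condition actually needed) is a point the paper glosses over, but it does not change the substance of the argument.
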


\begin{proof}
We have
\begin{align}
    \textnormal{Term 5}&=\mathbb{E}\left[\left\|y^i(\pmb{\Phi}_{t-1})-y^i(\pmb{\Phi}_{t})\right\|^2\right]=L^2\mathbb{E}\left[\|\pmb{\Phi}_t-\pmb{\Phi}_{t-1}\|^2\right]\nonumber\allowdisplaybreaks\\
    &=\frac{L^2\beta_{t-1}^2}{N^2}\mathbb{E}\left[\left\|\sum_{i=1}^N\bh(\pmb{\theta}_t^i, \pmb{\Phi}_{t-1})\right\|^2\right]\nonumber\allowdisplaybreaks\\
    &\leq 4\beta_{t-1}^2(L^4+L^6)\mathbb{E}[\|\pmb{\Phi}^*-\pmb{\Phi}_{t-1}\|^2]+\frac{4\beta_{t-1}^2L^4}{N}\mathbb{E}\left[\sum_{i=1}^N\|\pmb{\theta}_{t}-y^i(\pmb{\Phi}_{t-1})\|^2\right]+4L^2\beta_{t-1}^2\delta^2,
\end{align}
where the last inequality holds due to Lemma \ref{lemma:term1}.
\end{proof}

{Next, we bound $\textnormal{Term 6}$ in the following lemma.}
\begin{lemma}\label{lemma: Term6}
We have $\textnormal{Term 6}$ bounded as
\begin{align}
    \textnormal{Term 6}\leq \beta_{t-1}/\alpha_t \textnormal{Term 4}+ \alpha_t/\beta_{t-1} \textnormal{Term 5}.
\end{align}
\end{lemma}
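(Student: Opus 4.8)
The plan is to recognize $Term_6$ as twice an inner product of two vectors whose squared norms are \emph{precisely} $Term_4$ and $Term_5$, and then dispatch it with a single application of the weighted Young's inequality. Concretely, I would set
\[
\mathbf{u} := \pmb{\theta}_{t}^i-y^i(\pmb{\Phi}_{t-1})+\alpha_t\sum_{k=1}^K\bg(\pmb{\theta}_{t,k-1}^i,\pmb{\Phi}_t), \qquad \mathbf{v} := y^i(\pmb{\Phi}_{t-1})-y^i(\pmb{\Phi}_{t}),
\]
so that by the very definitions in \eqref{eq:Lemma_E8} we have $Term_4 = \mathbb{E}[\|\mathbf{u}\|^2]$, $Term_5 = \mathbb{E}[\|\mathbf{v}\|^2]$, and $Term_6 = 2\,\mathbb{E}[\langle \mathbf{u}, \mathbf{v}\rangle]$. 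No property of $\bg$, $y^i$, or the mixing time is needed at this stage; the lemma is purely an algebraic splitting step.

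The key step is the elementary bound $2\langle \mathbf{x}, \mathbf{y}\rangle \leq c\|\mathbf{x}\|^2 + c^{-1}\|\mathbf{y}\|^2$, valid for every $c>0$ — the same inequality invoked as $(b_1)$ in the proof of Lemma~\ref{lemma:Term2}. Applying it to $\mathbf{u}$ and $\mathbf{v}$ with the specific choice $c = \beta_{t-1}/\alpha_t$ and then taking expectations gives
\[
Term_6 \leq \frac{\beta_{t-1}}{\alpha_t}\,\mathbb{E}[\|\mathbf{u}\|^2] + \frac{\alpha_t}{\beta_{t-1}}\,\mathbb{E}[\|\mathbf{v}\|^2] = \frac{\beta_{t-1}}{\alpha_t}\,Term_4 + \frac{\alpha_t}{\beta_{t-1}}\,Term_5,
\]
which is exactly the claimed inequality.

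The only genuine decision is the choice of the splitting constant, and it is dictated by the two-timescale structure rather than being arbitrary: $\beta_{t-1}/\alpha_t$ is precisely the weight attached to the $\pmb{\theta}$-residual in the Lyapunov function~\eqref{eq:Lyapunov}, so partitioning the cross term this way ensures that, once the bounds from Lemmas~\ref{lemma:Term_4} and~\ref{lemma:Term5} are substituted in, the $Term_4$ contribution carries the factor $\beta_{t-1}/\alpha_t$ and combines cleanly with the slow-timescale drift. Thus there is no real obstacle in this lemma — it is a routine but well-chosen instance of Young's inequality, and the substantive work (controlling $Term_4$, $Term_5$ and selecting the learning rates) is deferred to the surrounding lemmas and the final proof in Appendix~\ref{sec:finalproof}.
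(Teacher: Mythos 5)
Your proposal is correct and matches the paper's own proof exactly: both identify $Term_6$ as $2\,\mathbb{E}[\langle \mathbf{u},\mathbf{v}\rangle]$ with $\mathbb{E}[\|\mathbf{u}\|^2]=Term_4$ and $\mathbb{E}[\|\mathbf{v}\|^2]=Term_5$, and apply the weighted Young's inequality $2\langle \mathbf{x},\mathbf{y}\rangle \leq c\|\mathbf{x}\|^2 + c^{-1}\|\mathbf{y}\|^2$ with $c=\beta_{t-1}/\alpha_t$. Your additional remark explaining why this choice of $c$ is the natural one given the Lyapunov weight in~\eqref{eq:Lyapunov} is a nice touch the paper leaves implicit, but the argument itself is the same.
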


\begin{proof}
\begin{align}
    \textnormal{Term 6}&=2\mathbb{E}\left[\left\langle\pmb{\theta}_{t}^i-y^i(\pmb{\Phi}_{t-1})+\alpha_t\sum_{k=1}^K\bg(\pmb{\theta}_{t,k-1}^i,\pmb{\Phi}_t),y^i(\pmb{\Phi}_{t-1})-y^i(\pmb{\Phi}_{t})\right\rangle\right]\nonumber\allowdisplaybreaks\\
    &\leq \beta_{t-1}/\alpha_t\underset{\textnormal{Term 4}}{\underbrace{\mathbb{E}\left[\left\|\pmb{\theta}_{t}^i-y^i(\pmb{\Phi}_{t-1})+\alpha_t\sum_{k=1}^K\bg(\pmb{\theta}_{t,k-1}^i\right\|^2\right]}}+\alpha_t/\beta_{t-1}\underset{\textnormal{Term 5}}{\underbrace{\mathbb{E}\left[\left\|y^i(\pmb{\Phi}_{t-1})-y^i(\pmb{\Phi}_{t})\right\|^2\right]}}
\end{align}
\end{proof}

{Providing $\textnormal{Term 4}$ in Lemma \ref{lemma:Term_4}, $\textnormal{Term 5}$ in Lemma \ref{lemma:Term5}, and $\textnormal{Term 6}$ in Lemma \ref{lemma: Term6}, we have the following result.}

\begin{lemma}
For $t\geq \tau$, the following holds
\begin{align}
  &\mathbb{E}[\|\pmb{\theta}^{i}_{t+1}-y^i(\pmb{\Phi}_t)\|^2]\nonumber\\
    &\leq\Bigg[(1+\beta_{t-1}/\alpha_t)\Bigg((1+2\beta_{t-1}/\alpha_t-2\alpha_tK\omega)\nonumber\allowdisplaybreaks\\
    &\qquad+(12\alpha_t^2\delta^2K^2+2L^2\alpha_t^3/\beta_{t-1}+6K^2\delta^2\alpha_t^3/\beta_{t-1})\frac{4\beta_{t-1}^2L^2}{N}\Bigg)+(1+\alpha_t/\beta_{t-1})\frac{4\beta_{t-1}^2L^4}{N}\Bigg]\nonumber\allowdisplaybreaks\\
    &\qquad\qquad\cdot\mathbb{E}\left[\left\|\pmb{\theta}_{t}^i-y^i(\pmb{\Phi}_{t-1})\right\|^2 \right]\nonumber\allowdisplaybreaks\\
    &\qquad+\Bigg[(1+\beta_{t-1}/\alpha_t)\Bigg((12\alpha_t^2\delta^2K^2+6K^2\delta^2\alpha_t^3/\beta_{t-1})\nonumber\allowdisplaybreaks\\
    &\qquad +(12\alpha_t^2\delta^2K^2+2L^2\alpha_t^3/\beta_{t-1}+6K^2\delta^2\alpha_t^3/\beta_{t-1})(4\beta_{t-1}(L^2+L^4))\Bigg)\nonumber\allowdisplaybreaks\\
   &\qquad+(1+\alpha_t/\beta_{t-1})4\beta_{t-1}^2(L^4+L^6)\Bigg]\cdot\mathbb{E}[\|\pmb{\Phi}^*-\pmb{\Phi}_{t-1}\|^2]\nonumber\allowdisplaybreaks\\
   &\qquad+(1+\beta_{t-1}/\alpha_t)\Big((12\alpha_t^2\delta^2K^2+2L^2\alpha_t^3/\beta_{t-1}+6K^2\delta^2\alpha_t^3/\beta_{t-1})4\beta_{t-1}^2\delta^2\nonumber\allowdisplaybreaks\\
   &\qquad+6\alpha_t^2\delta^2K^2(1+B^2)+2\alpha_t^2K^2L^2B^2+2L^2K^2B^2\alpha_t^3/\beta_{t-1}+\alpha_t^3/\beta_{t-1}(3K^2B^2+3K^2\delta^2)\Big)\nonumber\allowdisplaybreaks\\
   &\qquad+(1-\alpha_{t}/\beta_{t+1})\cdot 4L^2\beta_{t-1}^2\delta^2.
\end{align}
\end{lemma}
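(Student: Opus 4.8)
The plan is to assemble the three contributions in the decomposition \eqref{eq:Lemma_E8} using only the bounds already established, with no new analytic input. The first step is to eliminate $Term_6$: by Lemma \ref{lemma: Term6} we have $Term_6 \leq (\beta_{t-1}/\alpha_t)\,Term_4 + (\alpha_t/\beta_{t-1})\,Term_5$, so that \eqref{eq:Lemma_E8} collapses to
\[
\mathbb{E}[\|\pmb{\theta}^i_{t+1}-y^i(\pmb{\Phi}_t)\|^2] \leq (1+\beta_{t-1}/\alpha_t)\,Term_4 + (1+\alpha_t/\beta_{t-1})\,Term_5 .
\]
This reduces everything to substituting the explicit bound on $Term_4$ from Lemma \ref{lemma:Term_4} and on $Term_5$ from Lemma \ref{lemma:Term5}, and then grouping like terms.

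The second step handles the single cross-term that is not yet expressed in the target variables. The bound on $Term_4$ carries a residual $\mathbb{E}[\|\pmb{\Phi}_t-\pmb{\Phi}_{t-1}\|^2]$ with coefficient $(12\alpha_t^2\delta^2K^2+2L^2\alpha_t^3/\beta_{t-1}+6K^2\delta^2\alpha_t^3/\beta_{t-1})$. I would re-bound this quantity through the slow-timescale update: since $\pmb{\Phi}_t-\pmb{\Phi}_{t-1}=(\beta_{t-1}/N)\sum_{i=1}^N\bh(\pmb{\theta}_t^i,\pmb{\Phi}_{t-1})$ by \eqref{eq:update_phi}, the same estimate used to prove Lemma \ref{lemma:term1} (equivalently $Term_5/L^2$) yields
\[
\mathbb{E}[\|\pmb{\Phi}_t-\pmb{\Phi}_{t-1}\|^2] \leq 4\beta_{t-1}^2(L^2+L^4)\,\mathbb{E}[\|\pmb{\Phi}^*-\pmb{\Phi}_{t-1}\|^2] + \frac{4\beta_{t-1}^2L^2}{N}\,\mathbb{E}\Big[\sum_{i=1}^N\|\pmb{\theta}_t^i-y^i(\pmb{\Phi}_{t-1})\|^2\Big] + 4\beta_{t-1}^2\delta^2 .
\]
Substituting this expands the residual into a multiple of the fast-variable error $\mathbb{E}[\|\pmb{\theta}_t^i-y^i(\pmb{\Phi}_{t-1})\|^2]$ (coefficient $4\beta_{t-1}^2L^2/N$), a multiple of $\mathbb{E}[\|\pmb{\Phi}^*-\pmb{\Phi}_{t-1}\|^2]$, and a pure constant $4\beta_{t-1}^2\delta^2$.

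The final step is bookkeeping: collect the coefficients of $\mathbb{E}[\|\pmb{\theta}_t^i-y^i(\pmb{\Phi}_{t-1})\|^2]$, of $\mathbb{E}[\|\pmb{\Phi}^*-\pmb{\Phi}_{t-1}\|^2]$, and the remaining constants, each weighted by the prefactors $(1+\beta_{t-1}/\alpha_t)$ from $Term_4$ and $(1+\alpha_t/\beta_{t-1})$ from $Term_5$. The coefficient of the $\pmb{\theta}$-error aggregates the contraction factor $(1+2\beta_{t-1}/\alpha_t-2\alpha_tK\omega)$ from $Term_4$, the re-expanded cross-term weighted by $4\beta_{t-1}^2L^2/N$, and the $4\beta_{t-1}^2L^4/N$ term from $Term_5$; the coefficient of the $\pmb{\Phi}^*$-error collects the $(12\alpha_t^2\delta^2K^2+6K^2\delta^2\alpha_t^3/\beta_{t-1})$ term, the re-expanded cross-term weighted by the $(L^2+L^4)$ factor, and the $4\beta_{t-1}^2(L^4+L^6)$ term from $Term_5$; the constant gathers the $Term_4$ constants, the cross-term constant weighted by $4\beta_{t-1}^2\delta^2$, and $4L^2\beta_{t-1}^2\delta^2$ from $Term_5$. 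Matching the three buckets produces exactly the stated bound.

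I expect the only genuine difficulty to be the accounting rather than any inequality: one must keep every invoked bound valid (they hold for $t\geq\tau$, matching the hypothesis) and correctly route the re-expanded $\mathbb{E}[\|\pmb{\Phi}_t-\pmb{\Phi}_{t-1}\|^2]$ term into the $\pmb{\theta}$-error, $\pmb{\Phi}^*$-error, and constant buckets, so that no contribution is double-counted or dropped when the prefactors $(1+\beta_{t-1}/\alpha_t)$ and $(1+\alpha_t/\beta_{t-1})$ are distributed.
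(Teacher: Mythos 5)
Your proposal is correct and follows essentially the same route as the paper's proof: the paper likewise folds $Term_6$ into the other two via Lemma~\ref{lemma: Term6} to get $(1+\beta_{t-1}/\alpha_t)\,Term_4+(1+\alpha_t/\beta_{t-1})\,Term_5$, substitutes Lemmas~\ref{lemma:Term_4} and~\ref{lemma:Term5}, and re-expands the residual $\mathbb{E}[\|\pmb{\Phi}_t-\pmb{\Phi}_{t-1}\|^2]$ through the server update \eqref{eq:update_phi} with exactly the Lemma~\ref{lemma:term1}-type bound you state, before collecting coefficients. As a minor note, your constant bucket carries the prefactor $(1+\alpha_t/\beta_{t-1})$ on the $4L^2\beta_{t-1}^2\delta^2$ term, which matches the paper's derivation and confirms that the factor $(1-\alpha_t/\beta_{t+1})$ appearing in the lemma's displayed statement is a typo.
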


\begin{proof}
According to \eqref{eq:Lemma_E8}, we have
\begin{align}
    &\mathbb{E}[\|\pmb{\theta}^i_{t+1}-y^i(\pmb{\Phi}_t)\|^2]=Term_4+Term_5+Term_6\nonumber\allowdisplaybreaks\\
    &\overset{\text{Lemma \ref{lemma: Term6}}}{\leq} (1+\beta_{t-1}/\alpha_t)Term_4+(1+\alpha_t/\beta_{t-1})Term_5\nonumber\allowdisplaybreaks\\
    &\leq (1+\beta_{t-1}/\alpha_t)\Bigg((1+2\beta_{t-1}/\alpha_t-2\alpha_tK\omega)\mathbb{E}\left[\left\|\pmb{\theta}_{t}^i-y^i(\pmb{\Phi}_{t-1})\right\|^2 \right]\nonumber\allowdisplaybreaks\\
    &\qquad+(12\alpha_t^2\delta^2K^2+6K^2\delta^2\alpha_t^3/\beta_{t-1})\mathbb{E}[\|\pmb{\Phi}_{t-1}-\pmb{\Phi}^*\|^2]\nonumber\allowdisplaybreaks\\
    &\qquad +(12\alpha_t^2\delta^2K^2+2L^2\alpha_t^3/\beta_{t-1}+6K^2\delta^2\alpha_t^3/\beta_{t-1})\mathbb{E}[\|\pmb{\Phi}_{t}-\pmb{\Phi}_{t-1}\|^2]\nonumber\allowdisplaybreaks\\
    &\qquad+6\alpha_t^2\delta^2K^2(1+B^2)+2\alpha_t^2K^2L^2B^2+2L^2K^2B^2\alpha_t^3/\beta_{t-1}+\alpha_t^3/\beta_{t-1}(3K^2B^2+3K^2\delta^2)\Bigg)\nonumber\allowdisplaybreaks\\
    &\qquad+(1+\alpha_t/\beta_{t-1})\Bigg(4\beta_{t-1}^2(L^4+L^6)\mathbb{E}[\|\pmb{\Phi}^*-\pmb{\Phi}_{t-1}\|^2]\nonumber\allowdisplaybreaks\\
    &\qquad+\frac{4\beta_{t-1}^2L^4}{N}\mathbb{E}\left[\sum_{i=1}^N\|\pmb{\theta}_{t}-y^i(\pmb{\Phi}_{t-1})\|^2\right]+4L^2\beta_{t-1}^2\delta^2\Bigg)\nonumber\allowdisplaybreaks\\
    &\overset{\text{Lemma \ref{lemma:Term5}}}{\leq} (1+\beta_{t-1}/\alpha_t)\Bigg((1+2\beta_{t-1}/\alpha_t-2\alpha_tK\omega)\mathbb{E}\left[\left\|\pmb{\theta}_{t}^i-y^i(\pmb{\Phi}_{t-1})\right\|^2 \right]\nonumber\allowdisplaybreaks\\
    &\qquad+(12\alpha_t^2\delta^2K^2+6K^2\delta^2\alpha_t^3/\beta_{t-1})\mathbb{E}[\|\pmb{\Phi}_{t-1}-\pmb{\Phi}^*\|^2]\nonumber\allowdisplaybreaks\\
    &\qquad +(12\alpha_t^2\delta^2K^2+2L^2\alpha_t^3/\beta_{t-1}+6K^2\delta^2\alpha_t^3/\beta_{t-1})\nonumber\allowdisplaybreaks\\
    &\qquad\qquad\cdot\Big(4\beta_{t-1}^2(L^2+L^4)\mathbb{E}[\|\pmb{\Phi}^*-\pmb{\Phi}_{t-1}\|^2]+\frac{4\beta_{t-1}^2L^2}{N}\mathbb{E}\left[\sum_{i=1}^N\|\pmb{\theta}_{t}-y^i(\pmb{\Phi}_{t-1})\|^2\right]+4\beta_{t-1}^2\delta^2\Big)\nonumber\allowdisplaybreaks\\
    &\qquad+6\alpha_t^2\delta^2K^2(1+B^2)+2\alpha_t^2K^2L^2B^2+2L^2K^2B^2\alpha_t^3/\beta_{t-1}+\alpha_t^3/\beta_{t-1}(3K^2B^2+3K^2\delta^2)\Bigg)\nonumber\allowdisplaybreaks\\
    &\qquad+(1+\alpha_t/\beta_{t-1})\Bigg(4\beta_{t-1}^2(L^4+L^6)\mathbb{E}[\|\pmb{\Phi}^*-\pmb{\Phi}_{t-1}\|^2]\nonumber\allowdisplaybreaks\\
    &\qquad+\frac{4\beta_{t-1}^2L^4}{N}\mathbb{E}\left[\sum_{i=1}^N\|\pmb{\theta}_{t}-y^i(\pmb{\Phi}_{t-1})\|^2\right]+4L^2\beta_{t-1}^2\delta^2\Bigg)\nonumber\allowdisplaybreaks\\
    &=\Bigg[(1+\beta_{t-1}/\alpha_t)\Bigg((1+2\beta_{t-1}/\alpha_t-2\alpha_tK\omega)\nonumber\allowdisplaybreaks\\
    &\qquad+(12\alpha_t^2\delta^2K^2+2L^2\alpha_t^3/\beta_{t-1}+6K^2\delta^2\alpha_t^3/\beta_{t-1})\frac{4\beta_{t-1}^2L^2}{N}\Bigg)+(1+\alpha_t/\beta_{t-1})\frac{4\beta_{t-1}^2L^4}{N}\Bigg]\nonumber\allowdisplaybreaks\\
    &\qquad\qquad\cdot\mathbb{E}\left[\left\|\pmb{\theta}_{t}^i-y^i(\pmb{\Phi}_{t-1})\right\|^2 \right]\nonumber\allowdisplaybreaks\\
    &\qquad+\Bigg[(1+\beta_{t-1}/\alpha_t)\Bigg((12\alpha_t^2\delta^2K^2+6K^2\delta^2\alpha_t^3/\beta_{t-1})\nonumber\allowdisplaybreaks\\
    &\qquad +(12\alpha_t^2\delta^2K^2+2L^2\alpha_t^3/\beta_{t-1}+6K^2\delta^2\alpha_t^3/\beta_{t-1})(4\beta_{t-1}(L^2+L^4))\Bigg)\nonumber\allowdisplaybreaks\\
   &\qquad+(1+\alpha_t/\beta_{t-1})4\beta_{t-1}^2(L^4+L^6)\Bigg]\cdot\mathbb{E}[\|\pmb{\Phi}^*-\pmb{\Phi}_{t-1}\|^2]\nonumber\allowdisplaybreaks\\
   &\qquad+(1+\beta_{t-1}/\alpha_t)\Big((12\alpha_t^2\delta^2K^2+2L^2\alpha_t^3/\beta_{t-1}+6K^2\delta^2\alpha_t^3/\beta_{t-1})4\beta_{t-1}^2\delta^2\nonumber\allowdisplaybreaks\\
   &\qquad+6\alpha_t^2\delta^2K^2(1+B^2)+2\alpha_t^2K^2L^2B^2+2L^2K^2B^2\alpha_t^3/\beta_{t-1}+\alpha_t^3/\beta_{t-1}(3K^2B^2+3K^2\delta^2)\Big)\nonumber\allowdisplaybreaks\\
   &\qquad+(1+\alpha_{t}/\beta_{t-1})\cdot 4L^2\beta_{t-1}^2\delta^2.
\end{align}
This completes the proof.
\end{proof}

\subsubsection{Final Step of Proof for Theorem \ref{theorem:1}}\label{sec:finalproof}
\textbf{Now, we are ready to proof the desired result in Theorem \ref{theorem:1}.}

 According to the definition of Lyapunov function in \eqref{eq:Lyapunov},  We have
\begin{align}
   & M(\{\pmb{\theta}_{t+2}^i\}, \pmb{\Phi}_{t+1})= \|\pmb{\Phi}_{t+1}-\pmb{\Phi}^*\|^2+\frac{\beta_{t}}{\alpha_{t+1}}\cdot\frac{1}{N}\sum_{i=1}^N\|\pmb{\theta}_{t+2}^i-y^i(\pmb{\Phi}_{t+1})\|^2\nonumber\allowdisplaybreaks\\
    &\leq(1+4\beta_t^2(L^2+L^4)+(7\beta_t/\alpha_t+2\beta_t\alpha_tL^2+6\beta_t\alpha_t\delta^2)4\tau^2\beta_0^2/\alpha_0^2\nonumber\allowdisplaybreaks\\
    &\qquad+(6\beta_t/\alpha_t+6\beta_t\alpha_t\delta^2(1+L^2) +4\beta_t\alpha_tL^2(3+4L^2))+\beta_t(L/\alpha_t-2\omega))\mathbb{E}[\|\pmb{\Phi}_t-\pmb{\Phi}^*\|^2]\nonumber\allowdisplaybreaks\\
    &\qquad+\frac{4\beta_t^2L^2+\beta_t\alpha_tL+16\beta_t\alpha_t L^2+6\beta_t\alpha_t \delta^2}{N}\mathbb{E}\left[\sum_{i=1}^N\|\pmb{\theta}_{t+1}^i-y^i(\pmb{\Phi}_t)\|^2\right]\nonumber\allowdisplaybreaks\\
    &\qquad+(7\beta_t/\alpha_t+2\beta_t\alpha_t L^2+6\beta_t\alpha_t\delta^2)(8\beta_0^2L^2B^2\tau^2 +8\beta_0^2\delta^2\tau^2)+4\beta_t^2\delta^2+11\beta_t\alpha_t\delta^2\nonumber\allowdisplaybreaks\\
    &+\frac{\beta_t}{\alpha_{t+1}}\cdot\Bigg[(1+\beta_{t}/\alpha_{t+1})\Bigg((1+2\beta_{t}/\alpha_{t+1}-2\alpha_{t+1}K\omega)\nonumber\allowdisplaybreaks\\
    &\qquad+(12\alpha_{t+1}^2\delta^2K^2+2L^2\alpha_{t+1}^3/\beta_{t}+6K^2\delta^2\alpha_{t+1}^3/\beta_{t})\frac{4\beta_{t}^2L^2}{N}\Bigg)+(1+\alpha_{t+1}/\beta_{t})\frac{4\beta_{t}^2L^4}{N}\Bigg]\nonumber\allowdisplaybreaks\\
    &\qquad\qquad\cdot\frac{1}{N}\mathbb{E}\left[\sum_{i=1}^N\left\|\pmb{\theta}_{t+1}^i-y^i(\pmb{\Phi}_{t})\right\|^2 \right]\nonumber\allowdisplaybreaks\\
    &\qquad+\Bigg[(1+\beta_{t}/\alpha_{t+1})\Bigg((12\alpha_{t+1}^2\delta^2K^2+6K^2\delta^2\alpha_{t+1}^3/\beta_{t})\nonumber\allowdisplaybreaks\\
    &\qquad +(12\alpha_{t+1}^2\delta^2K^2+2L^2\alpha_{t+1}^3/\beta_{t}+6K^2\delta^2\alpha_{t+1}^3/\beta_{t})(4\beta_{t}(L^2+L^4))\Bigg)\nonumber\allowdisplaybreaks\\
   &\qquad+(1+\alpha_{t+1}/\beta_{t})4\beta_{t}^2(L^4+L^6)\Bigg]\cdot\mathbb{E}[\|\pmb{\Phi}^*-\pmb{\Phi}_{t}\|^2]\nonumber\allowdisplaybreaks\\
   &\qquad+(1+\beta_{t}/\alpha_{t+1})\Big((12\alpha_{t+1}^2\delta^2K^2+2L^2\alpha_{t+1}^3/\beta_{t}+6K^2\delta^2\alpha_{t+1}^3/\beta_{t})4\beta_{t}^2\delta^2\nonumber\allowdisplaybreaks\\
   &\qquad+6\alpha_{t+1}^2\delta^2K^2(1+B^2)+2\alpha_{t+1}^2K^2L^2B^2+2L^2K^2B^2\alpha_{t+1}^3/\beta_{t}+\alpha_{t+1}^3/\beta_{t}(3K^2B^2+3K^2\delta^2)\Big)\nonumber\allowdisplaybreaks\\
   &\qquad+(1+\alpha_{t+1}/\beta_{t})\cdot 4L^2\beta_{t}^2\delta^2\Bigg].
\end{align}
To simplify the notations, we define 
\begin{align}
    D_1&:=(4\beta_t^2(L^2+L^4)+(7\beta_t/\alpha_t+2\beta_t\alpha_tL^2+6\beta_t\alpha_t\delta^2)4\tau^2\beta_0^2/\alpha_0^2\nonumber\allowdisplaybreaks\\
    &\qquad+(6\beta_t/\alpha_t+6\beta_t\alpha_t\delta^2(1+L^2) +4\beta_t\alpha_tL^2(3+4L^2))+\beta_tL/\alpha_t)\nonumber\allowdisplaybreaks\\
    &\qquad+\frac{\beta_t}{\alpha_{t+1}}\Bigg[(1+\beta_{t}/\alpha_{t+1})\Bigg((12\alpha_{t+1}^2\delta^2K^2+6K^2\delta^2\alpha_{t+1}^3/\beta_{t})\nonumber\allowdisplaybreaks\\
    &\qquad +(12\alpha_{t+1}^2\delta^2K^2+2L^2\alpha_{t+1}^3/\beta_{t}+6K^2\delta^2\alpha_{t+1}^3/\beta_{t})(4\beta_{t}(L^2+L^4))\Bigg)\nonumber\allowdisplaybreaks\\
   &\qquad+(1+\alpha_{t+1}/\beta_{t})4\beta_{t}^2(L^4+L^6)\Bigg],
\end{align}
 and 
 \begin{align}
     &D_2:=4\beta_t^3/\alpha_{t+1}L^2+\alpha_t^2L+16\alpha_t^2 L^2+6\alpha_t\alpha_t \delta^2\nonumber\allowdisplaybreaks\\
    &+\Bigg[\Bigg((2\beta_{t}/\alpha_{t+1})+(12\alpha_{t+1}^2\delta^2K^2+2L^2\alpha_{t+1}^3/\beta_{t}+6K^2\delta^2\alpha_{t+1}^3/\beta_{t})\frac{4\beta_{t}^2L^2}{N}\Bigg)+(1+\alpha_{t+1}/\beta_{t})\frac{4\beta_{t}^2L^4}{N}\Bigg]\nonumber\allowdisplaybreaks\\
    &+\Bigg[\beta_{t}/\alpha_{t+1}\Bigg((1+2\beta_{t}/\alpha_{t+1}-2\alpha_{t+1}K\omega)\nonumber\allowdisplaybreaks\\
    &+(12\alpha_{t+1}^2\delta^2K^2+2L^2\alpha_{t+1}^3/\beta_{t}+6K^2\delta^2\alpha_{t+1}^3/\beta_{t})\frac{4\beta_{t}^2L^2}{N}\Bigg)+(1+\alpha_{t+1}/\beta_{t})\frac{4\beta_{t}^2L^4}{N}\Bigg].
 \end{align}
Since $D_1$ is of higher orders of $o(\beta_t)$ and $D_2$ is of higher order of $o(\alpha_{t+1})$, we can let $D_1\leq \omega\beta_t$ and $D_2\leq K\omega\alpha_{t+1}$. Therefore, we have
\begin{align}
&M(\{\pmb{\theta}_{t+2}^i\}, \pmb{\Phi}_{t+1})\leq (1-\omega\beta_t)M(\{\pmb{\theta}_{t+1}^i\}, \pmb{\Phi}_{t})\nonumber\allowdisplaybreaks\\
&+(144\tau^2K^2L^2\delta^2+4L^4/N)\beta_t\alpha_{t+1}\left[\mathbb{E}[\|\pmb{\Phi}_t-\pmb{\Phi}^*\|^2]+\frac{1}{N}\mathbb{E}\left[\sum_{i=1}^N\left\|\pmb{\theta}_{t+1}^i-y^i(\pmb{\Phi}_{t})\right\|^2 \right]\right]\nonumber\allowdisplaybreaks\\
&+4\alpha_{t+1}\beta_tK^2(3\delta^2(1+B^2)+L^2B^2)+2\alpha_{t+1}^2(3K^2B^2+3K^2\delta^2+2L^2K^2B^2)+8\alpha_{t+1}\beta_t\delta^2 \nonumber\allowdisplaybreaks\\
&\leq (1-\omega\beta_t)M(\{\pmb{\theta}_{t+1}^i\}, \pmb{\Phi}_{t})\nonumber\allowdisplaybreaks\\
&+(144\tau^2K^2L^2\delta^2+4L^2/N)\beta_t\alpha_t\left[\mathbb{E}[\|\pmb{\Phi}_t-\pmb{\Phi}^*\|^2]+\frac{1}{N}\mathbb{E}\left[\sum_{i=1}^N\left\|\pmb{\theta}_{t+1}^i-y^i(\pmb{\Phi}_{t})\right\|^2 \right]\right]\nonumber\allowdisplaybreaks\\
&+4\alpha_{t}\beta_tK^2(3\delta^2(1+B^2)+L^2B^2)+2\alpha_{t}^2(3K^2B^2+3K^2\delta^2+2L^2K^2B^2)+8\alpha_{t}\beta_t\delta^2, 
\end{align}
where the first inequality holds by omitting the higher order of learning rates, and the second inequality holds due to the decreasing learning rates of $\alpha_t$.

We now set the proper decaying learning rates. Let $\alpha_t=\alpha_0/(t+2)^{5/6}$ and $\beta_t=\beta_0/(t+2)$. We then have
\begin{align}
    (t+2)^2\cdot(1-\omega\beta_{t})=(t+2)^2(1-\omega\beta_0)/(t+2)\leq (t+1)^2,
\end{align}
if $\omega\beta_o<2$.
In addition, we have the following inequalities
\begin{align*}
    (t+2)^2\cdot\alpha_t\beta_t&\leq \alpha_0\beta_0(t+2)^{1/3},\\
    (t+2)^2\cdot\alpha_t^2&=\alpha_0^2(t+2)^2.
\end{align*}
Hence, multiplying both sides with $(t+2)^2$, we have
\begin{align*}
   & (t+2)^2M(\{\pmb{\theta}_{t+2}^i\}, \pmb{\Phi}_{t+1})\leq (t+1)^2M(\{\pmb{\theta}_{t+1}^i\}, \pmb{\Phi}_{t})\allowdisplaybreaks\\
    &+(144\tau^2K^2L^2\delta^2+4L^2/N)\alpha_0\beta^0(t+2)^{1/3}\left[\mathbb{E}[\|\pmb{\Phi}_t-\pmb{\Phi}^*\|^2]+\frac{1}{N}\mathbb{E}\left[\sum_{i=1}^N\|\pmb{\theta}^i_{t+1}-y^i(\pmb{\Phi}_t)\|^2\right]\right]\allowdisplaybreaks\\
    &+(4\alpha_0\beta_0K^2(3\delta^2(1+B^2)+L^2B^2)+2\alpha_{0}^2(3K^2B^2+3K^2\delta^2+2L^2K^2B^2)+8\alpha_{0}\beta_0\delta^2)(t+2)^{1/3}.
\end{align*}
Summing the above equation from $t=0, \ldots,T$, we have
\begin{align*}
  & (T+2)^2M(\{\pmb{\theta}_{t+2}^i\}, \pmb{\Phi}_{t+1})\leq M(\{\pmb{\theta}_{1}^i\}, \pmb{\Phi}_{0})\allowdisplaybreaks\\
    &+(144\tau^2K^2L^2\delta^2+4L^2/N)\alpha_0\beta^0(T+2)^{4/3}\left[\mathbb{E}[\|\pmb{\Phi}_0-\pmb{\Phi}^*\|^2]+\frac{1}{N}\mathbb{E}\left[\sum_{i=1}^N\|\pmb{\theta}^i_{1}-y^i(\pmb{\Phi}_0)\|^2\right]\right]\allowdisplaybreaks\\
    &+(4\alpha_0\beta_0K^2(3\delta^2(1+B^2)+L^2B^2)+2\alpha_{0}^2(3K^2B^2+3K^2\delta^2+2L^2K^2B^2)+8\alpha_{0}\beta_0\delta^2)(T+2)^{4/3}. 
\end{align*}
Dividing both sides by $(T+2)^2$, we have
\begin{align*}
    &M(\{\pmb{\theta}_{t+2}^i\}, \pmb{\Phi}_{t+1})\leq \frac{M(\{\pmb{\theta}_{1}^i\}, \pmb{\Phi}_{0})}{(T+2)^2}\allowdisplaybreaks\\
    &+(144\tau^2K^2L^2\delta^2+4L^2/N)\alpha_0\beta_0(T+2)^{-2/3}\left[\mathbb{E}[\|\pmb{\Phi}_0-\pmb{\Phi}^*\|^2]+\frac{1}{N}\mathbb{E}\left[\sum_{i=1}^N\|\pmb{\theta}^i_{1}-y^i(\pmb{\Phi}_0)\|^2\right]\right]\allowdisplaybreaks\\
    &+(4\alpha_0\beta_0K^2(3\delta^2(1+B^2)+L^2B^2)+2\alpha_{0}^2(3K^2B^2+3K^2\delta^2+2L^2K^2B^2)+8\alpha_{0}\beta_0\delta^2)(T+2)^{-2/3}. 
\end{align*}
This completes the proof.

\subsection{Proof of Corollary \ref{cor:linearspeedp}}

If $\alpha_0=\beta_0=o(N^{-1/3}K^{-1/2})$,
we have
\begin{align*}
     M(\{\pmb{\theta}_{t+2}^i\}, \pmb{\Phi}_{t+1})\leq \mathcal{O}\left(\frac{1}{(T+2)^2}+\frac{1}{N^{2/3}(T+2)^{2/3}}+\frac{1}{K^2N^{5/3}(T+2)^{2/3}}+\frac{1}{K^2N^{2/3}(T+2)^{2/3}}\right), 
\end{align*}
which is dominated by $\mathcal{O}\left(\frac{1}{N^{2/3}(T+2)^{2/3}}\right)$ if $T^2>N$.
\section{Additional experiment details}\label{app:exp}
\textbf{Compute resources.} The experiments are performed on a computer with Intel 14900k CPU with 48GB of RAM. No GPU is involved.

\textbf{\textsc{PFedDQN-Rep} in the CartPole environment.} We evaluate the performance \textsc{PFedDQN-Rep} in a modified CartPole environment \citep{brockman2016openai}. Similar to \cite{jin2022federated}, we change the length of pole to create different environments. Specifically, we consider 10 agents with varying pole length from $0.38$ to $0.74$ with a step size of $0.04$. We compare \textsc{PFedDQN-Rep} with (i) a conventional DQN that each agent learns its own environment independently; and (ii) a federated version DQN (FedDQN) that allows all agents to collaboratively learn a single policy (without personalization). We randomly choose one agent and present its performance in Figure~\ref{fig:FedDQN}(top)(a). The results of the other agents are presented in Figure~\ref{fig:cartpole9plot}. Again, we observe that our \textsc{PFedDQN-Rep} achieves the maximized return much faster than the conventional DQN due to leveraging shared representations among agents; and obtains larger reward than FedDQN, thanks to our personalized policy.  We further evaluate the effectiveness of shared representation learned by \textsc{PFedDQN-Rep} when generalizes it to a new agent. As shown in Figure~\ref{fig:FedDQN}(top)(b), our \textsc{PFedDQN-Rep} generalizes quickly to the new environment.  Detailed parameter settings can be found in Table~\ref{table:dqn setting}.

\begin{table}
\begin{center}
\caption {Parameter setting} \label{table:dqn setting} 
\begin{tabular}{||c|c||}
 \hline
 Parameter & Description \\  
 \hline\hline
 Input size & 4 \\ 
 \hline
  Hidden size & $128\times 128\times 128$\\ 
 \hline
 Output size & 2 \\ 
 \hline
 Activation function & ReLu \\
 \hline
 Number of episodes & 500\\
 \hline
 Batch size & 64\\
 \hline
 Discount factor & 0.98\\
 \hline
 $\epsilon$ greedy parameter & 0.01\\
 \hline
 Target update & 30 \\
 \hline
Buffer size & 10000 \\
\hline
Minimal size & 500 \\
 \hline
 Learning rate & 0.002, decays every 100 episodes\\
 \hline
\end{tabular}
\end{center}
\end{table}

\begin{figure}[t]
	\centering
	\includegraphics[width=0.95\textwidth]{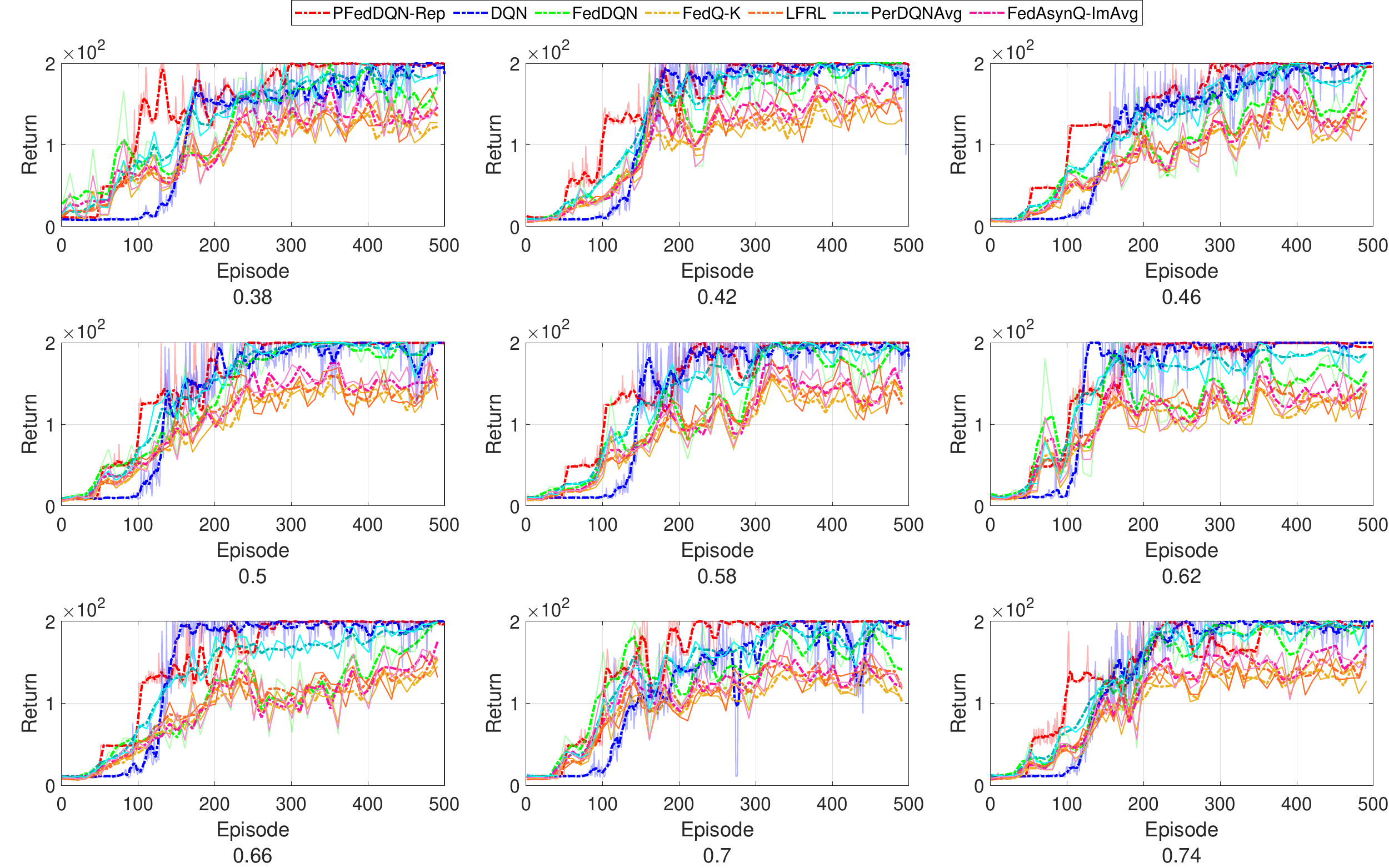}
		\vspace{-0.1in}
	\caption{Comparison of control by DQN, FedDQN and \textsc{PFedDQN-Rep} in Cartpole Environments.}
	\label{fig:cartpole9plot}
\end{figure}

\textbf{\textsc{PFedDQN-Rep} in Acrobot environment.} We further evaluate \textsc{FedDQN-Rep} in a modified Acrobot environment \citep{brockman2016openai}. The pole length is adjusted with [-0.3, 0.3] with a step size of 0.06, and the pole mass with be adjusted accordingly \citep{jin2022federated}. The same two benchmarks are compared as in Figure~\ref{fig:FedDQN}(top). The parameter setting remains the same except number of episodes decreases to 100. Similar observations can be made from Figure~\ref{fig:FedDQN}(bottom) and Figure~\ref{fig:acrobot9plot} as those for the Cartpole enviroments. 

\begin{figure}[t]
	\centering
	\includegraphics[width=0.95\textwidth]{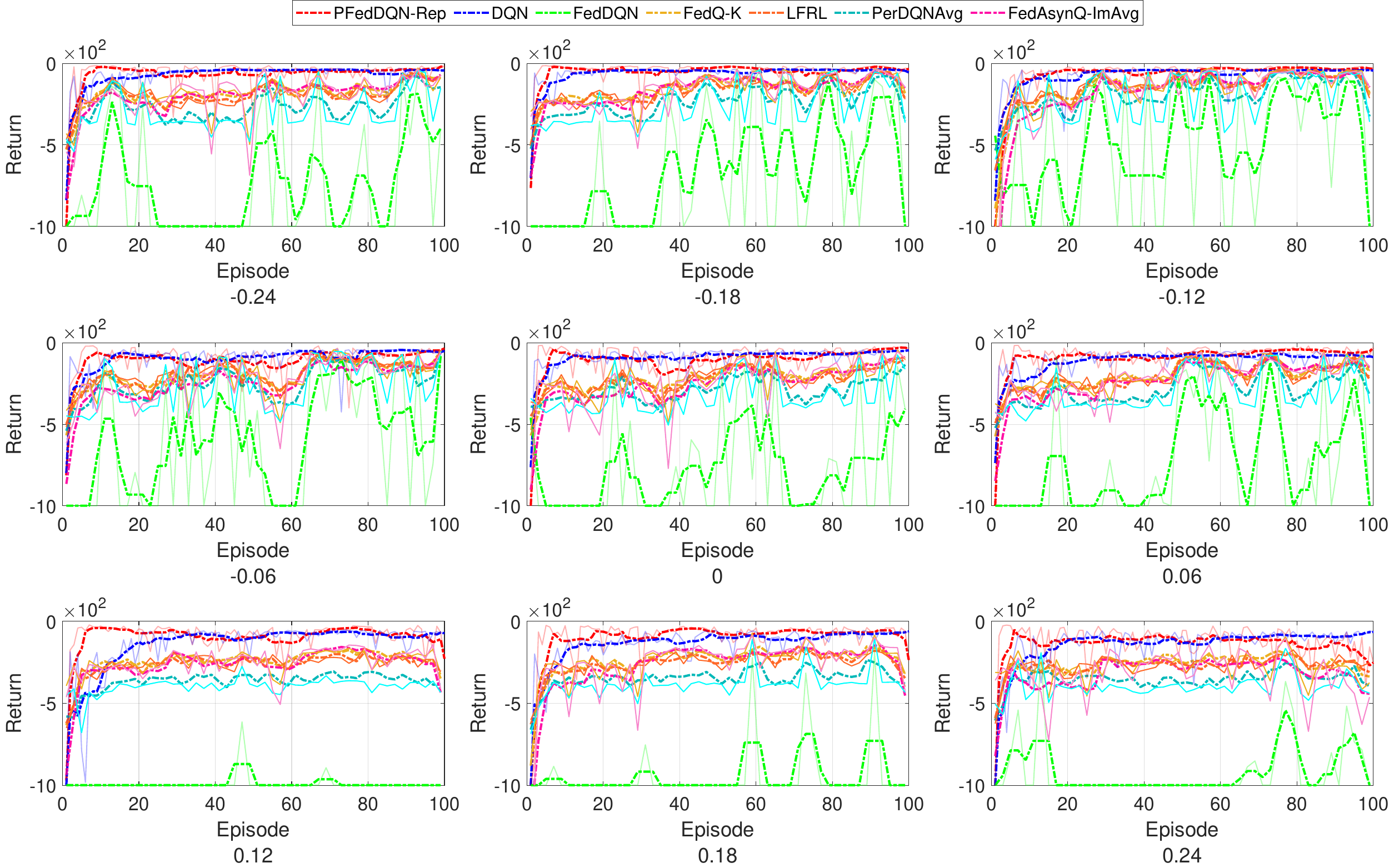}
		\vspace{-0.1in}
	\caption{Comparison of control by DQN, FedDQN and \textsc{PFedDQN-Rep} in Acrobot Environments.}
	\label{fig:acrobot9plot}
\end{figure}

\subsection{More complex environment: Hopper} We consider another environment, Hopper from Gym, whose state and action space are both continuous. To induce heterogeneity within between the agents' environments, we vary the length of legs to be $0.02 + 0.001 \cdot i$, where $i$ is the $i$-th agent, while keeping other parameters (such as healthy reward, forward reward, and control cost (the $l$2 cost function to penalize large actions), the same. We increase the number of agents to 20, and plot the return with respect to the number of frames. In addition to training, we also generate a new sampled transition to validate the algorithms' ability to generalize. 

In order to fit the algorithm to the continuous setting, we modified the proposed algorithm to a DDPG-based algorithm, similar to the DQN-related benchmarks. For FedQ-K, LFRL and FedAsynQ-ImAvg, we discretize the state and action spaces. Similar to Cartpole and Acrobot environments, our proposed PFedDDPG-Rep achieves the best reward and generalizes to new environments quickly, as shown in Figure~\ref{fig:hopper}.

 \begin{figure}[t]
 	\centering
 	\includegraphics[width=0.9\textwidth]{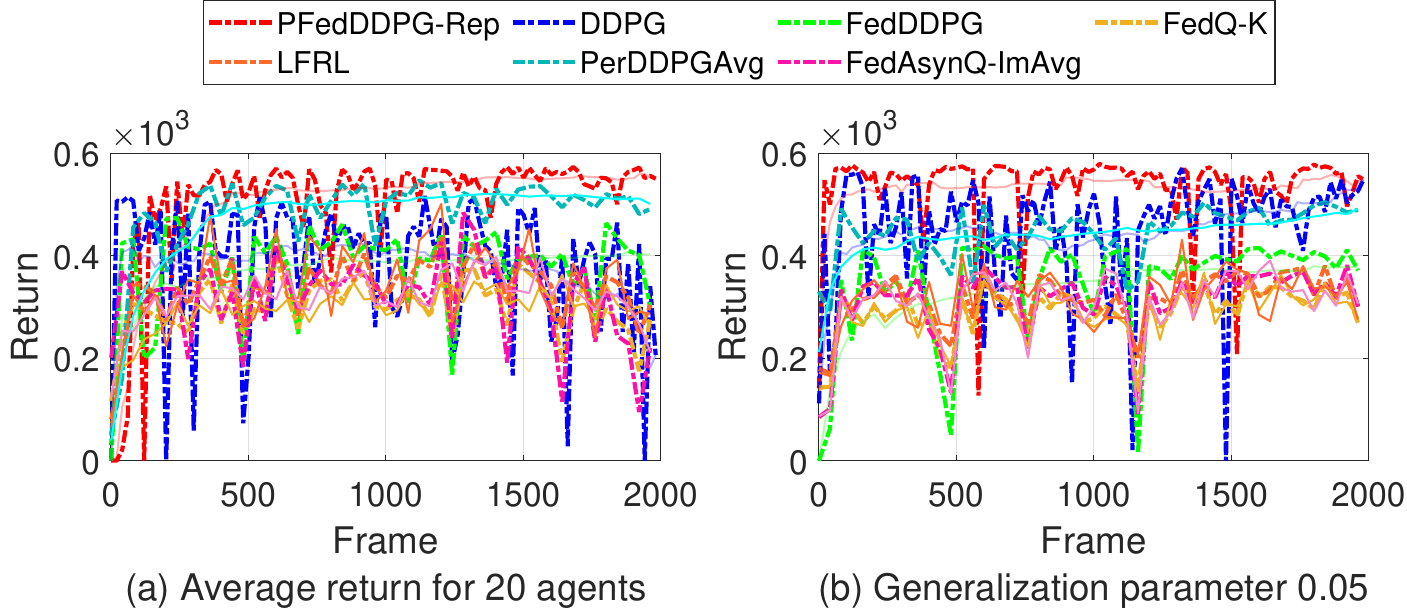}
    \caption{{Hopper environment.}}
 	\label{fig:hopper}
 \end{figure}

\subsection{Verifying the linear speedup result} We now verify the main theoretical result empirically. In the personalized setting, verifying this result is not as straightforward as in the non-personalized setting because Theorem \ref{theorem:1} and Corollary \ref{cor:linearspeedp} hold when parameters defined \emph{across} environments (e.g., $\tau_\delta$, $C$, and others) remain constant as the number of agents (environments) increases.

To properly address this issue, we design an experiment that duplicates 2 initial environments with pole lengths 0.36 and 0.42. We duplicate these two environments with 2, 3, 4, and 5 times, thereby obtaining situations with $N=2, 4, 6, 8, 10$. Because of this duplication, we know that the across-environment parameters (e.g., $\tau_\delta$, $C$, and others) remain constant.

As shown in Figure~\ref{fig:duplicates}, as we increase the number of agents, we see an approximate linear relationship in the convergence time. Of course, note that in practice, there is certain amount of unavoidable overhead in the experiments, which means the speedup will not be exactly as efficient as predicted by theory.

\begin{figure}[h]
 	\centering
 	\includegraphics[width=0.5\textwidth]{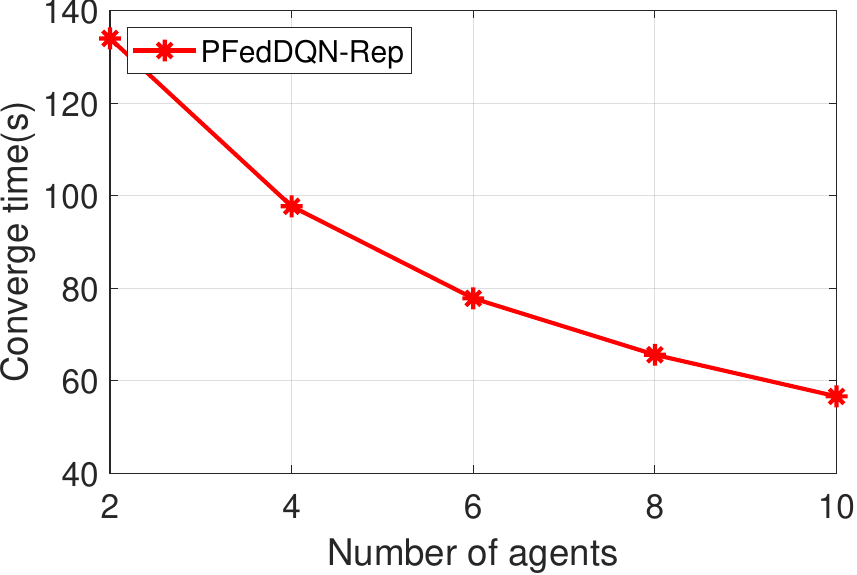}
    \vspace*{1pt}
    \caption{{Linear speedup for cartpole with duplicates of environments.}}
 	\label{fig:duplicates}
 \end{figure}

\subsection{Computation and worst case personalization error trade off} 
We now show another experiment that quantifies the tradeoff between computation and \emph{personalization quality}, which we define as the worst case personalization error across agents:
\begin{align}\label{eq:worstcaseperserr}
\max_{i \in [N]}  \mathbb{E}_{s\sim \mu^{i,\pi^i}}\left\|f^i(\pmb{\theta}^i, \pmb{\Phi}(s))-V^{i,\pi^i}(s)\right\|^2.
\end{align}
The intuition behind this metric is that if all agents achieve good estimation error, then this metric is small (meaning we have personalized well), but if some agents perform poorly while others perform well, then this metric will be large (detecting that we did not personalize well).

We vary the number of agents from 2 to 10 and examine naive DQN that runs independently on each environment, FedDQN (no personalization), and PFedDQN-Rep (our approach).
In the left panel of Figure~\ref{fig:linearspeedup1}, we show the computational resources needed to run each algorithm, while the right panel shows the personalization quality. We notice that for naive DQN, we can achieve no personalization error at the cost of high computation. At the other end of the spectrum, FedDQN leverages parallelization and reduces the computation, but has high personalization error. Finally, our algorithm, PFedDQN-Rep achieves the best of both worlds: low computation, while attaining low personalization error.

 \begin{figure}[t]
 	\centering
 	\includegraphics[width=0.9\textwidth]{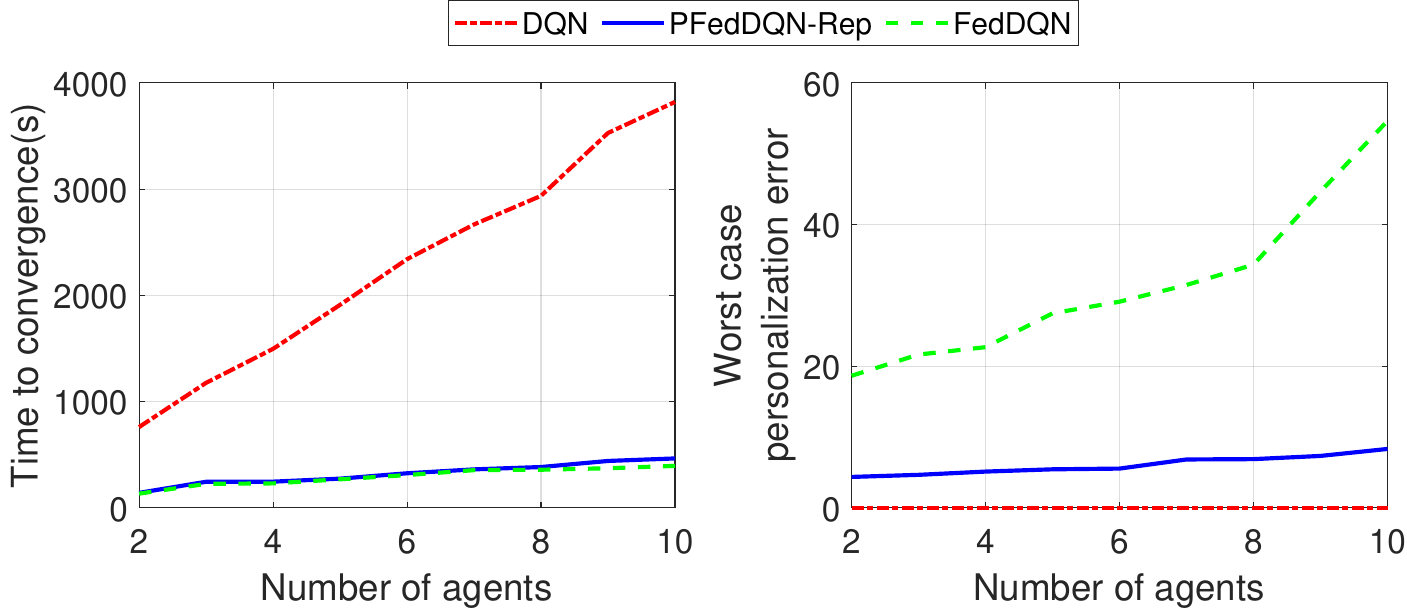}
    \caption{{Computation versus worst case personalization error trade off.}}
 	\label{fig:linearspeedup1}
 \end{figure}

\subsection{The effect of environment discrepancy on personalization error} 
Recall that in the previous Hopper experiment, we vary the length of legs to be $0.02 + 0.001 \cdot i$, where $i$ is the $i$-th agent and $0.001 \cdot 10 = 0.01$ is the maximum \emph{pole length discrepancy between environments}. In this section, we vary the maximum pole length discrepancy between 0 (all 10 environments are identical) to 0.04 (the environments have substantial differences).

We compare the performance of the three algorithms that include personalization, PerDQNAvg, FedAsymQ-ImAvg, and PFedDQN-Rep (ours). The results are in Figure~\ref{fig:commondiff}.

We notice that as the discrepancy increases, all algorithm  encounter degradations in personalization quality (measured in terms of the worst case personalization error defined in \eqref{eq:worstcaseperserr}), but our proposed algorithm achieves the least degradation.

\begin{figure}[h]
 	\centering
 	\includegraphics[width=0.5\textwidth]{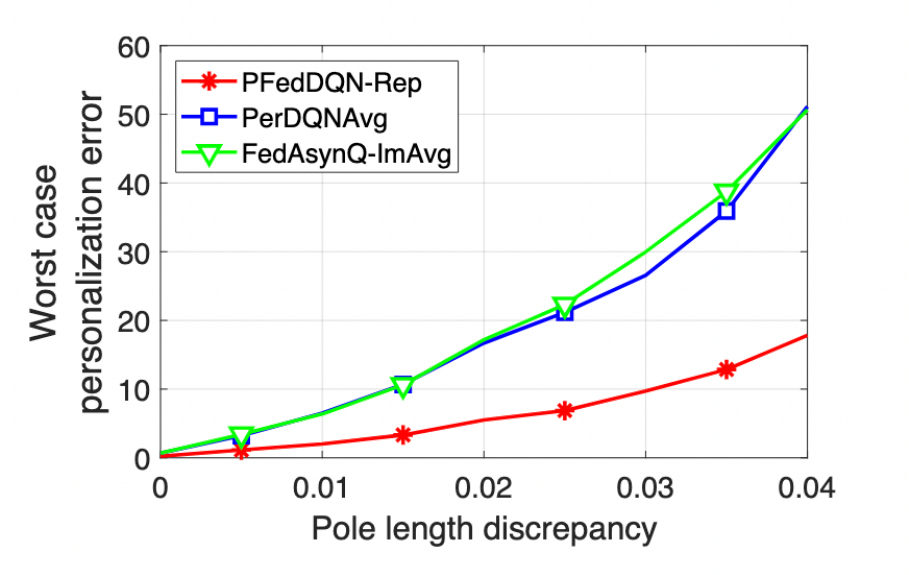}
    \vspace*{1pt}
    \caption{{Worst case personalization error with varying pole length discrepancy across environments.}}
 	\label{fig:commondiff}
 \end{figure}

\subsection{Another look at personalization} 
Here, we give another look at how personalization is achieved by PFedRL-Rep. We first compute the cosine similarity matrix of transition probabilities between pairs of agents. This represents the similarity of environments between any two agents. After the algorithm converges, we compute the cosine similarity matrix for the policy layer (last layer) of the neural network. This represents the similarity of the learned policy between any two agents.

In Figures~\ref{fig:cartheatall} and~\ref{fig:acroheatall}, we observe that all agents reach their unique personalization, and the cosine similarity of personalization layer shows close distribution as the similarity of transition probability. This essentially means for environments of similar transition probability matrices, their personalization layer will reach similar stage. Since the agents share the representation layer, the heatmap stays identical.

\begin{figure}[h]
 	\centering
 \begin{minipage}{.33\textwidth}
  \centering
 	\includegraphics[width=0.999\textwidth]{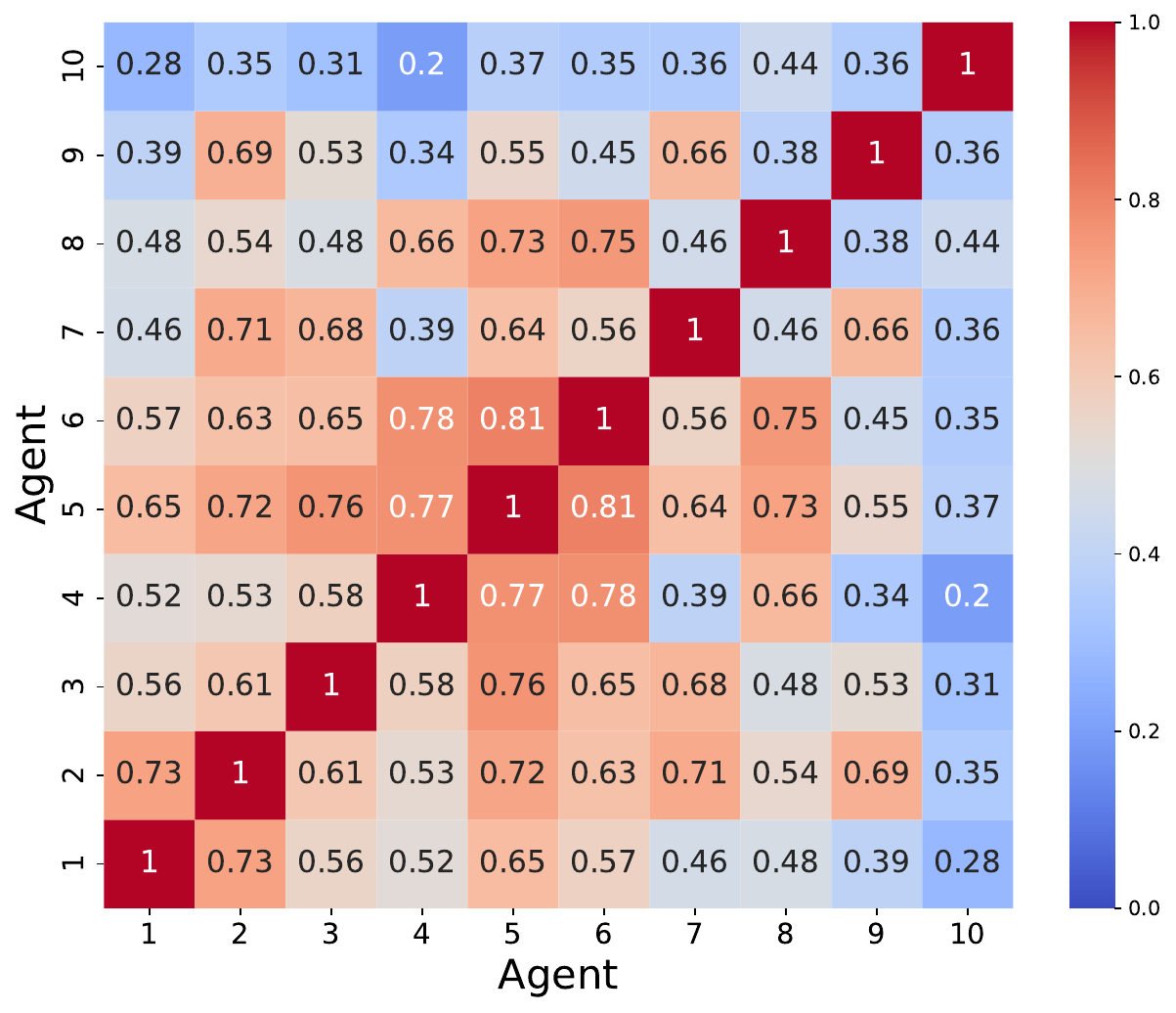}

 	\subcaption{Transition probability heatmap }
 	\label{fig:carttransiheat}
   \end{minipage}\hfill
    \begin{minipage}{.33\textwidth}
  \centering
  \includegraphics[width=0.999\textwidth]{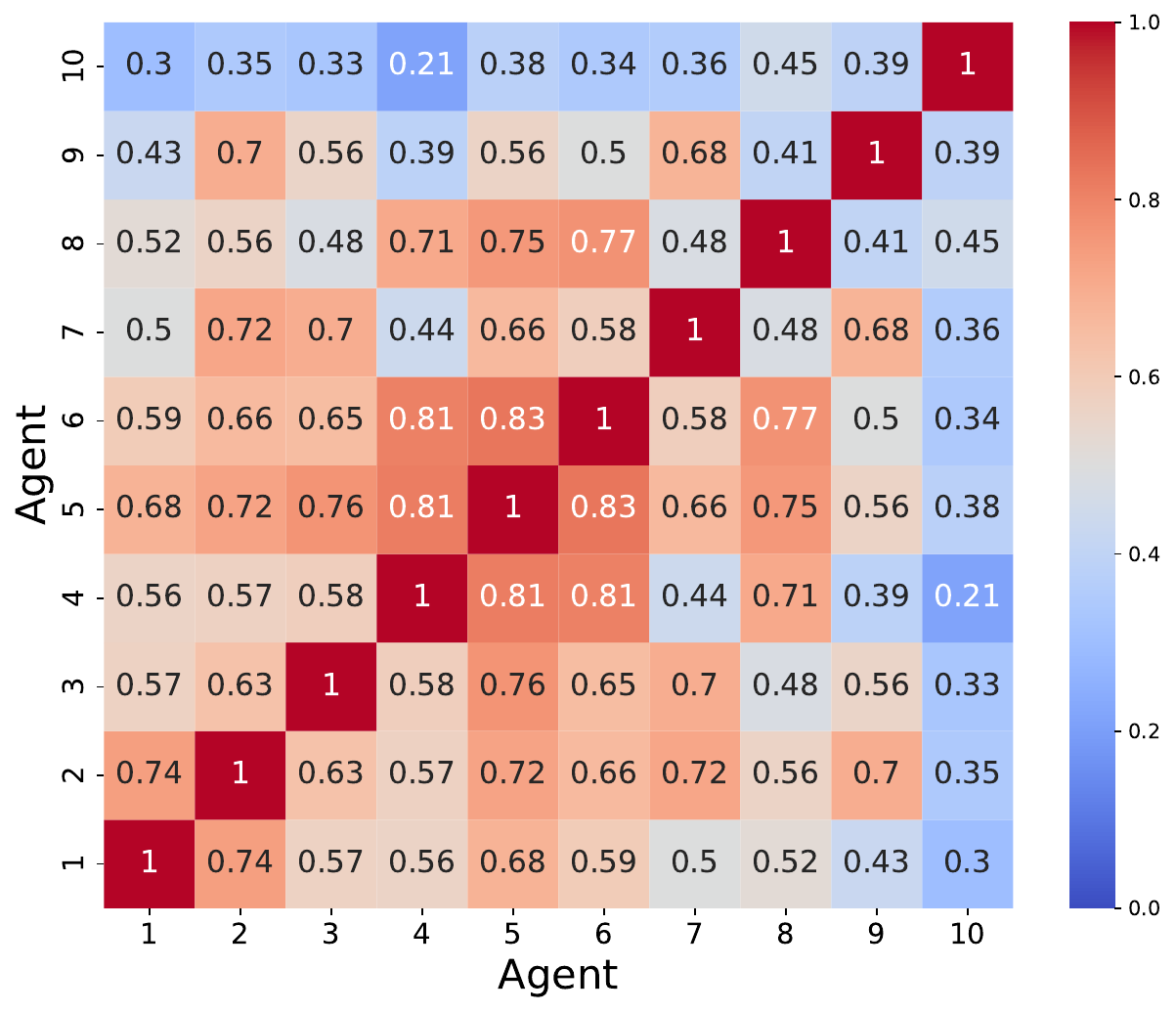}
  \subcaption{Personalization layer heatmap. }
  \label{fig:cartpersonalheat}
    \end{minipage}
    		\begin{minipage}{.33\textwidth}
  \centering
  \includegraphics[width=0.999\textwidth]{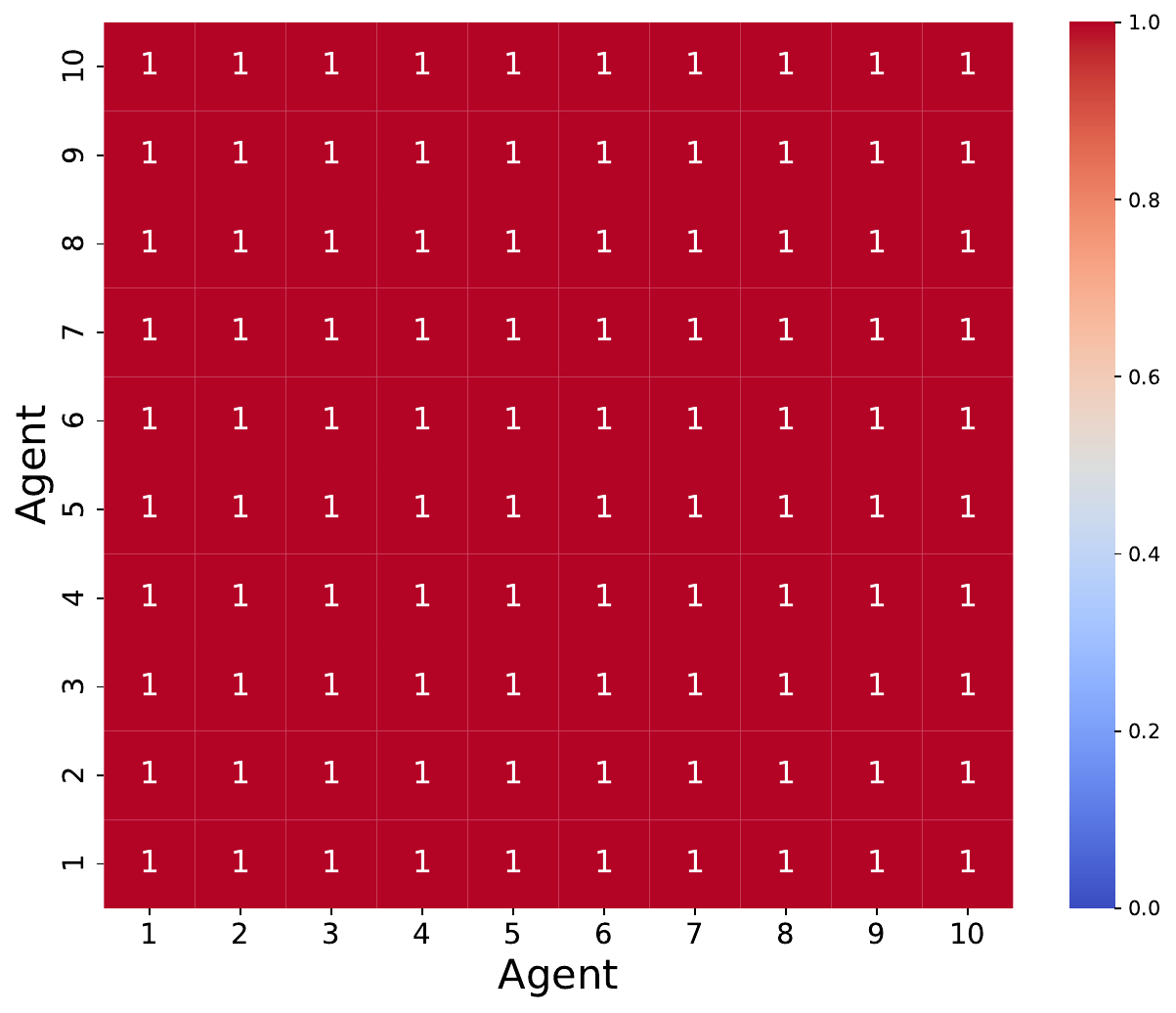}
  \subcaption{Representation layer heatmap. }
  \label{fig:cartshares}
    \end{minipage}
    \caption{{Heatmap of Cartpole environment. }}
     \label{fig:cartheatall}

 \end{figure}

\begin{figure}[h]
 	\centering
 \begin{minipage}{.33\textwidth}
  \centering
 	\includegraphics[width=0.999\textwidth]{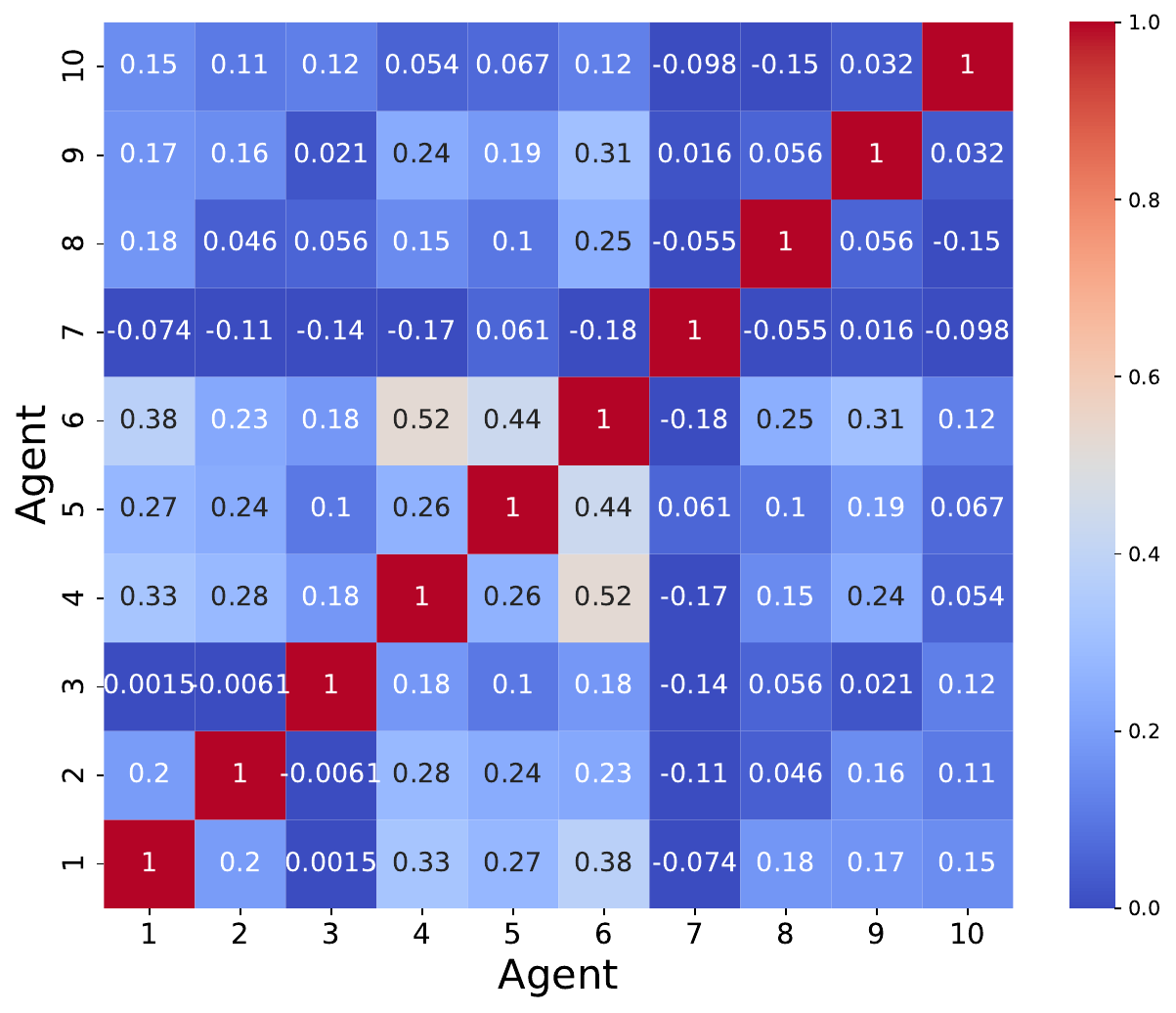}
 	\subcaption{Transition probability heatmap }
 	\label{fig:acrotransiheat}
   \end{minipage}\hfill
    \begin{minipage}{.33\textwidth}
  \centering
  \includegraphics[width=0.999\textwidth]{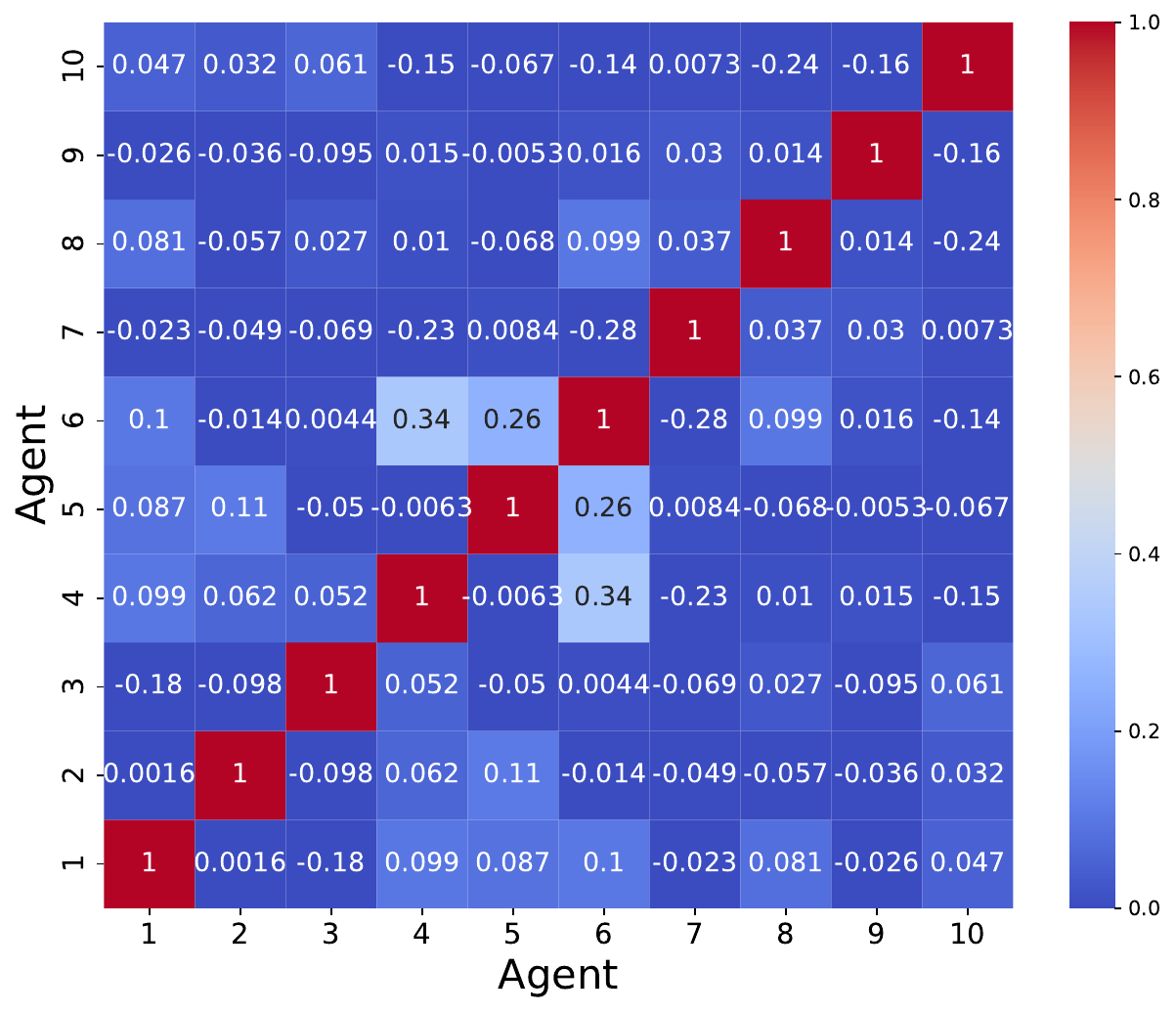}
  \subcaption{Personalization layer heatmap. }
  \label{fig:acropersonalheat}
    \end{minipage}
    		\begin{minipage}{.33\textwidth}
  \centering
  \includegraphics[width=0.999\textwidth]{Figures/plotshared.pdf}
  \subcaption{Representation layer heatmap. }
  \label{fig:acroshares}
    \end{minipage}
    	\caption{{Heatmap of Acrobot environment.} }
     \label{fig:acroheatall}

 \end{figure}

\subsection{More statistics of the empirical results}

We report the return average, variance average, return median and total running time for 10 environments for Cartpole and Acrobot environments. Among all algorithms, our PFedDQN-Rep achieves the best return average and median, with top variance and running time, as summarized in Tables~\ref{table:additionalresult} and~\ref{table:additionalresultacro}. We also provide a zoom-in shortened plot for both environments to show the quick adaptation speed when sharing representations as in Figure~\ref{fig:shortgraoh}.

 \begin{table}[h]
\begin{center}
\caption{{Statistics for Cartpole environment.}} \label{table:additionalresult} 
\begin{tabular}{||c|c|c|c|c||}
 \hline
 Algorithm & Return average & Variance average & Return median & Total running time(s)\\  
 \hline\hline
 PFedDQN-Rep & 143 &43 &154 &466\\ 
 \hline
 DQN & 135 &54 &127 &3840\\ 
 \hline
  FedDQN & 101 &67 &88 &387\\ 
 \hline
  FedQ-K & 112 &34 &107 &490\\ 
 \hline
  LFRL & 117 &47 &99 &434\\ 
 \hline
  PerDQNAvg & 127 &48 &131 &520\\ 
 \hline
  FedAsynQ-ImAvg & 119 &51 &117 &501\\ 
 \hline
 
\end{tabular}
\end{center}
\end{table}

 \begin{table}[h]
\begin{center}
\caption{{Statistics  for Acrobot environment.}} \label{table:additionalresultacro} 
\begin{tabular}{||c|c|c|c|c||}
 \hline
 Algorithm & Return average & Variance average & Return median & Total running time(s)\\  
 \hline\hline
 PFedDQN-Rep & -42 &37 &-29 &618\\ 
 \hline
 DQN & -63 &67 &-57 &5854\\ 
 \hline
  FedDQN & -714 &162 &-625 &571\\ 
 \hline
  FedQ-K & -213 &41 &-202 &621\\ 
 \hline
  LFRL & -207 &58 &-194 &676\\ 
 \hline
  PerDQNAvg & -295 &64 &-277 &602\\ 
 \hline
  FedAsynQ-ImAvg & -191 &36 &-186 &664\\ 
 \hline
 
\end{tabular}
\end{center}
\end{table}

 \begin{figure}[h]
 	\centering
 \begin{minipage}{.5\textwidth}
  \centering
 	\includegraphics[width=0.999\textwidth]{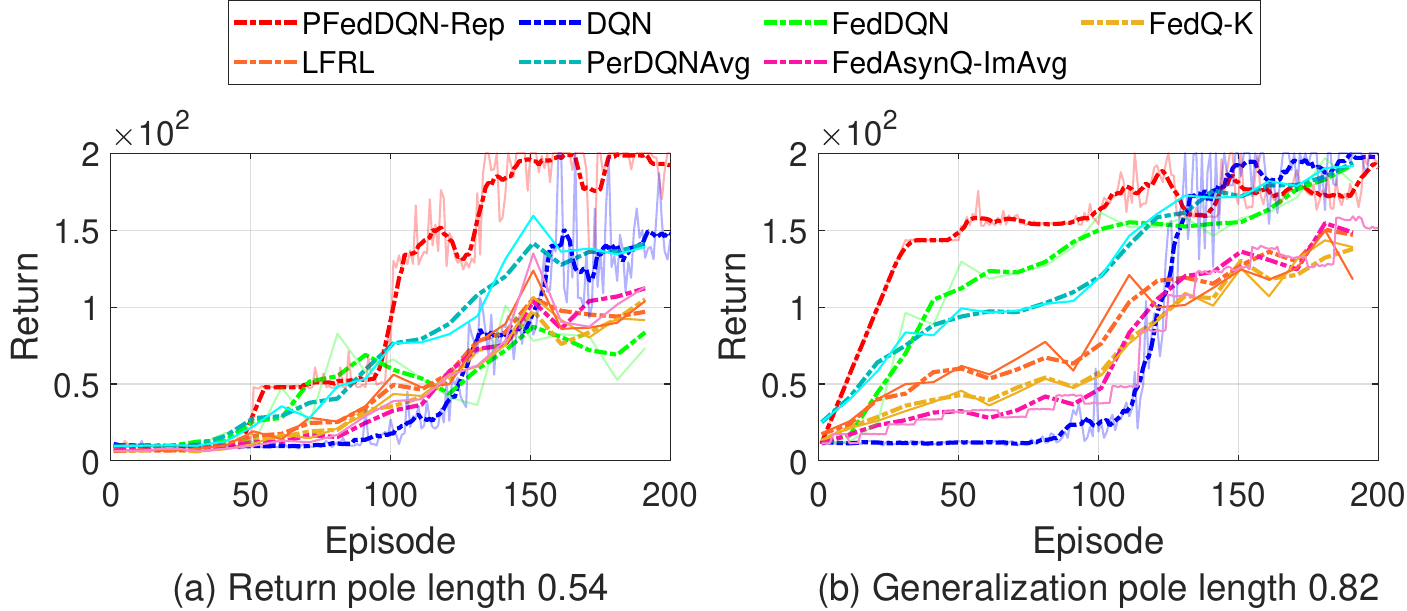}

 	\subcaption{Cartpole environment.}

 	\label{fig:shortcart}
   \end{minipage}\hfill
    \begin{minipage}{.5\textwidth}
  \centering
  \includegraphics[width=0.999\textwidth]{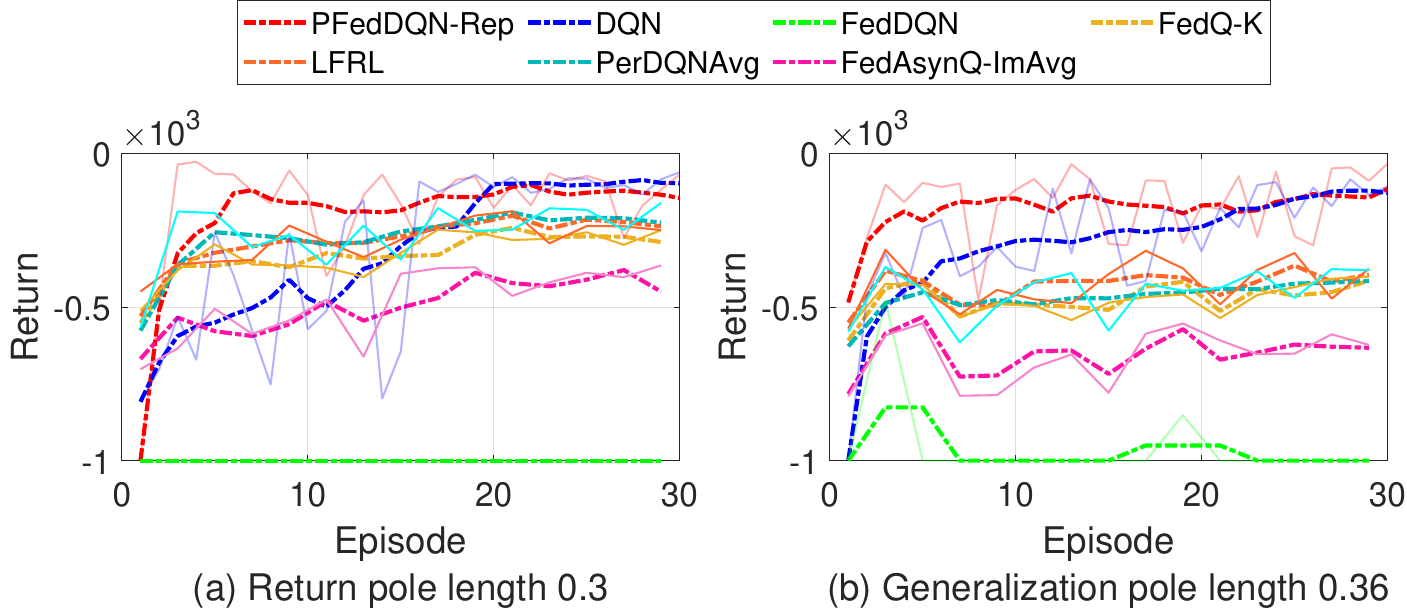}
  \subcaption{Acrobot environment. }
  \label{fig:shortacro}
    \end{minipage}

    	\caption{{Shortened plot for cartpole and acrobot environment.}}
     \label{fig:shortgraoh}

 \end{figure}

\end{document}